\documentclass[11pt]{article} 


\usepackage{times}

\usepackage{amsmath,amsfonts,bm}



\def\figref#1{figure~\ref{#1}}





\def\eqref#1{equation~\ref{#1}}









\def\floor#1{\lfloor #1 \rfloor}
\def\1{\bm{1}}








\def\va{{\bm{a}}}
\def\vb{{\bm{b}}}
\def\vc{{\bm{c}}}

\def\vi{{\bm{i}}}
\def\vj{{\bm{j}}}

\def\vu{{\bm{u}}}
\def\vv{{\bm{v}}}
\def\vw{{\bm{w}}}
\def\vx{{\bm{x}}}
\def\vy{{\bm{y}}}
\def\vz{{\bm{z}}}


\def\mA{{\bm{A}}}
\def\mB{{\bm{B}}}

\def\mD{{\bm{D}}}

\def\mM{{\bm{M}}}

\def\mW{{\bm{W}}}

\DeclareMathAlphabet{\mathsfit}{\encodingdefault}{\sfdefault}{m}{sl}
\SetMathAlphabet{\mathsfit}{bold}{\encodingdefault}{\sfdefault}{bx}{n}
\newcommand{\tens}[1]{\bm{\mathsfit{#1}}}
\def\tA{{\tens{A}}}

\def\tW{{\tens{W}}}













\DeclareMathOperator{\sign}{sign}

\usepackage{hyperref}
\usepackage{url}

\usepackage{algorithm}
\usepackage{algorithmic}

\usepackage{amsfonts}
\usepackage{multirow}
\usepackage{amsmath}
\usepackage{graphicx}
\usepackage{amssymb}
\usepackage{dsfont}
\usepackage{afterpage}
\usepackage{wrapfig}

\usepackage[margin=1.25in]{geometry}

\usepackage[utf8]{inputenc}
\usepackage{pgfplots}
\usepackage{tikz}
\usepgfplotslibrary{groupplots}
\usetikzlibrary{decorations.pathmorphing}
\usetikzlibrary{decorations.fractals}
\usepackage{xcolor}

\newtheorem{theorem}{Theorem}

\newtheorem{lemma}{Lemma}
\newtheorem{corollary}{Corollary}
\newtheorem{assumption}{Assumption}

\newtheorem{conjecture}{Conjecture}
\newcommand{\BlackBox}{\rule{1.5ex}{1.5ex}}  
\newenvironment{proof}{\par\noindent{\bf Proof\ }}{\hfill\BlackBox\\}

\newcommand{\lemref}[1]{Lemma~\ref{#1}}
\newcommand{\thmref}[1]{Theorem~\ref{#1}}
\newcommand{\crlref}[1]{Corollary~\ref{#1}}

\newcommand{\naturals}{\mathbb{N}}
\newcommand{\reals}{\mathbb{R}}

\newcommand{\norm}[1]{\left\lVert #1 \right\rVert}
\newcommand{\mean}[2]{\mathbb{E}_{#1} \left[ #2 \right]}

\newcommand{\net}{\mathcal{N}_{\tW,\mB}}

\newcommand{\p}{^{\prime}}
\newcommand{\abs}[1]{\left| #1 \right|}

\newcommand{\todo}[1]{\textbf{TODO: #1}}

\newcommand{\given}{\Big|}

\newcommand{\prob}[2]{\mathbb{P}_{#1}\left[#2\right]}

\newcommand{\diag}{\text{diag}}

\usepackage{authblk}

\title{Is Deeper Better only when Shallow is Good?}

\author{
  Eran Malach
  }
\author{
  Shai Shalev-Shwartz
  }
\affil{School of Computer Science, The Hebrew University, Israel}
\date{}

\begin{document}

\maketitle

\begin{abstract}%
Understanding the power of depth in feed-forward neural networks
is an ongoing challenge in the field of deep learning theory.
While current works account for the importance of depth
for the expressive power of neural-networks,
it remains an open question whether these benefits
are exploited during a gradient-based optimization process.
In this work we explore the relation between
expressivity properties of deep networks and the
ability to train them efficiently using gradient-based algorithms.
We give a depth separation argument for distributions with
fractal structure, showing that they can be expressed efficiently
by deep networks, but not with shallow ones.
These distributions have a natural coarse-to-fine structure,
and we show that the balance between the coarse and fine details has a crucial
effect on whether the optimization process is likely to succeed.
We prove that when the distribution is concentrated on the fine details,
gradient-based algorithms are likely to fail.
Using this result we prove that,
at least in some distributions, the success of learning deep
networks depends on whether the distribution can be well approximated
by shallower networks, and we conjecture that this property holds in general.
\end{abstract}

\section{Introduction}
A fundamental question in studying deep networks is understanding why
and when ``deeper is better''. There have been several results
identifying a ``depth separation'' property: showing that there
exist functions which are realized by deep networks of moderate width,
that cannot be approximated by shallow networks,
unless an exponential number of units is used.  However, this is
unsatisfactory, as the fact that a certain network architecture can
express some function does not mean that we can learn this function
from training data in a reasonable amount of training time. In fact,
there is theoretical evidence showing that gradient-based algorithms
can only learn a small fraction of the functions that are expressed
by a given neural-network (e.g \cite{shalev2017failures}).

This paper relates expressivity properties of deep networks to the
ability to train them efficiently using a gradient-based algorithm.
We start by giving depth separation arguments for distributions with
fractal structure. In particular, we show that deep networks are able
to exploit the self-similarity property of fractal distributions, and
thus realize such distributions with a small number of parameters.  On
the other hand, we show that shallow networks need a number of
parameters that grows exponentially with the intrinsic ``depth'' of
the fractal.  The advantage of fractal distributions is that they
exhibit a clear coarse-to-fine structure. We show that if the
distribution is more concentrated on the ``coarse'' details of the
fractal, then even though shallower networks cannot exactly express
the underlying distribution, they can still achieve a good
approximation.  We introduce the notion of \textbf{approximation
curve}, that characterizes how the examples are distributed between
the ``coarse'' details and the ``fine'' details of the fractal.  The
approximation curve captures the relation between the growth in the
network's depth and the improvement in approximation.

We next go beyond pure expressivity analysis, and claim that the
approximation curve plays a key role not only in approximation
analysis, but also in predicting the success of gradient-based
optimization algorithms. Specifically, we show that if the
distribution is concentrated on the ``fine'' details of the fractal,
then gradient-based optimization algorithms are likely to fail. In
other words, the ``stronger'' the depth separation is (in the sense that shallow
networks cannot even approximate the distribution) the harder it is to
learn a deep network with a gradient-based algorithm. While we prove this statement for a
specific fractal distribution, we state a conjecture aiming at
formalizing this statement in a more general sense. Namely, we
conjecture that a distribution which cannot be approximated by a
shallow network cannot be learned using gradient-based algorithm, even
when using a deep architecture. 
We perform experiments on learning fractal
distributions with deep networks trained with SGD and assert that the
approximation curve has a crucial effect on whether a depth efficiency
is observed or not.  These results provide new insights as to when
such deep distributions can be learned.

Admittedly, this paper is focused on analyzing a family of
distributions that is synthetic by nature.
That said, we note that the conclusions from this analysis
may be interesting for the broader effort of understanding
the power of depth in neural-networks.
As mentioned, we show that there exist distributions
with depth separation property (that are expressed efficiently
with deep networks but not with shallow ones), that cannot be learned
by gradient-based optimization algorithms.
This result implies that any depth separation argument that does
not consider the optimization process should be taken with a grain of salt.
Additionally, our results hint that the success of learning deep
networks depends on whether the distribution can be approximated
by shallower networks. Indeed, this property is often observed
in real-world distributions, where deeper networks perform better,
but shallower networks exhibit good (if not perfect) performance.

\section{Related Work}

In recent years there has been a large number of works studying the
expressive power of deep and shallow networks.
The main goal of this research direction is to show families of functions or distributions that are realizable with deep networks
of modest width,
but require exponential number of neurons to approximate by shallow networks. We refer to such results as depth separation results.

Many of these works consider various measures of ``complexity'' that 
grow exponentially fast with the depth of the network, but not with
the width. Hence, such measures provide a clear
separation between deep and shallow networks.
For example, the works of \cite{pascanu2013number, montufar2014number,
montufar2017notes, serra2017bounding}
show that the number of linear regions grows exponentially with
the depth of the network, but only polynomially with the width.
The work of \cite{poole2016exponential} shows that for networks
with random weights, the curvature of the functions calculated by
the networks grows exponentially with depth but not with width.
In another work, \cite{raghu2016expressive} shows that the trajectory
length, which measures the change in the output along a one-dimensional
path, grows exponentially with depth.
Finally, the work of \cite{telgarsky2016benefits} utilizes
the number of oscillations in the function to give a depth
separation result.

While such works give general characteristics of function families,
they take a seemingly worst-case approach. Namely,
these works show that there exist functions implemented
by deep networks that are hard to approximate with a shallow net.
But as in any worst-case analysis, it is not clear whether
such analysis applies to the typical cases encountered in the
practice of neural-networks. 
In order to answer this concern, recent works show depth separation results
for narrower families of functions that appear simple or ``natural''.
For example, the work of \cite{telgarsky2015representation} shows
a very simple construction of a function on the real line that
exhibits a depth separation property.
The works of \cite{eldan2016power, safran2016depth} show
a depth separation argument for very natural functions, like
the indicator function of the unit ball.
The work of \cite{daniely2017depth} gives similar results for a richer
family of functions.
 Another series of works
by \cite{mhaskar2016deep, poggio2017and} show that compositional
functions, namely functions of functions, can be well approximated 
by deep networks. The works of \cite{delalleau2011shallow,cohen2016expressive} show that
compositional properties establish depth separation for sum-product
networks.

Our work shares similar motivations with the above works. Namely, our goal
is to construct a family of distributions that demonstrate the power
of deep networks over shallow ones. Unlike these works, we do not
limit ourselves to expressivity results alone, but rather take another step
into exploring whether these distributions can be learned by
gradient-based optimization algorithms.

Finally, while we are not aware of any work that directly considers
fractal structures in the context of deep learning,
there are a few works that tie them to other fields in machine learning.
Notably, \cite{bloem2010fractal} gives a thorough review of
fractal geometry from a machine learning perspective, suggesting
that their structure can be exploited in various machine learning tasks.
The work of \cite{korn2001dimensionality} considers the effect
of fractal structure on the performance of nearest neighbors algorithms.
We also note that fractal structures are exploited in image
compression (refer to \cite{barnsley1993fractal} for a review).
These works mainly give motivation to look at fractal geometry
in the context of deep learning, as these seem relevant
for other problems in machine learning.

\section{Preliminaries}

Let $\mathcal{X} = \reals^d$ be the domain space and
$\mathcal{Y} = \{\pm 1\}$ be the label space.
We consider distributions defined over sets generated
by an iterated function system (IFS).
An IFS is a method for constructing fractals, where a finite set of contraction mappings are applied iteratively, starting with some
arbitrary initial set. Applying such process ad infinitum generates a
self-similar fractal. In this work we will consider sets generated
by performing a finite number of iterations from such process.
We refer to the number of iterations of the IFS as the ``depth''
of the generated set.

\setlength{\intextsep}{0pt}%
\begin{wrapfigure}{r}{0.5\textwidth}
\begin{center}
\begin{tikzpicture}[shorten >=1pt,->]
  \tikzstyle{bigrect}=[rectangle,draw=black,fill=black!25,minimum size=30pt,inner sep=0pt]
  \tikzstyle{whiterect}=[rectangle,fill=white,draw=black,minimum size=30pt]
  \tikzstyle{midrect}=[rectangle,fill=black!25,minimum size=10pt,inner sep=0pt]
  \tikzstyle{whitemidrect}=[rectangle,draw=black,fill=white,minimum size=10pt,inner sep=0pt]
  \tikzstyle{smallrect}=[rectangle,fill=black!25,minimum size=3.33pt,inner sep=0pt]

    \node[bigrect] (K-0) at (0,0) {$K_0$};
    
    \node[whiterect] (K-1) at (1.83333,0) {$K_1$};
    \node[midrect] (K-1-1) at (1.5,0.33) {};
    \node[midrect] (K-1-2) at (1.5,-0.33) {};
    \node[midrect] (K-1-3) at (2.16,0.33) {};
    \node[midrect] (K-1-4) at (2.16,-0.33) {};
    
    \draw (K-0)  to [out=15,in=180] node[above]{$F_1$} (K-1-1);
    \draw (K-0)  to [out=-15,in=180] node[below]{$F_2$} (K-1-2);
    \draw (K-0)  to [out=60,in=150] node[above]{$F_3$} (K-1-3);
    \draw (K-0)  to [out=-60,in=-150] node[below]{$F_4$} (K-1-4); 

    \node[whitemidrect] (K-2-1) at (3.165,0.33) {};
    \node[whitemidrect] (K-2-2) at (3.165,-0.33) {};
    \node[whitemidrect] (K-2-3) at (3.825,0.33) {};
    \node[whitemidrect] (K-2-4) at (3.825,-0.33) {};

    \node[smallrect] (K-2-1-1) at (3.066,0.429) {};
    \node[smallrect] (K-2-1-2) at (3.066,0.231) {};
    \node[smallrect] (K-2-1-3) at (3.264,0.429) {};
    \node[smallrect] (K-2-1-4) at (3.264,0.231) {};
   
    \node[smallrect] (K-2-2-1) at (3.726,0.429) {};
    \node[smallrect] (K-2-2-2) at (3.726,0.231) {};
    \node[smallrect] (K-2-2-3) at (3.934,0.429) {};
    \node[smallrect] (K-2-2-4) at (3.934,0.231) {};
    
    \node[smallrect] (K-2-3-1) at (3.066,-0.429) {};
    \node[smallrect] (K-2-3-2) at (3.066,-0.231) {};
    \node[smallrect] (K-2-3-3) at (3.264,-0.429) {};
    \node[smallrect] (K-2-3-4) at (3.264,-0.231) {};
   
    \node[smallrect] (K-2-4-1) at (3.726,-0.429) {};
    \node[smallrect] (K-2-4-2) at (3.726,-0.231) {};
    \node[smallrect] (K-2-4-3) at (3.934,-0.429) {};
    \node[smallrect] (K-2-4-4) at (3.934,-0.231) {};
    
    \draw (K-1)  to [out=15,in=180] node[above]{$F_1$} (K-2-1);
    \draw (K-1)  to [out=-15,in=180] node[below]{$F_2$} (K-2-2);
    \draw (K-1)  to [out=60,in=120] node[above]{$F_3$} (K-2-3);
    \draw (K-1)  to [out=-60,in=-120] node[below]{$F_4$} (K-2-4);
    
    \node at (3.5,0) {$K_2$};

\end{tikzpicture} \\
\includegraphics[trim={3cm 1.5cm 3cm 1.5cm},clip,scale=0.15]{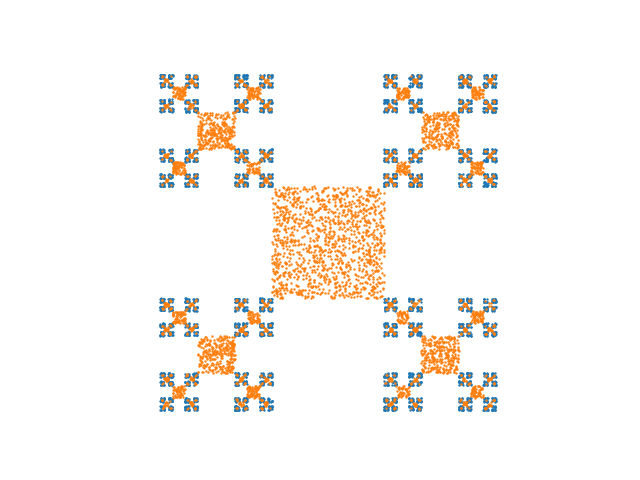}
\includegraphics[trim={3cm 1.5cm 3cm 1.5cm},clip,scale=0.15]{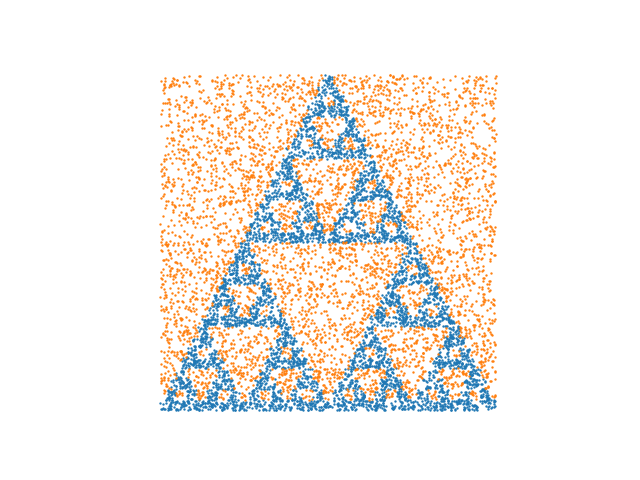}
\includegraphics[trim={3cm 1.5cm 3cm 1.5cm},clip,scale=0.15]{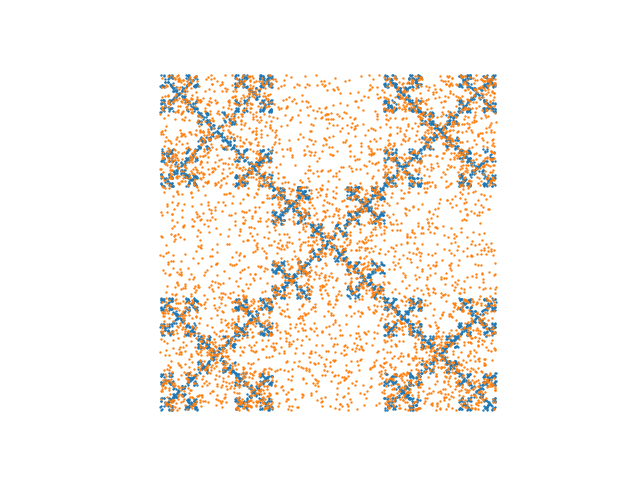}
\includegraphics[trim={3cm 1.5cm 3cm 1.5cm},clip,scale=0.15]{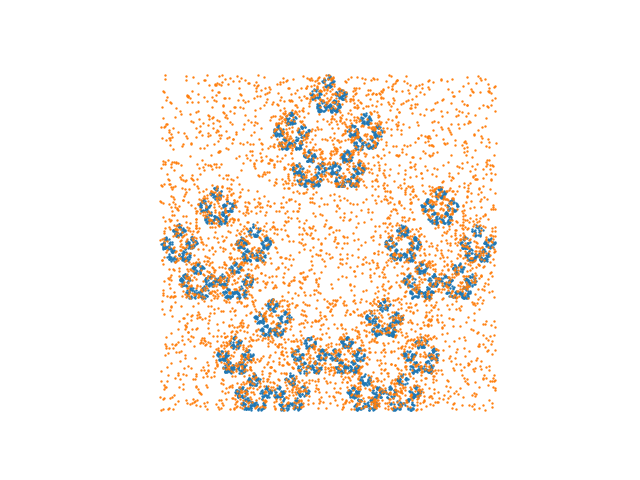}
\caption{IFS and fractal distributions.}
\label{fig:ifs}
\end{center}
\vskip -1in
\end{wrapfigure}

Formally, an IFS is defined by a set of $r$ contractive affine
\footnote{In general, IFSs can be constructed with non-linear transformations, but in this paper we discuss only affine IFS.} transformations $F = (F_1, \dots, F_r)$, where
$F_i(\vx) = \mM^{(i)} \vx + \vv^{(i)}$
with full-rank matrix $\mM^{(i)} \in \reals^{d \times d}$,
vector $\vv^{(i)} \in \reals^d$,
s.t $\norm{F_i(\vx)-F_i(\vy)} < \norm{\vx-\vy}$ for all $\vx,\vy \in \mathcal{X}$
(we use $\norm{\cdot}$ to denote the $\ell_2$ norm, unless stated otherwise).
We define the set $K_n \subseteq \mathcal{X}$ recursively by:
\begin{itemize}
\item $K_0 = [-1,1]^d$
\item $K_n = F_1(K_{n-1}) \cup \dots \cup F_r(K_{n-1})$
\end{itemize}
The IFS construction is shown in \figref{fig:ifs}.	

We define a ``fractal distributions'', denoted $\mathcal{D}_n$, to be
any balanced distribution over $\mathcal{X} \times \mathcal{Y}$ such
that positive examples are sampled from the set $K_n$ and negative
examples are sampled from its complement. Formally,
$\mathcal{D}_n = \frac{1}{2}(\mathcal{D}_n^+ + \mathcal{D}_n^-)$ where
$\mathcal{D}_n^{+}$ is a distribution over
$\mathcal{X} \times \mathcal{Y}$ that is supported on
$K_n \times \{+1\}$, and $\mathcal{D}_n^{-}$ is a distribution over
$\mathcal{X} \times \mathcal{Y}$ that is supported on
$(\mathcal{X} \setminus K_n) \times \{-1\}$. Examples for such distributions
are given in \figref{fig:ifs} and \figref{fig:cantor_data}. 

In this paper we consider
the problem of learning fractal distributions with feed-forward
neural-networks equipped with the ReLU activation.
A ReLU neural-network $\mathcal{N}_{\tW, \mB}: \mathcal{X} \to \mathcal{Y}$
of depth $t$ and width $k$ is a function defined recursively
such that $\vx^{(t)} : = \mathcal{N}_{\tW, \mB} (\vx)$, and:
\begin{enumerate}
\item $\vx^{(0)} = \vx$
\item $\vx^{(t\p)} = \sigma(\mW^{(t\p)} \vx^{(t\p-1)} + \vb^{(t\p)})$
for every $t\p \in \{1,\ldots,t-1\}$
\item $\vx^{(t)} = \mW^{(t)} \vx^{(t-1)} + \vb^{(t)}$
\end{enumerate}
Where $\mW^{(1)} \in \reals^{k \times d}, \mW^{(2)}, \dots, \mW^{(t-1)} \in \reals^{k \times k}, \mW^{(t)} \in \reals^{1 \times k}$,
$\vb^{(1)}, \dots, \vb^{(t-1)} \in \reals^k, \vb^{(t)} \in \reals$,
and $\sigma(\vx) := \max (\vx,0)$.

We denote by $\mathcal{H}_{k,t}$ the family of all functions that are
implemented by a neural-network of width $k$ and depth $t$.
Given a distribution $\mathcal{D}$ over $\mathcal{X} \times \mathcal{Y}$,
we denote the error of a network $h \in \mathcal{H}_{k,t}$ on distribution 
$\mathcal{D}$ to be $L_{\mathcal{D}}(h) :=\prob{(\vx,y) \sim \mathcal{D}_n}
{\sign(h(\vx)) \ne y}$. We denote the approximation error of $\mathcal{H}_{k,t}$
on $\mathcal{D}$ to be the minimal error of any such function:
$L_{\mathcal{D}}(\mathcal{H}_{k,t}) := \min_{h \in \mathcal{H}_{k,t}}
L_{\mathcal{D}}(h)$.

\section{Expressivity and Approximation}
In this section we analyze the expressive power of deep and shallow
neural-networks w.r.t fractal distributions.  We show two results.
The first is a depth separation property of
neural-networks. Namely, we show that shallow networks need an
exponential number of neurons to realize such distributions, while
deep networks need only a number of neurons that is linear in the
problem's parameters.  The second result bounds the approximation
error achieved by networks that are not deep enough to achieve zero
error. This bound depends on the specific properties of the fractal
distribution.

We analyze IFSs where the images of the initial set $K_0$
under the different mappings do not overlap. This property allows the
neural-network to ``reverse'' the process that generates the fractal
structure.
Additionally, we assume that the images of $K_0$ (and therefore
the entire fractal), are contained in $K_0$, which means that the
fractal does not grow in size.
This is a technical requirement that could be achieved by correctly
scaling the fractal at each step.
While these requirements are not generally assumed in the context of IFSs,
they hold for many common fractals (cantor set, sierpinsky triangle and more). Formally, we assume the following:
\begin{assumption}
\label{asm:disjoint}
There exists $\epsilon > 0$ such that
for $i \ne j \in [r]$ it holds that
$d(F_i(K_0), F_j(K_0)) > \epsilon$,
where $d(A,B) = \min_{\vx \in A, \vy \in B} \norm{\vx-\vy}$.
\end{assumption}
\begin{assumption}
\label{asm:contraction}
For each $i \in [r]$ it holds that $F_i(K_0) \subseteq K_0$.
\end{assumption}

Finally, as in many other problems in machine learning, we assume the
positive and negative examples are separated by some
margin. Specifically, we assume that the positive examples are sampled
from strictly inside the set $K_n$, with margin $\gamma$ from the set
boundary. Formally, for some set $A$, we define $A^\gamma$ to be the
set of all points that are far from the boundary of $A$ by at least
$\gamma$: $A^\gamma := \{\vx \in A ~:~ B_\gamma (\vx) \subseteq A\}$,
where $B_\gamma(\vx)$ denotes a ball around $\vx$ with radius
$\gamma$.  So our assumption is the following:
\begin{assumption}
There exists $\gamma > 0$ such that $\mathcal{D}_n^+$ is supported on
$K_n^\gamma \times \{+1\}$.
\end{assumption}

\subsection{Depth Separation}
We show that neural-networks with depth linear in $n$
(where $n$ is the ``depth'' of the fractal) can achieve zero error
on any fractal distribution satisfying the above assumptions,
with only linear width.
On the other hand, a shallow network needs a width
exponential in $n$ to achieve zero error on such distributions.

To separate such fractal distributions, we start with the following:
\begin{theorem}
\label{thm:depth_expressivity}
There exists a neural-network of width $5dr$ and depth $2n+1$,
s.t $\sign(\mathcal{N}_{\tW, \mB}(K_n^\gamma)) = 1$ and 
$\sign(\mathcal{N}_{\tW, \mB}(\mathcal{X} \setminus K_n)) = -1$.
\end{theorem}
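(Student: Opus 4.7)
My strategy is to exploit the self-similarity $K_n = \bigcup_{i=1}^r F_i(K_{n-1})$ and build the network as a cascade of $n$ ``inverse-IFS'' blocks followed by a final linear membership-check layer for $K_0 = [-1,1]^d$. Each block is a constant-depth ($2$ layers) and linear-width ($O(dr)$) ReLU circuit implementing a piecewise-linear ``contraction'' $T: \reals^d \to \reals^d$ defined by: whenever $\vx$ lies in the (by Assumption~\ref{asm:disjoint} unique) branch $F_i(K_0)$, $T(\vx) = F_i^{-1}(\vx) \in K_0$; whenever $\vx \notin \bigcup_i F_i(K_0)$, $T(\vx)$ equals a fixed escape point $\vx_{\mathrm{esc}} \notin K_0$. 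Since Assumption~\ref{asm:contraction} gives $\bigcup_i F_i(K_0) \subseteq K_0$, any point outside $K_0$ is also outside $\bigcup_i F_i(K_0)$, so $T(\vx_{\mathrm{esc}}) = \vx_{\mathrm{esc}}$ -- the escape is absorbing under further iterations of $T$.

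To construct $T$ in depth $2$ and width $5dr$, the first sub-layer allocates $O(d)$ neurons per branch $i$: some compute the affine preimage $\vy_i = (\mM^{(i)})^{-1}(\vx - \vv^{(i)})$, others compute coordinate-wise ReLU readouts $\sigma(\pm y_{i,j} - 1)$ measuring how far $\vy_i$ lies outside $K_0$. Disjointness (Assumption~\ref{asm:disjoint}) guarantees that at most one $\vy_i$ can lie inside $K_0$, so the second sub-layer can aggregate the $r$ candidate preimages additively via a piecewise-linear gating whose ``active'' regions in $\vx$-space number only $r+1$; the margin $\epsilon$ of Assumption~\ref{asm:disjoint} provides the slack needed to keep the aggregated map continuous and piecewise-linear with $O(dr)$ breakpoints, fitting into width $5dr$ (the small constant absorbs the additive gating gadgets, coordinate clippings, and escape-value bookkeeping).

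With $T$ in hand, I argue by induction on $n$ that $T^n(K_n^\gamma) \subseteq K_0^{\gamma_n}$ for some $\gamma_n > 0$ depending on $\gamma$ and $\max_i \|\mM^{(i)}\|$, and that $T^n(\reals^d \setminus K_n) \cap K_0 = \emptyset$. The forward direction is immediate: $\vx \in K_n^\gamma$ implies $\vx \in F_I(K_{n-1}^{\gamma/\|\mM^{(I)}\|})$ for the unique $I$ given by disjointness, so $T(\vx) = F_I^{-1}(\vx) \in K_{n-1}$, and the induction hypothesis applies with a margin shrunk by at most $\max_i \|\mM^{(i)}\|$. The reverse direction splits into two cases: either $\vx \notin \bigcup_i F_i(K_0)$, in which case $T(\vx) = \vx_{\mathrm{esc}}$ which stays outside $K_0$ forever by absorption; or $\vx \in F_I(K_0) \setminus F_I(K_{n-1})$ for the unique $I$, in which case $T(\vx) \in K_0 \setminus K_{n-1}$ and the induction at level $n-1$ finishes the argument. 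The final linear layer (layer $2n+1$) reads the coordinate-wise readouts $\sigma(\pm z_j - 1)$ of $\vz = T^n(\vx)$ emitted as auxiliary outputs of block $n$'s last sub-layer, and outputs $\gamma_n/2 - \sum_j (\sigma(z_j - 1) + \sigma(-z_j - 1))$, which is positive iff $\vz \in K_0^{\gamma_n}$ and negative iff $\vz \notin K_0$. The main technical obstacle I anticipate is the depth-$2$, width-$5dr$ construction of $T$: realizing the $r$-way branch selection without any multiplicative units. This is possible precisely because Assumption~\ref{asm:disjoint} reduces the selection into a piecewise-linear partition with only $O(r)$ pieces rather than exponentially many, and because the absorbing escape value (from Assumption~\ref{asm:contraction}) removes the need for any finer control on inputs that have already failed to lie in $K_n$.
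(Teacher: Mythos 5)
Your plan builds the depth from $n$ ``inverse-IFS'' blocks, as the paper does, but the crucial design difference is the \emph{escape mechanism}. You define $T$ to map every point outside $\bigcup_i F_i(K_0)$ to a fixed absorbing state $\vx_{\mathrm{esc}} \notin K_0$, and then argue $T^n(\reals^d \setminus K_n) \cap K_0 = \emptyset$ so that a final $K_0$-membership check suffices. This $T$ is discontinuous: on the boundary $\partial F_i(K_0)$ the value from inside is $F_i^{-1}(\vx) \in \partial K_0$ while the value from outside is $\vx_{\mathrm{esc}}$, and no ReLU network (a continuous, piecewise-affine map) can realize such a jump. Once you force $T$ to interpolate in a neighborhood of $\partial(\bigcup_i F_i(K_0))$, the claimed absorption fails: negative examples near $\partial K_1$ or $\partial K_0$ get mapped \emph{into} $K_0$ and may never leave. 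A concrete failure: for the 1D Cantor IFS, $F_1(-1)=-1$, so any continuous $T$ must send $x=-1-\epsilon$ to a point arbitrarily close to $-1$, which lies in $F_1(K_0)$; iterating then keeps the orbit near $-1 \in K_0$, and your final membership layer outputs ``positive'' for a point outside $K_n$. Thus the claimed induction step ``$T^n(\reals^d \setminus K_n)\cap K_0=\emptyset$'' does not survive the passage to a continuous realization. The margin $\epsilon$ from Assumption~\ref{asm:disjoint} separates the different branches $F_i(K_0)$ from \emph{each other}, not from the complement of $K_1$, so it provides no slack for the escape discontinuity, and the positive-margin Assumption covers only $K_n^\gamma$, not negative examples (which may sit arbitrarily close to $K_n$).

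The paper's proof uses the same block structure but sidesteps the discontinuity by \emph{not} trying to make the primary coordinate escape. Each block computes two things: (i) a continuous $g$ that maps the in-branch point to $F_I^{-1}(\vx)$ and maps off-branch points to $\vzero$ (which, being inside $K_0$, creates no discontinuity against the boundary behavior); and (ii) a scalar indicator $\tilde{g}$ that is exactly $1$ on $\mathcal{X}\setminus K_1$ and exactly $0$ on $K_1^\gamma$, interpolating only in the margin strip $K_1 \setminus K_1^\gamma$ where no positive examples live. The values $\tilde{g}(g^j(\vx))$ are summed into an auxiliary accumulator coordinate, which is $0$ on all of $K_n^\gamma$ and at least $1$ on all of $\mathcal{X}\setminus K_n$ (this sum-never-decreases behavior is the robust, continuous analogue of absorption). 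The final $\sign$ thresholds the accumulator rather than testing membership of the iterated state. If you want to salvage your version, you essentially need to re-introduce this accumulator channel; otherwise the theorem statement fails for a positive-measure set of negatives near the boundary of $K_1$.
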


Since, by assumption, there are no examples in the margin area, we
immediately get an expressivity result under any fractal distribution:

\begin{corollary}
\label{crl:depth_expressivity}
For any distribution $\mathcal{D}_n$ there exist neural-network of width $5dr$
and depth $2n+1$, such that $L_{\mathcal{D}_n}(\net) = 0$.
\end{corollary}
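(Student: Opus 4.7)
The plan is to build a ReLU network that effectively inverts the iterated function system $n$ times and then tests membership in the base set $K_0=[-1,1]^d$. Define a map $\phi:\reals^d\to\reals^d$ by $\phi(\vx)=F_i^{-1}(\vx)$ whenever $\vx\in F_i(K_0)$ (the index $i$ is unique by Assumption~\ref{asm:disjoint}), and $\phi(\vx)\notin K_0$ whenever $\vx\notin\bigcup_i F_i(K_0)$. A short induction on $n$ then gives $\phi^n(K_n)\subseteq K_0$ and $\phi^n(\reals^d\setminus K_n)\subseteq\reals^d\setminus K_0$: on positive examples $\vx\in F_i(K_{n-1})$, so $\phi(\vx)=F_i^{-1}(\vx)\in K_{n-1}$ and the IH applies; on negative examples either $\vx$ lies outside every $F_i(K_0)$ (so $\phi(\vx)\notin K_0\supseteq K_{n-1}$) or $\vx\in F_i(K_0)\setminus F_i(K_{n-1})$ (so $\phi(\vx)\in K_0\setminus K_{n-1}$), and either way the IH closes.

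The full architecture is $n$ stacked two-ReLU-layer blocks implementing $\phi$ followed by one final linear layer, for total depth $2n+1$. In one such block, the first ReLU layer computes, for every branch $i\in[r]$, the signed positive and negative parts of the candidate $F_i^{-1}(\vy)$ coordinate-wise ($2d$ ReLUs per branch) together with the excess ReLUs $\sigma(F_i^{-1}(\vy)_j-1)$ and $\sigma(-F_i^{-1}(\vy)_j-1)$ for each coordinate ($2d$ more per branch). Their aggregate $E_i=\sum_j[\sigma(F_i^{-1}(\vy)_j-1)+\sigma(-F_i^{-1}(\vy)_j-1)]$ vanishes exactly when $F_i^{-1}(\vy)\in K_0$, i.e.\ when $\vy\in F_i(K_0)$, and by Assumption~\ref{asm:disjoint} vanishes for at most one $i$. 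The second ReLU layer outputs $\sigma(\pm F_i^{-1}(\vy)_j - M\,E_i)$ with $M$ chosen large enough that $-M\,E_i$ dominates on every invalid branch; summing over $i$ in the next linear map then recovers $F_{i^*}^{-1}(\vy)_j$ on the unique valid branch and zero elsewhere, i.e.\ the signed representation of $\phi(\vy)$ used by the next block. All this fits in at most $5d$ units per branch, hence $\le 5dr$ total.

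For the final scalar I would use the observation that $\phi^n(\vx)\in K_0$ iff $\phi^{n-1}(\vx)\in\bigcup_i F_i(K_0)$ iff at least one $E_i$ in the last block vanishes. Assumption~\ref{asm:disjoint} together with the margin assumption gives a uniform $\delta>0$ such that every invalid $E_i$ satisfies $E_i\ge\delta$, so one extra unit per branch in the last block's second ReLU layer can compute $\sigma(\delta/2 - E_i)$; the final linear layer then outputs $\sum_i\sigma(\delta/2-E_i)-\delta/4$, which is at least $\delta/4$ on $K_n^\gamma$ and at most $-\delta/4$ off $K_n$, so its sign classifies correctly.

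The main obstacle is the single-step block: realising $\phi$ inside only two ReLU layers of width $5dr$. Since ReLU networks cannot multiply, the gating ``apply $F_i^{-1}$ only when $\vx\in F_i(K_0)$'' cannot be implemented literally; the large-$M$ additive suppression is the key trick and is sound precisely because Assumption~\ref{asm:disjoint} gives a uniform positive lower bound on $E_i$ for every wrong branch, while Assumption~\ref{asm:contraction} keeps the iterates bounded so $M$ can be chosen once and for all (independent of $n$). Everything else, including the induction, the final threshold, and propagation of the margin through successive blocks, is routine once this single-step construction is in hand.
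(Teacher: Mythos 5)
Your plan diverges from the paper's in a structurally important way, and as written it has a genuine gap. The paper does not try to detect membership by looking only at $\phi^n(\vx)$; instead, each block $h$ outputs $[g(\vx),\,x_{d+1}+\tilde g(\vx)]$, so that an extra coordinate accumulates a ``fail flag'' the moment the iterate first leaves $K_1$, and that flag can never be unset. Your construction, by contrast, keeps only the $d$-dimensional iterate and performs a single test at the end (``does some $E_i$ vanish in the last block''). That end-test is sound only if the specification $\vx\notin\bigcup_i F_i(K_0)\Rightarrow\phi(\vx)\notin K_0$ actually holds, and your implementation does not achieve it: when every branch is invalid, the large-$M$ suppression kills all $2dr$ terms $\sigma(\pm F_i^{-1}(\vy)_j - M E_i)$, so the block outputs the zero vector. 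But $\vzero\in K_0=[-1,1]^d$. Nothing in Assumptions~\ref{asm:disjoint}--\ref{asm:contraction} prevents $\vzero$ from lying in $K_{n-1}$ (or even in the limit set); whenever it does, any $\vx\in K_0\setminus K_1$ far from all pieces is sent to $\vzero$ after one block, and then the remaining $n-1$ blocks faithfully track the orbit of $\vzero$ through the fractal, landing $\phi^{n-1}(\vx)$ inside some $F_i(K_0)$, so the final $E_i$-test reports ``positive'' for a negative point. The short induction you state is therefore correct as a consequence of the \emph{specification} of $\phi$, but that specification is not met by the ReLU block you describe.

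There are two ways to close the gap. One is the paper's: add a $(d{+}1)$-st coordinate that, at each block, adds the output of an indicator subnetwork $\tilde g$ (Lemma~\ref{lem:box_indicator}) which is $1$ off $K_1$ and $0$ on $K_1^\gamma$, so that the flag is monotone and records the \emph{first} escape; the final threshold then reads only that coordinate, and what $g$ does on already-escaped points becomes irrelevant. The other, closer to your design, is to change the block so that on ``all branches invalid'' it outputs a fixed point $\vq$ strictly outside $K_0$ rather than $\vzero$; since $\vq\notin K_0\supseteq K_1$, all $E_i(\vq)>0$, so $\vq$ is an absorbing state and your end-test becomes sound. But producing $\vq$ only in the invalid case requires an additional gated term (roughly $q_j\cdot\sigma\bigl(c-\sum_i\sigma(\delta-E_i)\bigr)$), which needs extra units in every block and a careful check that the width still stays within $5dr$; as written, your blocks compute the $E_i$-indicator only in the last block, which is exactly why the intermediate-escape case is mishandled. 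Either fix is needed before the argument goes through.
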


We defer the proof of \thmref{thm:depth_expressivity} to the appendix, and 
give here an intuition of how deep networks can express these seemingly complex distributions with a small number of parameters.
Note that by definition, the set $K_n$ is composed of $r$ copies
of the set $K_{n-1}$, mapped by different affine transformations.
In our construction,
each block of the network folds the different copies of $K_{n-1}$
on-top of each other, while ``throwing
away'' the rest of the examples (by mapping them to a distinct value).
The next block can then perform the same thing on all copies of $K_{n-1}$
together, instead of decomposing each subset separately.
This allows a very efficient utilization of the network parameters.

The above results show that deep networks are very efficient
in utilizing the parameters of the network, requiring a number of
parameters that grows linearly with $r$, $d$ and $n$.
Now, we want to consider the case of shallower networks,
when the depth is not large enough to achieve zero error
with linear width. Specifically, we show that when decreasing
the depth of the network by a factor of $s$, we can achieve
zero error by allowing the width to grow like $r^s$.

Notice that for any $s$ that divides $n$, any IFS of depth
$n$ with $r$ transformations can be written as depth
$\frac{n}{s}$ IFS with $r^s$ transformations. Indeed,
for $\vi = (i_1, \dots, i_s) \in [r]^s$
denote $F_{\vi} (\vx) = F_{i_1} \circ \dots \circ F_{i_s}(\vx)$,
and we have: $K_s = \cup_{\vi \in [r]^s} F_\vi (K_0)$.
So we can write a new IFS with transformations $\{F_\vi\}_{\vi \in [r]^s}$,
and these will generate $K_n$ in $\frac{n}{s}$ iterations.
This gives us a stronger version of the previous result,
which explicitly shows the trade-off between linear growth of depth
and exponential growth of width:

\begin{corollary}
\label{crl:depth_expressivity_multi}
For any distribution
$\mathcal{D}_n$ and every natural $s \le n$
there exists a neural-network of width $5dr^s$ and depth
$2 \floor{n/s}+2$,
such that $L_{\mathcal{D}_n}(\net) = 0$.
\end{corollary}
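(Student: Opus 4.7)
The plan is to reduce to \thmref{thm:depth_expressivity} by grouping $s$ consecutive iterations of the original IFS into a single step of a coarsened IFS.

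First I would define the coarsened IFS as having $r^s$ transformations, one for each multi-index $\vec{i} = (i_1, \dots, i_s) \in [r]^s$, given by $\tilde{F}_{\vec{i}} := F_{i_1} \circ \cdots \circ F_{i_s}$. Each $\tilde{F}_{\vec{i}}$ is affine, and a composition of contractions is a contraction, so these are valid IFS maps. Assumption \ref{asm:contraction} transfers: $\tilde{F}_{\vec{i}}(K_0) = F_{i_1}(\cdots F_{i_s}(K_0)) \subseteq F_{i_1}(K_0) \subseteq K_0$ by iterating the original inclusion. For Assumption \ref{asm:disjoint}, given distinct $\vec{i} \ne \vec{j}$, let $t$ be the first coordinate on which they differ. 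Then $\tilde{F}_{\vec{i}}(K_0)$ lies in the image of $F_{i_t}(K_0)$ under the outer contraction $F_{i_1} \circ \cdots \circ F_{i_{t-1}}$, and similarly for $\tilde{F}_{\vec{j}}(K_0)$ with $F_{j_t}(K_0)$; by Assumption \ref{asm:disjoint} these pre-images are separated by at least $\epsilon$, and since the outer map has singular values bounded below (as a composition of full-rank contractions from a finite set) the images remain separated by some $\epsilon' > 0$.

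Second, I would handle the aligned case $s \mid n$. Set $m = n/s$; then iterating the coarsened IFS $m$ times yields exactly $K_n$, because each coarsened step unfolds into $s$ original steps by associativity of composition. Applying \thmref{thm:depth_expressivity} to the coarsened IFS (with $r$ replaced by $r^s$, $n$ replaced by $m$, and the same margin $\gamma$ since the underlying set $K_n$ is unchanged) gives a network of width $5 d r^s$ and depth $2m + 1 = 2 \lfloor n/s \rfloor + 1 \le 2 \lfloor n/s \rfloor + 2$ that separates $K_n^\gamma$ from $\mathcal{X} \setminus K_n$, hence achieves zero error on $\mathcal{D}_n$.

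Third, for the unaligned case, set $m = \lfloor n/s \rfloor$ and $q = n - sm \in \{1,\dots,s-1\}$. Then $m$ iterations of the coarsened IFS only produce $K_{sm} \supsetneq K_n$, so an additional ``short'' block is needed to perform the remaining $q$ original iterations. I would define a second coarsened IFS using $q$-fold compositions (width $r^q \le r^s$) and plug its first block in front of (or after) the $m$ full blocks. Since each block in the construction of \thmref{thm:depth_expressivity} costs two layers, this naively gives $2m + 3$, so some care is needed to fold one of the linear layers of the short block into an adjacent affine map and arrive at depth $2m + 2$. The main obstacle is precisely this careful accounting of the remainder block: I expect the cleanest route is to re-derive the construction of \thmref{thm:depth_expressivity} as a sequence of ``fold'' operations and observe that the first fold requires only one nonlinear layer (rather than two) because the input already lies in $K_0$, letting a single extra layer absorb the length-$q$ tail and yielding the stated depth $2\lfloor n/s \rfloor + 2$.
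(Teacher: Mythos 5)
Your approach is the paper's: coarsen the IFS by grouping $s$ consecutive maps into $r^s$ composite transformations, check that Assumptions~\ref{asm:disjoint} and~\ref{asm:contraction} carry over, and invoke the construction behind Theorem~\ref{thm:depth_expressivity}. The explicit verification that the assumptions transfer is a worthwhile addition (the paper only asserts the rewriting is possible), and the aligned case $s \mid n$ is handled correctly.

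The depth-accounting concern you raise at the end is, however, a genuine gap and not a presentational nitpick. With $m=\lfloor n/s\rfloor$ and $q=n-sm$, the $m$ composed two-hidden-layer fold blocks form a network of depth $2m+1$ under the paper's convention ($2m$ ReLU layers plus one affine readout); appending the two-hidden-layer indicator network that separates $K_q^\gamma$ from $\mathcal{X}\setminus K_q$ contributes two more ReLU layers and a final affine layer, for a total of $2m+3$, not $2m+2$. The paper's own proof is loose at precisely this point --- it asserts the fold network has depth $2\lfloor n/s\rfloor$ and then adds a depth-$3$ indicator to reach $2\lfloor n/s\rfloor+2$ --- so the off-by-one appears to be inherited from the source. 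Your proposed fix, that the first fold needs only one nonlinear layer because ``the input already lies in $K_0$,'' does not hold: $\mathcal{D}_n^-$ is supported on all of $\mathcal{X}\setminus K_n$, so the raw input need not lie in $K_0$; and even when it does, the fold map $g(\vx)=\sum_i f(F_i^{-1}(\vx))$ of Lemma~\ref{lem:one_level} must evaluate $f$ at points $F_i^{-1}(\vx)$ that generically leave $K_0$, so the two-layer clamping of Lemma~\ref{lem:box} is still required. In short, you correctly flagged the obstacle but the route you sketch for closing it is not valid as stated. The extra unit of depth is harmless downstream (Theorem~\ref{thm:approximation_upper_bound} and the hardness corollaries just shift by a constant), but a precise statement would read $2\lfloor n/s\rfloor+3$, or more sharply $2\lceil n/s\rceil + 1$.
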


This is an upper bound on the required width of a network that can realize
$\mathcal{D}_n$, for any given depth.
To show the depth separation property, we show that a shallow network
needs an exponential number of neurons to implement the indicator function.
This gives the equivalent lower bound on the required width.

\begin{theorem}
\label{thm:shallow_expressivity}
Let $\mathcal{N}_{\tW, \mB}$ be a network of depth $t$ and of width
$k$, such that $\sign(\mathcal{N}_{\tW, \mB}(K_n^\gamma)) = 1$ and 
$\sign(\mathcal{N}_{\tW, \mB}(\mathcal{X} \setminus K_n)) = -1$.
Denote $s$ to be the ratio between the depth of the fractal and the depth
of the network, so $s := n/t$. Then the width of the network
grows exponentially with $s$, namely: $k \ge \frac{d}{e} r^{s/d}$.
\end{theorem}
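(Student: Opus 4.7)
The approach is to count connected components on both sides: a lower bound on the number of distinct ``positive regions'' that $\mathcal{N}_{\tW,\mB}$ must carve out (forced by the geometry of $K_n$), and a standard upper bound on the number of linear regions of a ReLU net of depth $t$ and width $k$. Combining these two estimates immediately yields the claimed exponential lower bound on $k$.

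First I would show that $K_n$ has exactly $r^n$ pairwise-separated connected components, indexed by $\vi = (i_1, \ldots, i_n) \in [r]^n$ and given by $F_\vi(K_0) := F_{i_1} \circ \cdots \circ F_{i_n}(K_0)$. Each piece is connected as the continuous image of the connected cube $K_0$; two distinct index sequences $\vi \neq \vj$ first disagree at some coordinate $l$, at which point $F_{i_l}(K_0)$ and $F_{j_l}(K_0)$ are separated by distance $>\epsilon$ (Assumption~\ref{asm:disjoint}), and the common affine prefix $F_{i_1} \circ \cdots \circ F_{i_{l-1}}$ is an injective affine contraction which preserves this disjointness. Now the open set $\{\vx : \mathcal{N}_{\tW,\mB}(\vx) > 0\}$ must lie inside $K_n$ (otherwise the condition $\sign(\mathcal{N}_{\tW,\mB}) = -1$ on $\mathcal{X}\setminus K_n$ would fail) and must meet every connected component of $K_n$, since it contains $K_n^\gamma$ and each $F_\vi(K_0)$ has non-empty interior (so $K_n^\gamma$ hits every piece whenever $\gamma$ is small enough that the hypothesis can hold at all). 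Hence the positive set of $\mathcal{N}_{\tW,\mB}$ has at least $r^n$ connected components. Because the network is continuous and piecewise affine, each connected component of its positive set contains at least one full-dimensional linear region on which the function is affine and positive, giving a lower bound of $r^n$ on the total number of linear regions of $\mathcal{N}_{\tW,\mB}$.

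To conclude, I would invoke the standard Zaslavsky/Serra--Montufar upper bound: a ReLU network of depth $t$, width $k$, and input dimension $d$ has at most $\big(\sum_{j=0}^d \binom{k}{j}\big)^{t} \leq (ek/d)^{dt}$ linear regions, where the first inequality follows from Zaslavsky's hyperplane-arrangement bound applied layer-by-layer, and the second from the standard estimate $\sum_{j=0}^d \binom{k}{j} \leq (ek/d)^d$ valid for $k \geq d$. Combining with the lower bound gives $r^n \leq (ek/d)^{dt}$; taking $(dt)$-th roots and rearranging yields $k \geq \tfrac{d}{e}\, r^{n/(dt)} = \tfrac{d}{e}\, r^{s/d}$. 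The most delicate point is the topological step, where one must rule out a ``positive bridge'' merging two pieces of $K_n$ through the uncontrolled margin gap $K_n \setminus K_n^\gamma$; this is precisely what Assumption~\ref{asm:disjoint} prevents, since a connected subset of $K_n$ cannot straddle two pieces that are separated by a strictly positive distance. The remaining bookkeeping with the linear-region bound is routine.
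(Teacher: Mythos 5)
Your overall strategy coincides with the paper's: lower-bound the number of linear regions of $\mathcal{N}_{\tW,\mB}$ by $r^n$ using the separated pieces $\{F_{\vi}(K_0)\}_{\vi\in[r]^n}$ of $K_n$, combine with the Montúfar-type bound $\prod_{t'}\sum_{j\le d}\binom{k}{j}\le (ek/d)^{td}$, and rearrange. The paper phrases the lower bound as a contradiction: if there were fewer than $r^n$ regions, then by pigeonhole some convex region would contain points of two separated pieces $F_{\vi}(K_0)^\gamma$ and $F_{\vj}(K_0)^\gamma$, and the segment between them, lying entirely inside that region where $\mathcal{N}_{\tW,\mB}$ is affine, would nevertheless have to cross $\mathcal{X}\setminus K_n$ and hence change sign twice. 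Your connected-components count is an equivalent repackaging of the same idea.

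However, the bridging step is wrong as written and the deduction of ``$N\ge r^n$'' from it does not go through. The claim that ``each connected component of its positive set contains at least one full-dimensional linear region on which the function is affine and positive'' is false: in one dimension, for $f(x)=1-|x|$, the positive set $(-1,1)$ is a single connected component that contains neither of the two linear regions, and $f$ is not positive throughout either of them. What you actually need is the opposite-direction statement: each \emph{convex} linear region $R$ contributes at most one connected piece to $\{\mathcal{N}_{\tW,\mB}\ge 0\}$, because $\{x\in R:\mathcal{N}_{\tW,\mB}(x)\ge 0\}$ is the intersection of $R$ with a closed half-space and is therefore convex, hence connected or empty. Consequently $\{\mathcal{N}_{\tW,\mB}\ge 0\}$ is a union of at most $N$ connected sets ($N$ being the number of linear regions) and so has at most $N$ connected components; together with your exhibited $r^n$ pairwise-separated components this gives $N\ge r^n$. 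You should also argue about $\{\mathcal{N}_{\tW,\mB}\ge 0\}$ rather than the open positive set, since the hypothesis $\sign(\mathcal{N}_{\tW,\mB}(K_n^\gamma))=1$ only guarantees nonnegativity there. The convexity of linear regions is exactly the ingredient the paper's segment argument relies on; your proposal needs it too, but invokes it in the wrong direction.
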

\begin{proof}
From Proposition 3 in \cite{montufar2017notes} we get that
there are $\prod_{t\p=1}^t\sum_{j=0}^d \binom{k}{j} \le (ek/d)^{td}$
linear regions in $\mathcal{N}_{\tW, \mB}$ (where we use Lemma A.5 from \cite{shalev2014understanding}). Furthermore, every such linear region is an intersection
of affine half-spaces.

Note that any function such that
$\sign(f(K_n^\gamma)) = 1$ and $\sign(f(\mathcal{X} \setminus K_n)) = -1$
has at least $r^n$ such linear regions.
Indeed, notice that $K_n = \cup_{\vi \in [r]^n} F_{\vi}(K_0)$.
Assume by contradiction that there are $< r^n$ linear regions,
so there exists $\vi \ne \vj \in [r]^n$ such that
$F_{\vi}(K_0), F_{\vj}(K_0)$ are in the same linear region.
Fix $\vx \in F_{\vi}(K_0)^\gamma, \vy \in F_{\vj}(K_0)^\gamma$
and observe the function $f$ along the line from $\vx$ to $\vy$.
By our assumption $f(\vx) \ge 0$, $f(\vy) \ge 0$.
This line must cross $\mathcal{X} \setminus K_n$,
since from Assumption \ref{asm:disjoint} we get that
$d(F_{\vi}(K_0), F_{\vj}(K_0)) > 0$ for every $\vi \ne \vj \in [r]^n$.
Therefore $f$ must get negative values along the line
between $\vx$ to $\vy$, so it must cross zero at least twice.
Every linear region is an intersection
of half-spaces, and hence convex, so $f$ is linear on this path,
and we reach a contradiction.

Therefore, we get that $(ek/d)^{td} \ge r^n$, and therefore:
$k \ge \frac{d}{e} r^{s/d}$.
\end{proof}

%

This result implies that there are many
fractal distributions for which a shallow neural-network 
needs exponentially many neurons to achieve zero error on.
In fact, we show this for any distribution
without ``holes'' (areas of non-zero volume
with no examples from $\mathcal{D}_n$, outside the margin area):
\begin{corollary}
Let $\mathcal{D}_n$ be some fractal distribution, s.t for every
ball $B \subseteq K_n^\gamma \cup (\mathcal{X} \setminus K_n)$
it holds that $\prob{(\vx,y) \sim \mathcal{D}_n}{\vx \in B} > 0$.
Then for every depth $t$ and width $k$,
s.t $k < \frac{d}{e} r^{\frac{n}{td}}$, we have
$L_{\mathcal{D}_n}(\mathcal{H}_{k,t}) > 0$.
\end{corollary}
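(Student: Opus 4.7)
The plan is to argue by contradiction, combining Theorem \ref{thm:shallow_expressivity} with the no-holes hypothesis. Suppose for contradiction that some $h \in \mathcal{H}_{k,t}$ with $k < \frac{d}{e} r^{n/(td)}$ satisfies $L_{\mathcal{D}_n}(h) = 0$.

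The first step is to bootstrap from the almost-sure correctness of $h$ under $\mathcal{D}_n$ to a pointwise non-strict sign condition on the geometric sets. For every $\vx_0$ in the interior of $K_n^\gamma$ and every sufficiently small $\delta > 0$, the ball $B_\delta(\vx_0)$ lies inside $K_n^\gamma$, has positive $\mathcal{D}_n$-mass by the no-holes hypothesis, and contains only positive examples. Zero error forces $h > 0$ at some point of $B_\delta(\vx_0)$; letting $\delta \to 0$ and using continuity of the piecewise-linear $h$ gives $h(\vx_0) \geq 0$. Extending by continuity to the closure, $h \geq 0$ on all of $K_n^\gamma$; the symmetric argument gives $h \leq 0$ on $\mathcal{X} \setminus K_n$.

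The second step is to rerun the region-counting argument from the proof of Theorem \ref{thm:shallow_expressivity}. When $k < \frac{d}{e} r^{n/(td)}$, the Montufar bound yields strictly fewer than $r^n$ linear regions of $h$, so by pigeonhole some linear region $R$ meets two distinct cells $F_{\vi}(K_0)^\gamma$ and $F_{\vj}(K_0)^\gamma$ at interior points. For any such $\vx \in R \cap F_{\vi}(K_0)^\gamma$ and $\vy \in R \cap F_{\vj}(K_0)^\gamma$, the segment from $\vx$ to $\vy$ lies in the convex region $R$, so $h$ is affine along it. By Step 1 it is non-negative at both endpoints, and by Assumption \ref{asm:disjoint} it is non-positive at some interior point of the segment where the segment crosses $\mathcal{X} \setminus K_n$. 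The only affine function on $[0,1]$ compatible with these constraints is identically zero, so $h(\vx) = 0$. Sweeping $\vx$ over a full-dimensional open neighborhood in $R \cap F_{\vi}(K_0)^\gamma$ then yields $h \equiv 0$ on that entire neighborhood.

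Since this neighborhood is contained in $K_n^\gamma$ (using $F_{\vi}(K_0)^\gamma \subseteq K_n^\gamma$) and contains an open ball $B$, the no-holes hypothesis gives $\prob{(\vx,y) \sim \mathcal{D}_n}{\vx \in B} > 0$, yet every such sample is a positive example satisfying $\sign(h(\vx)) = 0 \ne 1$. This contradicts $L_{\mathcal{D}_n}(h) = 0$. The main obstacle is the first step: Theorem \ref{thm:shallow_expressivity} is stated for strict sign separation, and one must invoke the no-holes hypothesis to rule out pathological $h$ that misclassify only on measure-zero sets, thereby upgrading the almost-sure correctness to the pointwise bound $h \geq 0$ on $K_n^\gamma$ needed for the theorem's geometric argument to bite.
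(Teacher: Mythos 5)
Your conclusion is right, but your route is considerably longer than the paper's. The paper simply applies the contrapositive of \thmref{thm:shallow_expressivity}: with $k < \frac{d}{e}r^{n/(td)}$ the network $\net$ cannot sign-separate $K_n^\gamma$ from $\mathcal{X}\setminus K_n$, so some $\vx$ is misclassified; continuity of $\net$ gives an open ball $B$ around $\vx$ with the same wrong sign; the no-holes hypothesis then gives $\prob{}{\vx \in B} > 0$. You instead reopen the internal region-counting argument of \thmref{thm:shallow_expressivity} and rerun it with the weaker pointwise conclusions $h \ge 0$ on $K_n^\gamma$ and $h \le 0$ on $\mathcal{X}\setminus K_n$ extracted in Step~1. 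This is more careful about an edge case that the paper's ``w.l.o.g.\ and continuity'' step glosses over --- that the misclassified point could have $\net(\vx) = 0$, in which case the sign need not be locally constant --- and you earn the stronger conclusion $h \equiv 0$ on a full-dimensional set. Two small things to patch: for the sweeping in Step~2 you need the chosen representative of each cell to sit in the \emph{interior} of its linear region (pick the $r^n$ representatives generically, off the measure-zero union of region boundaries, before pigeonholing); and the closing line ``$\sign(h(\vx)) = 0 \ne 1$'' presumes a particular $\sign(0)$ convention --- if $\sign(0) = +1$ then the ball $B \subseteq K_n^\gamma$ carries no errors, and you would instead need a full-dimensional subset of $\mathcal{X}\setminus K_n$ on which $h \equiv 0$, which is available from your construction (the cone from $B$ to $\vy$ lies in the convex region $R$ where $h$ is affine and vanishes on a full-dimensional set, and that cone exits $K_n$) but is not written out.
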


\begin{proof}
Let $\net \in \mathcal{H}_{k,t}$. From \thmref{thm:shallow_expressivity}
there exists $\vx \in K_n^\gamma$
with $\sign(\mathcal{N}_{\tW, \mB}(\vx)) = -1$
or otherwise there exists $\vx \in \mathcal{X} \setminus K_n$
with $\sign(\mathcal{N}_{\tW, \mB}(\vx)) = 1$.
Assume w.l.o.g that we have $\vx \in K_n^\gamma$
with $\sign(\mathcal{N}_{\tW, \mB}(\vx)) = -1$.
Since $\mathcal{N}_{\tW, \mB}$ is continuous,
there exists a ball around $\vx$, with $\vx \in B \subseteq K_n^\gamma$,
such that $\sign(\mathcal{N}_{\tW, \mB}(B)) = -1$.
From the properties of the distribution we get:
\begin{align*}
\prob{(\vx,y) \sim \mathcal{D}_n}
{\sign(\mathcal{N}_{\tW, \mB}(\vx)) \ne y}
&\ge \prob{(\vx,y) \sim \mathcal{D}_n}
{\sign(\mathcal{N}_{\tW, \mB}(\vx)) \ne y ~and~ \vx \in B} \\
&=\prob{(\vx,y) \sim \mathcal{D}_n}{\vx \in B} > 0
\end{align*}
\end{proof}

The previous result shows that in many cases we cannot
guarantee exact realization of ``deep'' distributions by shallow networks
that are not exponentially wide.
On the other hand, we show that in
some cases we may be able to give good guarantees
on \emph{approximating} such distributions with shallow networks,
when we take into account
how the examples are distributed within the fractal structure.
We will formalize this notion in the next part of this section.

\subsection{Approximation Curve}

Given distribution $\mathcal{D}_n$, we define the \textbf{approximation curve}
of this distribution to be the function $P: [n] \rightarrow [0,1]$,
where: 
\[
P(j) = \prob{(\vx,y) \sim \mathcal{D}_n}{\vx \notin K_j ~or~ y = 1}
\]
Notice that $P(0) = \frac{1}{2}$, $P(n) = 1$,
and that $P$ is 
non-decreasing. The approximation curve $P$ captures exactly how
the negative examples are distributed between the different
levels of the fractal structure. If $P$ grows fast at the beginning,
then the distribution is more concentrated on the low levels of
the fractal (coarse details). If $P$ stays flat until the end,
then most of the weight is on the high levels (fine details).
Figure \ref{fig:cantor_data} shows samples from two
distributions over the same fractal structure,
with different approximation curves.

\begin{figure}[t]
\begin{center}
\centerline{\includegraphics[trim={2.03cm 1.32cm 1.5cm 1.32cm},clip,width=0.3\columnwidth]{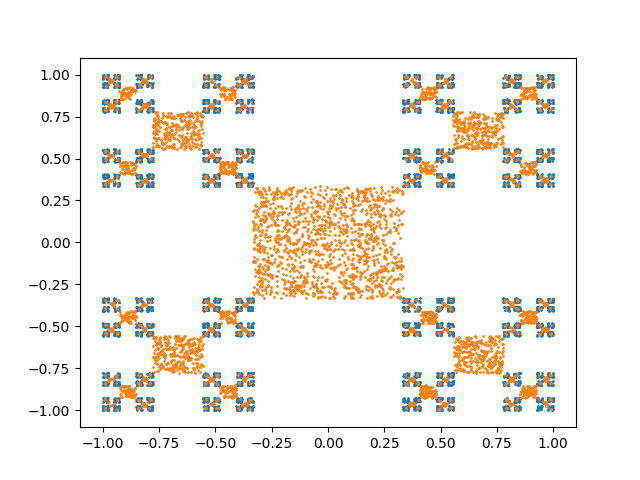}
\hspace{0.1\columnwidth}
\includegraphics[trim={2.03cm 1.32cm 1.5cm 1.32cm},clip,width=0.3\columnwidth]{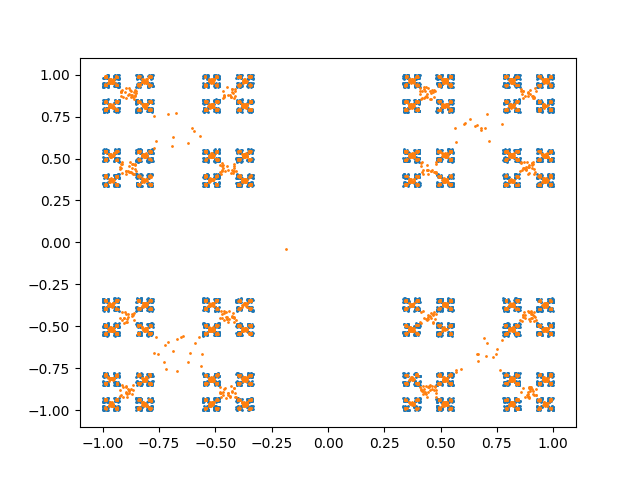}
}
\caption{2D cantor distributions of depth 5,
negative examples in orange and positive in blue.
The negative examples are concentrated
in the middle rectangle, and not in all $\mathcal{X} \setminus K_n$.
Left: ``coarse'' approximation curve (curve\#1). Right:
``fine'' approximation curve (curve\#4).}
\label{fig:cantor_data}
\end{center}
\vskip -0.2in
\end{figure}

A simple argument shows that distributions concentrated on
coarse details can be well approximated by shallower networks.
The following theorem characterizes the relation between
the approximation curve and the ``approximability'' by networks
of growing depth: 

\begin{theorem}
\label{thm:approximation_upper_bound}
Let $\mathcal{D}_n$ be some fractal distribution with approximation curve $P$.
Fix some $j,s$, then for $\mathcal{H}_{k,t}$ with depth $t=2\floor{j/s}+2$
and width $k=5dr^s$,
we have: $L_{\mathcal{D}_n}(\mathcal{H}_{k,t}) \le 1- P(j)$.
\end{theorem}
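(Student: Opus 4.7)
The plan is to exploit Corollary \ref{crl:depth_expressivity_multi} applied at fractal depth $j$ rather than $n$. The key observation is that $K_n \subseteq K_j$ for all $j \le n$ (each IFS iteration refines the previous set), so a classifier that perfectly separates $K_j$ from its complement will correctly label every positive example drawn from $\mathcal{D}_n$, and will only err on negative examples that happen to fall inside the ``larger'' set $K_j \setminus K_n$.

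Concretely, I would first invoke the construction underlying Corollary \ref{crl:depth_expressivity_multi} (really Theorem \ref{thm:depth_expressivity} applied with $n$ replaced by $j$) to obtain a network $\mathcal{N}_{\tW,\mB} \in \mathcal{H}_{k,t}$ with $k = 5dr^s$ and $t = 2\floor{j/s}+2$ satisfying $\sign(\mathcal{N}_{\tW,\mB}(K_j^\gamma)) = 1$ and $\sign(\mathcal{N}_{\tW,\mB}(\mathcal{X}\setminus K_j)) = -1$. I then need a small sanity check that this network handles every positive example of $\mathcal{D}_n$ correctly, i.e.\ that $K_n^\gamma \subseteq K_j^\gamma$. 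This follows because for any $\vx \in K_n^\gamma$ we have $B_\gamma(\vx) \subseteq K_n \subseteq K_j$, so $\vx \in K_j^\gamma$.

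Next I would bound the error. A misclassification on $(\vx,y)\sim \mathcal{D}_n$ can only occur when $y = -1$ (since positive examples lie in $K_n^\gamma \subseteq K_j^\gamma$ and are mapped to $+1$) and moreover when $\vx \in K_j$ (since if $\vx \in \mathcal{X}\setminus K_j$ the network outputs $-1$, matching $y = -1$). Hence
\begin{align*}
L_{\mathcal{D}_n}(\mathcal{N}_{\tW,\mB})
&\le \prob{(\vx,y)\sim \mathcal{D}_n}{y = -1 \text{ and } \vx \in K_j} \\
&= 1 - \prob{(\vx,y)\sim \mathcal{D}_n}{y = 1 \text{ or } \vx \notin K_j} \\
&= 1 - P(j),
\end{align*}
by the definition of the approximation curve. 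Since $\mathcal{N}_{\tW,\mB} \in \mathcal{H}_{k,t}$, the same bound holds for $L_{\mathcal{D}_n}(\mathcal{H}_{k,t})$.

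I do not expect any real obstacle here; the statement is essentially a reinterpretation of the exact realizability result at the intermediate level $j$. The only subtlety is the containment $K_n^\gamma \subseteq K_j^\gamma$, which must be noted explicitly so that Assumption 3 carries over when we view $\mathcal{D}_n$ as a (non-balanced) distribution relative to the coarser set $K_j$.
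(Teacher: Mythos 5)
Your proof is correct and follows essentially the same route as the paper's: invoke the realizability construction at the coarser level $j$, note the containment $K_n^\gamma\subseteq K_j^\gamma$ so that all positives are handled, and bound the error by the probability mass on $K_j$ with $y=-1$, which equals $1-P(j)$ by definition of the approximation curve. The only difference is purely presentational (you expand the complement explicitly).
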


\begin{proof}
From \thmref{thm:depth_expressivity} and \crlref{crl:depth_expressivity_multi},
there exists a network of depth
$t = 2\floor{j/s}+2$ and width $5dr^s$ such that 
$\sign(\mathcal{N}_{\tW, \mB}(K_j^{\gamma})) = 1$
and $\sign(\mathcal{N}_{\tW, \mB}(\mathcal{X} \setminus K_j)) = -1$.
Notice that since $K_n^\gamma \subseteq K_j^\gamma$, we have:
$\prob{(\vx,y) \sim \mathcal{D}_n}
{\vx \notin K_j^\gamma ~and~ y = 1} = 0$.
Therefore for this network we get:
$\prob{(\vx,y) \sim \mathcal{D}_n}
{\sign(\mathcal{N}_{\tW, \mB}(\vx)) \ne y}
\le \prob{(\vx,y) \sim \mathcal{D}_n}
{x \in K_j ~and~ y \ne 1} = 1 - P(j)$.
\end{proof}

This shows that using the approximation curve of distribution $\mathcal{D}_n$
allows us to give an upper bound on the approximation error
for networks that are not deep enough.
We give a lower bound for this error in a more restricted case.
We limit ourselves to the case where $d=1$, and observe networks
of width $k<r^s$ for some $s$.
Furthermore, we assume that the probability of seeing each
subset of the fractal is the same.
Then we get the following theorem:
\begin{theorem}
\label{thm:approximation_lower_bound}
Assume that $\mathcal{D}_n$ is a distribution on $\reals$ ($d=1$).
Note that for every $j$, $K_j$ is a union of $r^{j}$ intervals,
and we denote $K_j = \cup_{i=1}^{r^j} I_i$ for intervals $I_i$.
Assume that the distribution over each interval is equal,
so for every $i,\ell,y\p$:
$\prob{(x,y) \sim \mathcal{D}_n}{x \in I_i ~and~ y = y\p}
=\prob{(x,y) \sim \mathcal{D}_n}{x \in I_\ell ~and~ y = y\p}$.
Then for depth $t$ and width $k < r^s$,
for $n > j > st$ we get:
$L_{\mathcal{D}_n}(\mathcal{H}_{k,t}) \ge (1-r^{st-j})(1-P(j))$.
\end{theorem}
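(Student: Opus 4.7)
The plan is to exploit the fact that for $d=1$ a ReLU network realizes a continuous piecewise-linear function with very few pieces, so a shallow, narrow net cannot produce enough sign flips inside $K_j$ to resolve its $r^j$ top-level sub-intervals.

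First, I will invoke Proposition~3 of \cite{montufar2017notes} specialized to one dimension (exactly as in the proof of \thmref{thm:shallow_expressivity}) to bound the number of affine pieces of $\net$ by $(k+1)^{t}$; since $k < r^s$ this is at most $r^{st}$, and in particular $\net$ has at most $r^{st}$ zeros on $\reals$. I then classify each of the $r^j$ sub-intervals $I_1,\ldots,I_{r^j}$ of $K_j$ into three types: \emph{all-positive}, where $\net > 0$ throughout $I_\ell$; \emph{all-negative}, where $\net \le 0$ throughout $I_\ell$; or \emph{mixed}, where $\net$ changes sign inside $I_\ell$. A mixed interval must contain an interior zero of $\net$, so the piece-count bound gives that the number $N_?$ of mixed intervals is at most $r^{st}$, whence $N_+ + N_- \ge r^j - r^{st}$, where $N_\pm$ count the other two types.

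Next I will cash in the equal-distribution hypothesis. From $\prob{\mathcal{D}_n}{y=+1} = 1/2$ and, by definition of $P$, $\prob{\mathcal{D}_n}{x \in K_j,\,y=-1} = 1-P(j)$, the hypothesis yields
\[
\prob{\mathcal{D}_n}{x \in I_\ell,\ y=+1} = \frac{1}{2r^j}, \qquad \prob{\mathcal{D}_n}{x \in I_\ell,\ y=-1} = \frac{1-P(j)}{r^j}
\]
for every $\ell$. An all-positive $I_\ell$ misclassifies all of its negatives and contributes $(1-P(j))/r^j$ to the error; an all-negative $I_\ell$ misclassifies all of its positives and contributes $1/(2r^j)$ (the zero set of $\net$ has probability zero). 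Since $P$ is non-decreasing with $P(0) = 1/2$, we have $1/2 \ge 1 - P(j)$, so both per-interval contributions are at least $(1-P(j))/r^j$, and summing over non-mixed intervals gives
\[
L_{\mathcal{D}_n}(\net) \;\ge\; (N_+ + N_-)\,\frac{1-P(j)}{r^j} \;\ge\; (1 - r^{st-j})(1-P(j)).
\]

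The main technical point, as in \thmref{thm:shallow_expressivity}, is the first step: matching Montufar's piece-count formula to the width condition $k<r^s$ so as to land on exactly $r^{st}$ and not a weaker power that would degrade the bound. Everything afterwards is a short one-dimensional counting/averaging argument, so I do not expect further difficulty there.
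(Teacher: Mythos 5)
Your proposal is correct and follows essentially the same route as the paper's proof: bound the number of affine pieces by $r^{st}$ via Montúfar's Proposition~3 (exactly as in the proof of \thmref{thm:shallow_expressivity}), conclude that at most $r^{st}$ of the $r^j$ intervals of $K_j$ can contain a sign change, and use the equal-mass hypothesis together with $1-P(j)\le 1/2$ to lower-bound the per-interval error on the remaining constant-sign intervals by $(1-P(j))/r^j$. Your explicit split into all-positive/all-negative intervals is just a slightly more spelled-out version of the paper's ``optimal $\hat{y}_i$'' shortcut; otherwise the argument is identical.
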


The above theorem shows that for shallow networks, for which $st \ll j$,
the approximation curve gives a very tight lower bound on the approximation
error. This is due to the fact that shallow networks have a limited number
of linear regions, and hence effectively give constant prediction
on most of the ``finer'' details of the fractal distribution.
This result implies that there are fractal distributions that are not
only hard to realize by shallow networks, but that are even hard to approximate. Indeed,
fix some small $\epsilon > 0$ and let $j := st + \log_r(\frac{1}{2\epsilon})$.
Then if the approximation curve stays flat for the first $j$
levels (i.e $P(j) = \frac{1}{2}$), then from \thmref{thm:approximation_lower_bound}
the approximation error is at least $\frac{1}{2} - \epsilon$.

This gives a \textbf{strong depth separation} result: 
shallow networks have an error of
$\approx \frac{1}{2}$ while a network of depth $t \ge 2\floor{n/s}+2$
can achieve zero error (on any fractal distribution).
This strong depth separation result occurs when the distribution is concentrated on the ``fine''
details, i.e when the approximation curve stays flat throughout the
``coarse'' levels.
In the next section we
relate the approximation curve to the success of fitting a deep
network to the fractal distribution, using gradient-based optimization algorithms.
Specifically, we claim that distributions with strong depth separation
\textbf{cannot} be learned by any network, deep or shallow,
using gradient-based algorithms.

\section{Optimization Analysis}
So far, we analyzed the ability of neural-networks to express
and approximate different fractal distributions. But it remains unclear
whether these networks can be learned with gradient-based optimization
algorithms.
In this section, we show that the success of the optimization
highly depends on the approximation curve of the fractal distribution.
Namely, we show that for distributions with a ``fine'' approximation curve,
that are concentrated on the ``fine'' details of the fractal,
the optimization fails with high probability, for \textbf{any} gradient-based optimization algorithm.

To simplify the analysis,
we focus in this section on a very simple fractal distribution: a distribution over the Cantor set in $\reals$.
We begin by defining the standard construction of the Cantor set,
using an IFS. We construct the set $C_n$ recursively:
\begin{enumerate}
\item $C_0 = [0,1]$
\item $C_n = F_1(C_{n-1}) \cup F_2(C_{n-1})$
\end{enumerate}
where $F_1(x) = \frac{1}{3}-\frac{1}{3} x$ and
$F_2(x) = \frac{1}{3}+\frac{1}{3} x$.

Now, fix margin $\gamma < \frac{3^{-n}}{2}$.
We define the distribution $\mathcal{D}_n^+$ to be the uniform
distribution over $C_n^\gamma \times \{+1\}$.
The distribution $\mathcal{D}_n^-$
is a distribution over $C_0 \setminus C_n$, where we sample
from each ``level'' $C_{j}$ ($j < n$) with probability $p_j$.
Formally, we define $E_j := C_{j-1} \setminus C_{j}$
to be the $j$-th level of the negative distribution.
We use $\mathcal{U}(E_j)$ to denote the uniform distribution on set $E_j$,
then:
$\mathcal{D}_{n}^- = \sum_{j=1}^{n} p_j 
\left( \mathcal{U}(E_j) \times \{-1\} \right)$.
Notice that the approximation curve of this distribution is given by:
$P(j) = \frac{1}{2} + \frac{1}{2}\sum_{i=1}^{j} p_i$.
As before, we wish to learn
$\mathcal{D}_n = \frac{1}{2}(\mathcal{D}_n^+ + \mathcal{D}_n^-)$.
Figure \ref{fig:cantor_intervals} shows a construction of such distribution.

\subsection{Hardness of Optimization}
The main theorem in this section shows the connection between the approximation
curve and the behavior of a gradient-based optimization algorithm.
This result shows that for deep enough cantor distributions,
the value of the approximation curve on the fine details of the fractal bounds
the norm of the population gradient for randomly initialized network:
\begin{theorem}
\label{thm:hardness}
Fix some depth $t$, width $k$ and some $\delta \in (0,1)$.
Let $n, n\p \in \naturals$ such that
$n > n\p > \log^{-1}(\frac{3}{2})\log(\frac{4tk^2}{\delta})$.
Let $\mathcal{D}_n$ be some cantor distribution with approximation curve
$P$.
Assume we initialize a neural-network $\mathcal{N}_{\tW, \mB}$
of depth $t$ and width $k$,
with weights initialized uniformly in
$[-\frac{1}{2n_{in}},\frac{1}{2n_{in}}]$ (where $n_{in}$ denotes the 
in-degree of each neuron),
and biases initialized
with a fixed value $b = \frac{1}{2}$
\footnote{We note that it is standard practice to initialize
the bias to a fixed value. We fix $b = \frac{1}{2}$ for simplicity,
but a similar result can be given for any choice of
$b \in \left[0,\frac{1}{2}\right]$.}.
Denote the hinge-loss of the network on the population by:
\[
\mathcal{L}(\mathcal{N}_{\tW, \mB}) =
\mean{(x,y) \sim \mathcal{D}^n}
{\max \{1-y\mathcal{N}_{\tW, \mB}(x),0\}}
\]
Then with probability at least $1-\delta$ we have:
\begin{enumerate}
\item $
\norm{\frac{\partial}{\partial \tW} \mathcal{L}(\mathcal{N}_{\tW, \mB})}_{\max}
\le 5\left(P(n\p) - \frac{1}{2}\right)$ \\
$\norm{\frac{\partial}{\partial \mB} \mathcal{L}(\mathcal{N}_{\tW, \mB})}_{\max} \le
3\left(P(n\p) - \frac{1}{2}\right)$
\item
$L_{\mathcal{D}_n}(\net) \ge
\left(\frac{3}{2} - P(n\p)\right)\left(1-P(n\p)\right)$
\end{enumerate}
Where we denote $\norm{\tA}_{\max} = \max |a_{i_1, \dots, i_j}|$
for some tensor $\tA$.
\end{theorem}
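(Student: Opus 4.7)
The strategy rests on a structural fact about the specified initialization: for every $x \in C_0 = [0,1]$, every ReLU pre-activation lies deterministically in $[0,1]$. Indeed, each weight matrix has row $\ell_1$-norm at most $n_{\text{in}} \cdot \tfrac{1}{2n_{\text{in}}} = \tfrac{1}{2}$, so an input in $[0,1]^{n_{\text{in}}}$ produces a pre-bias value in $[-\tfrac{1}{2}, \tfrac{1}{2}]$, which the bias $\tfrac{1}{2}$ shifts to $[0,1]$; the claim then propagates through the layers by induction. Hence, with probability one, the network computes an affine function $\net(x) = \alpha x + \beta$ with $|\alpha|$ decaying exponentially in $t$ and $\beta \in [0,1]$. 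Each partial derivative $\partial \net/\partial W^{(i)}_{j,\ell}$ is therefore itself affine in $x$ with absolute value bounded by $1$, while $\partial \net/\partial b^{(i)}_j$ is constant in $x$ and also bounded by $1$.

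For the gradient bound, I expand each component of $\partial \mathcal{L}/\partial \tW$ and $\partial \mathcal{L}/\partial \mB$ as a linear combination of expectations of the form $\mathbb{E}[y \mathbf{1}(y\net(x)<1)]$ and $\mathbb{E}[y x \mathbf{1}(y\net(x)<1)]$ over various subregions. Two structural properties of $\mathcal{D}_n$ then kick in: the full distribution is symmetric under $x \mapsto 1-x$ (each $C_j$, $E_j$, and $C_n^\gamma$ is), and by self-similarity the conditional law inside each of the $2^{n'}$ level-$n'$ intervals is a rescaled copy of the ``upper-level'' sub-distribution (the positives together with negatives at levels $j > n'$), which carries equal positive and negative mass. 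On the good event that the hinge indicator $\mathbf{1}(y\net(x)<1)$ equals $1$ throughout the support, these two properties force the contributions from inside the $C_{n'}$-intervals to cancel pairwise; the residual is supported on the coarse levels $j \le n'$, i.e. on $C_0 \setminus C_{n'}$, with total mass exactly $P(n') - \tfrac{1}{2}$. Multiplying by the $\le 1$ bound on the derivative factors and the $\le 1$ coefficient in the hinge subgradient yields the stated $5(P(n')-\tfrac{1}{2})$ and $3(P(n')-\tfrac{1}{2})$ prefactors.

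For the error lower bound, I use that $\mathrm{sign}(\alpha x + \beta)$ is monotone and partitions $[0,1]$ into at most two subintervals. A four-way case analysis on how $\mathrm{sign}(\net)$ can behave on $C_{n'}$ versus on $C_0 \setminus C_{n'}$, combined with the mass decomposition (positive mass $\tfrac{1}{2}$ on $C_n^\gamma \subseteq C_{n'}$, negative mass $1 - P(n')$ on $C_{n'}\setminus C_n^\gamma$, and negative mass $P(n')-\tfrac{1}{2}$ on $C_0 \setminus C_{n'}$), gives a lower bound on the misclassified mass in each case; taking the minimum and simplifying yields $(3/2 - P(n'))(1 - P(n')) = (\text{mass inside } C_{n'}) \cdot (\text{negative mass inside } C_{n'})$. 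The hard part will be the high-probability machinery: rigorously controlling the bad events (the hinge indicator failing near the edges where $|\alpha| + |\beta|$ is close to $1$, the affine factors deviating within individual $C_{n'}$-intervals, and the pairwise cancellation being only approximate on a small set) by a union bound over the $O(tk^2)$ relevant partial-derivative factors, each with tail of order $(2/3)^{n'}$ in the random weights. The assumed lower bound $n' > \log^{-1}(3/2)\log(4tk^2/\delta)$ is precisely what ensures this union bound closes at failure probability at most $\delta$.
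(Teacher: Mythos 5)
Your central structural observation is both correct and, if pushed to its conclusion, much stronger than what you (or the paper) claim: at the stated initialization with $b=\tfrac12$, every pre-activation lies \emph{deterministically} in $[0,1]$, because each row of each weight matrix has $\ell_1$-norm at most $\tfrac12$ and the hidden representations stay in $[0,1]^k$ by induction. Hence every ReLU acts as the identity on the support of $\mathcal{D}_n$, and the network is (almost surely) globally affine, $\net(x)=\alpha x+\beta$, with $\net(x)\in[0,1]$. Since $\net(x)\le 1$, the hinge subgradient indicator $\mathbf{1}(y\net(x)<1)$ equals $1$ a.e.; and since $\net(x)\ge 0$, $\sign(\net(x))=+1$ a.e. The theorem then follows deterministically and with much sharper constants: every weight/bias partial derivative of $\net$ is an affine/constant function of $x$, and because $\mathcal{D}_n$ is invariant under $x\mapsto 1-x$ (each $C_j$, $E_j$ and $C_n^\gamma$ is symmetric about $\tfrac12$), one gets $\E[y]=0$ and $\E[xy]=0$ exactly, so the population gradient is exactly zero; and $L_{\mathcal{D}_n}(\net)=\Pr[y\ne +1]=\tfrac12\ge(\tfrac32-P(n'))(1-P(n'))$. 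In contrast, the paper does not use this fact: it proves affineness only on the $2^{n'}$ level-$n'$ intervals $I_j$ via a probabilistic argument (their Lemma~\ref{lem:affine}), because that argument is intended to generalize to any bias $b\in[0,\tfrac12]$ (their footnote), where the pre-activations can cross zero; your observation is specific to $b=\tfrac12$, where the ReLUs never activate their kink.

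That said, the write-up does not follow this observation to its natural end and contains two concrete errors. First, the claim that the conditional law inside each level-$n'$ interval ``carries equal positive and negative mass'' is false: the positive mass in $I_j$ is $2^{-n'-1}$ while the negative mass is $2^{-n'}(1-P(n'))$, so they differ by $2^{-n'}(P(n')-\tfrac12)$; your ``pairwise cancellation'' argument therefore does not isolate the residual on $C_0\setminus C_{n'}$ alone, and it is the global $x\mapsto 1-x$ symmetry, not interval-wise self-similarity, that makes the gradient vanish. Second, the ``high-probability machinery'' you invoke at the end is controlling events that do not exist in the $b=\tfrac12$ regime: the hinge indicator cannot fail, the affine factors cannot deviate (they are deterministic), and the cancellation is exact; what genuinely \emph{would} need probability (and is what Lemma~\ref{lem:affine} in the paper supplies) is the claim that $\sign(\net)$ is constant on every $I_j$, which a global affine function with one zero-crossing need not satisfy — but for $b=\tfrac12$ even this is moot because $\net\ge 0$ everywhere. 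Finally, the ``four-way case analysis'' for the error bound is left as a gesture; with your observation it should be replaced by the one-line computation $\sign(\net)\equiv+1$ a.e., and without it the case analysis would have to be carried out per-$I_j$ as in the paper, not over the non-convex sets $C_{n'}$ and $C_0\setminus C_{n'}$.
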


We give the full proof of the theorem in the appendix, and show a sketch
of the argument here. Observe the distribution
$\mathcal{D}_n$, limited to the set $C_{n\p}$.
Notice that the volume of $C_{n\p}$ (namely, the sum
of the lengths of its intervals) decreases exponentially
fast with $n\p$. Since each neuron of the network corresponds
to a separation of the space by a hyper-plane,
we get that the probability of each hyper-plane to separate
an interval of $C_{n\p}$ decreases exponentially fast with
$n\p$. Thus, for $n\p$ that is logarithmic in the number of
neurons, there is a high probability that each interval
of $C_{n\p}$ is not separated by any neuron. 
In this case the network is linear on each interval.
A simple argument gives a bound on the
gradient of a linear classifier on each interval, and on its classification error. Note that the approximation curve determines how much of the distribution
is concentrated on the set $C_{n\p}$. That is, if $P(n\p) - \frac{1}{2}$ is close to zero,
this means that most of the distribution is concentrated on $C_{n\p}$.
Using this property allows us to bound the norm of the gradient in terms of the
approximation curve.

We now give some important implications of this theorem.
First, notice that we can define cantor distributions for which a gradient-based
algorithm fails with high probability.
Indeed, we define the ``fine'' cantor distribution to be a distribution
concentrated on the highest level of the cantor set. Given our
previous definition, this means $p_1, \dots, p_{n-1} = 0$ and $p_n = 1$.
The approximation curve for this distribution is therefore $P(0) = \dots = P(n-1) = \frac{1}{2}$, $P(n) = 1$.
Figure \ref{fig:cantor_intervals} shows the ``fine''
cantor distribution drawn over its composing intervals.
From \thmref{thm:hardness} we get that for
$n > \log^{-1}(\frac{3}{2})\log(\frac{4tk^2}{\delta})$,
with probability at least $1-\delta$,
the population gradient is zero and the error is $\frac{1}{2}$.
This result immediately implies that vanilla gradient-descent
on the distribution will be stuck in the first step.
But SGD, or GD on a finite sample, may move from the initial point,
due to the stochasticity of the gradient estimation.
What the theorem shows is that the objective is extremely
flat almost everywhere in the regime of $\tW$, so stochastic
gradient steps are highly unlikely to converge to any solution
with error better than $\frac{1}{2}$.

\begin{figure}[t]
\begin{center}
\def\w{6}
\colorlet{lightblue}{blue!80}
\begin{tikzpicture}[decoration=Cantor set,line width=3mm]
  \draw [draw=orange] (0,0) -- (\w,0);
  \draw [draw=lightblue] decorate{ (0,0) -- (\w,0) }
  		node[right] {$C_1$};
  \draw [draw=orange] decorate{ (0,-0.5) -- (\w,-0.5) };
  \draw [draw=lightblue] decorate{ decorate{ (0,-0.5) -- (\w,-0.5) }}
  		node[right] {$C_2$};
  \draw [draw=orange] decorate{ decorate{ (0,-1) -- (\w,-1) }};
  \draw [draw=lightblue] decorate {decorate{ decorate{ (0,-1) -- (\w,-1) }}}
  		node[right] {$C_3$};
  \draw [draw=orange] decorate{ decorate{ decorate{ (0,-1.5) -- (\w,-1.5) }}};
  \draw [draw=lightblue] decorate{ decorate{ decorate{
  		decorate{ (0,-1.5) -- (\w,-1.5) }}}}	node[right] {$\vdots$};
\end{tikzpicture}
\caption{``Fine'' cantor distributions of growing depth. Negative areas in orange, positive  in blue.}
\label{fig:cantor_intervals}
\end{center}
\vskip -0.2in
\end{figure}
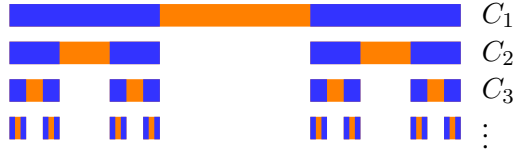

The above argument shows that there are 
fractal distributions that can be \textbf{realized}
by deep networks, for which a standard optimization
process is likely to fail. We note that this result is interesting by itself,
in the broader context of depth separation results.
It implies that for many deep architectures,
there are distributions with depth separation property that cannot be learned by standard optimization algorithms:

\begin{corollary}
\label{crl:hardness_existance}
There exist two constants $c_1, c_2$, such that
for every width $k \ge 10$ and $\delta \in (0,1)$, for
every depth $t > c_1 \log(\frac{k}{\delta}) + c_2$ there exists
a distribution $\mathcal{D}$ on $\reals \times \{\pm 1\}$ for which:
\begin{enumerate}
\item $\mathcal{D}$ can be realized by a neural network of depth
$t$ and width $10$.
\item $\mathcal{D}$ cannot be realized by a one-hidden layer network
with less than $2^{t-1}$ units.
\item Any gradient-based algorithm trying to learn a neural-network
of depth $t$ and width $k$,
with initialization and loss described in \thmref{thm:hardness},
returns a network with error $\frac{1}{2}$
w.p $\ge 1-\delta$.
\end{enumerate}
\end{corollary}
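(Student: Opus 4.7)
The plan is to instantiate the corollary with the ``fine'' Cantor distribution $\mathcal{D}_n$ from the preceding subsection (namely $p_n = 1$ and $p_i = 0$ for $i < n$), taking the fractal depth $n$ to grow linearly with $t$. Concretely, set $n := \lfloor (t-1)/2 \rfloor$. Then \crlref{crl:depth_expressivity}, applied with $d = 1$ and $r = 2$, produces a network of width $5dr = 10$ and depth $2n+1 \le t$ with zero error on $\mathcal{D}_n$; padding with identity layers to reach depth exactly $t$ if necessary yields condition 1.

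For condition 2, I would use a direct counting argument: a one-hidden-layer ReLU network $f \colon \reals \to \reals$ with $m$ hidden units is piecewise linear with at most $m+1$ pieces, so $\sign(f)$ has at most $m+1$ sign-constant regions. Since $K_n^\gamma$ is a disjoint union of $2^n$ positive intervals interleaved with the $2^n - 1$ bounded negative intervals of $\reals \setminus K_n$ (together with two unbounded negative intervals at the extremes), any network whose sign agrees with the label function must switch sign at least $2 \cdot 2^n$ times, forcing $m \ge 2^{n+1} - 1 = 2^{\Omega(t)}$. Absorbing the exponent constant into $c_1, c_2$ delivers the stated $2^{t-1}$ lower bound.

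For condition 3, the approximation curve of the fine distribution satisfies $P(j) = \tfrac{1}{2}$ for every $j \le n-1$. I apply \thmref{thm:hardness} with $n' := n-1$; this is valid whenever $n - 1 > \log(4tk^2/\delta)/\log(3/2)$, which is implied by $t > c_1 \log(k/\delta) + c_2$ for suitable absolute constants (the $\log t$ on the right is absorbed into a slightly larger leading constant, since the left side is linear in $t$). The theorem then yields, with probability at least $1 - \delta$ over the initialization, that every coordinate of the population gradient is exactly zero (because $P(n') - \tfrac{1}{2} = 0$) and that $L_{\mathcal{D}_n}(\net) \ge (\tfrac{3}{2} - \tfrac{1}{2})(1 - \tfrac{1}{2}) = \tfrac{1}{2}$. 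Any deterministic gradient-based update rule (vanilla GD, momentum, Adam, and so on) therefore leaves the parameters at their initialization, so the returned network retains error at least $\tfrac{1}{2}$; combined with the trivial upper bound achievable by a constant predictor on the balanced distribution, the error is exactly $\tfrac{1}{2}$.

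The main obstacle is the three-way tension in the choice of $n$: realizability forces $n \le (t-1)/2$, the shallow lower bound wants $n$ as large as possible, and the hardness theorem needs $n$ to exceed $\log(tk^2/\delta)/\log(3/2)$. The delicate point is reconciling conditions 1 and 2 on the nose: the paper's depth-$(2n+1)$ realization combined with the $2^{n+1}-1$ shallow lower bound delivers a separation of the form $2^{\Omega(t)}$ rather than literally $2^{t-1}$, so the stated exponent should be read modulo the absolute constants $c_1, c_2$ (which I am free to enlarge). Everything else is a direct composition of results already proved in the paper.
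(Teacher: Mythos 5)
Your proposal follows the paper's own route: instantiate with the fine Cantor distribution at fractal depth $n \approx t/2$, invoke the depth-expressivity construction for item 1, count sign-constant regions for item 2, and apply \thmref{thm:hardness} at a level $n'$ with $P(n') = \frac{1}{2}$ for item 3. The structure and all major steps coincide with the paper's.

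Two points to tighten. First, the region count in item 2 uses the wrong negative set: the fine Cantor distribution places negative mass only on $E_n = C_{n-1}\setminus C_n$, a union of $2^{n-1}$ intervals, and puts \emph{no} mass on the larger gaps of $\reals\setminus K_n$ (those belong to $E_1,\dots,E_{n-1}$, which carry probability zero under the fine curve). Realizing $\mathcal{D}$ therefore only forces the network to be negative on those $2^{n-1}$ middle intervals, each flanked by two positive $C_n$ intervals, giving $2\cdot 2^{n-1}=2^n$ forced sign changes, not the $2\cdot 2^n$ you claim. (The paper itself uses the even coarser $2^{n-1}+1$ region bound.) Second, you half-acknowledge that with $n\approx t/2$ neither your count nor the paper's delivers the literal $2^{t-1}$ — both give $2^{\Theta(t/2)}$ — but your remedy of enlarging $c_1,c_2$ does not work: those constants only strengthen the hypothesis on $t$, whereas the shortfall $2^{t-1}/2^{t/2}$ grows without bound in $t$. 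This discrepancy is also present in the paper's own proof (same choice $n=t/2$, same $k\ge 2^{n-1}$ conclusion), so it should be flagged as an imprecision in the stated exponent rather than claimed to vanish into the constants. Finally, eliminating the $\log t$ term in the hypothesis requires a concrete lemma — the paper invokes Lemma A.2 of Shalev-Shwartz and Ben-David, which gives $t > 4a\log(2a)+2b \Rightarrow t > a\log t + b$; your phrasing that the $\log t$ is ``absorbed into a slightly larger leading constant'' is the right instinct but, as written, is circular and does not produce an explicit threshold.
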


We can go further, and use \thmref{thm:hardness}
to give a better characterization of these hard distributions.
Recall that in the previous section
we showed distributions that exhibit a \emph{strong} depth separation
property: distributions that are realizable by deep networks, for which shallow
networks get an error exponentially close to $\frac{1}{2}$.
From \thmref{thm:hardness} we get that any cantor distribution that gives
a strong depth separation \textbf{cannot} be learned by gradient-based
algorithms:

\begin{corollary}
Fix some depth $t$, width $k$ and some $\delta \in (0,1)$.
Let $n > 4\log^{-1}(\frac{3}{2})\log(\frac{4tk^2}{\delta})+2$.
Let $\mathcal{D}_n$ be some cantor distribution such that any network of width $10$
and depth $t\p < n$ has an error of at least $\frac{1}{2} - \epsilon^{n-t\p}$,
for some $\epsilon \in (0,1)$ (i.e, strong depth separation). Assume we initialize
a network of depth $t$ and width $k$ as described in \thmref{thm:hardness}.
Then with probability at least $1- \delta$:
\begin{enumerate}
\item $
\norm{\frac{\partial}{\partial \tW} \mathcal{L}(\mathcal{N}_{\tW, \mB})}_{\max}
\le 5 \epsilon^{n/2}$ \\
$\norm{\frac{\partial}{\partial \mB} \mathcal{L}(\mathcal{N}_{\tW, \mB})}_{\max} \le
3 \epsilon^{n/2}$
\item
$L_{\mathcal{D}_n}(\net) \ge
\frac{1}{2} - \frac{3}{2} \epsilon^{n/2}$
\end{enumerate}
\end{corollary}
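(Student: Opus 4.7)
The plan is to reduce this corollary directly to \thmref{thm:hardness} by using the strong depth separation hypothesis together with the approximation upper bound \thmref{thm:approximation_upper_bound} to control $P(n')-\tfrac{1}{2}$ at a carefully chosen intermediate level $n'$. Concretely, I pick an integer $n'$ satisfying $\log^{-1}(\tfrac{3}{2})\log(\tfrac{4tk^2}{\delta}) < n' \le n/4 - 1$; the hypothesis on $n$ is exactly what is needed to guarantee that such an $n'$ exists (with some small slack in the additive constant absorbed into integrality).

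Next, I specialize \thmref{thm:approximation_upper_bound} to the Cantor setting ($d=1$, $r=2$) with $s=1$ and $j=n'$. This produces a network of width $5dr^s = 10$ and depth $2n'+2$ whose population error on $\mathcal{D}_n$ is at most $1 - P(n')$. Since $2n'+2 \le n/2 < n$, the strong depth separation assumption applies to this very network and forces its error to be at least $\tfrac{1}{2} - \epsilon^{n-(2n'+2)}$. Chaining the two inequalities gives
\[
P(n') - \tfrac{1}{2} \;\le\; \epsilon^{\,n - 2n' - 2} \;\le\; \epsilon^{n/2},
\]
where the last step uses $n' \le n/4 - 1$ (so $n-2n'-2 \ge n/2$) and $\epsilon \in (0,1)$.

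Finally, I invoke \thmref{thm:hardness} at this choice of $n'$, which is legitimate because $n' > \log^{-1}(\tfrac{3}{2})\log(\tfrac{4tk^2}{\delta})$. The gradient bounds $5(P(n') - \tfrac{1}{2}) \le 5\epsilon^{n/2}$ and $3(P(n') - \tfrac{1}{2}) \le 3\epsilon^{n/2}$ follow immediately. For the error bound, letting $a := P(n') - \tfrac{1}{2} \in [0,\epsilon^{n/2}]$, one computes
\[
\left(\tfrac{3}{2} - P(n')\right)\left(1 - P(n')\right) \;=\; (1-a)\bigl(\tfrac{1}{2} - a\bigr) \;=\; \tfrac{1}{2} - \tfrac{3}{2}a + a^2 \;\ge\; \tfrac{1}{2} - \tfrac{3}{2}\epsilon^{n/2},
\]
which is exactly the claimed lower bound. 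The probability $1-\delta$ is inherited from the single application of \thmref{thm:hardness}.

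I do not expect a substantive obstacle: the corollary is essentially a bookkeeping exercise that translates the strong depth separation hypothesis into a quantitative estimate $P(n') - \tfrac{1}{2} \le \epsilon^{n/2}$ via the previously established expressivity upper bound. The only item requiring modest care is the arithmetic linking the constant in the hypothesis on $n$ to the simultaneous requirements $n' > \log^{-1}(\tfrac{3}{2})\log(\tfrac{4tk^2}{\delta})$ and $n' \le n/4 - 1$; the structure of the argument is unaffected by any minor adjustment of that additive constant.
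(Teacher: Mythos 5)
Your proposal is correct and follows essentially the same route as the paper: specialize \thmref{thm:approximation_upper_bound} with $s=1$, $d=1$, $r=2$ to bound $1-P(n')$ by the error of a width-$10$ network of depth $2n'+2$, chain this against the strong depth separation hypothesis at $t'=2n'+2$ to obtain $P(n')-\tfrac{1}{2}\le\epsilon^{n/2}$, and then feed this into \thmref{thm:hardness}. Your treatment is, if anything, slightly more careful than the paper's: the paper plugs in the non-integer $n'=n/4-\tfrac{1}{2}$ (with a small off-by-one in relating depth $t'$ to level $j$), whereas you explicitly acknowledge the integrality slack and solve $2n'+2=t'$ correctly, at the cost of needing an additive constant of $4$ rather than $2$ in the hypothesis on $n$, which you reasonably note is immaterial.
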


\begin{proof}
Using \thmref{thm:approximation_upper_bound} and the strong depth separation 
property we get that for every $t\p$ we have:
$1-P\left(\frac{t\p-1}{2}\right) \ge L_{\mathcal{D}}(\mathcal{H}_{10,t\p})
\ge \frac{1}{2} - \epsilon^{n-t\p}$.
Choosing $t\p = \frac{n}{2}$ and taking $n\p = \frac{n}{4}-\frac{1}{2}$
we get $P(n\p) \le \frac{1}{2} + \epsilon^{n/2}$.
By the choice of $n$ we can apply \thmref{thm:hardness} and get the required.
\end{proof}

This shows that in the strong depth separation case,
the population gradient is exponentially close to zero with high probability.
Effectively, this property means that even a small amount of stochastic noise in the gradient estimation (for example, in SGD), makes the algorithm fail.

This result gives a very important property of cantor distributions.
It shows that every cantor distribution that cannot be \textbf{approximated}
by a \textbf{shallow} network
(achieving error greater than $\frac{1}{2}$), cannot be \textbf{learned} by a
\textbf{deep} network
(when training with gradient-based algorithms).
While we show this in a very restricted case, we conjecture that this property
holds in general:

\begin{conjecture}
Let $\mathcal{D}$ be some distribution such that $L_\mathcal{D}(\mathcal{H}_{k,t}) = 0$
(realizable with networks of width $k$ and depth $t$).
If $L_{\mathcal{D}}(\mathcal{H}_{k,t\p})$
is exponentially close to $\frac{1}{2}$ when $t\p \to 1$,
then any gradient-based algorithm
training a network of depth $t$ and width $k$ will fail with high probability.
\end{conjecture}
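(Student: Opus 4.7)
The plan is to lift the argument used for the Cantor case (\thmref{thm:hardness} and the subsequent corollary) to a general distribution, replacing the explicit self-similar geometry with a combinatorial statement about linear regions of randomly initialized ReLU networks. The Cantor proof had three ingredients: (i) at random initialization, few hyperplanes fall inside the ``fine'' part of the support, so the network is piecewise linear over large chunks of mass; (ii) on each such chunk, any affine predictor has near-$\frac{1}{2}$ error and contributes only a tiny amount to the population gradient; (iii) summing over chunks yields a population gradient of exponentially small norm. I would try to recreate each of these three ingredients in the abstract setting.

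First, I would convert the hypothesis ``$L_{\mathcal D}(\mathcal H_{k,t\p})$ is exponentially close to $\frac{1}{2}$ for small $t\p$'' into a structural property of $\mathcal D$. Any network in $\mathcal H_{k,t\p}$ induces a partition of $\mathcal X$ into at most $(ek/d)^{t\p d}$ convex linear regions with an affine predictor per cell (the bound used in \thmref{thm:shallow_expressivity}). So the hypothesis says that \emph{for every convex partition of $\mathcal X$ into at most $N = (ek/d)^{t\p d}$ cells, labelled by the signs of arbitrary affine functions, the classification error is $\ge \frac{1}{2} - \eta(N)$ with $\eta$ exponentially small in $t\p$}. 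I would codify this as a ``fine-grainedness'' property: inside any convex region large enough to arise in such a partition, the conditional law of $y$ is approximately $\frac{1}{2}$-balanced and approximately uncorrelated with every affine function of $x$.

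Second, I would apply fine-grainedness to control both the population gradient and the population error of a randomly initialized depth-$t$ width-$k$ network. Such a network has at most $M = (ek/d)^{td}$ linear regions, and the loss decomposes as $\mathcal L(\mathcal N) = \sum_R \mathbb E_{(x,y)\sim\mathcal D}\bigl[\mathbf 1_{x\in R}\cdot\max\{0,1-y\mathcal N(x)\}\bigr]$. Inside each region $R$ the network is affine in $x$, so both the local error and the local contribution to $\nabla_{\tW,\mB}\mathcal L$ are linear functionals of the conditional first moments of $(x,y)$ on $R$. Fine-grainedness forces these moments to be exponentially small on every region whose $\mathcal D$-mass is not itself exponentially tiny. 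Summing over the at most $M$ regions, both $\|\nabla \mathcal L\|_{\max}$ and $L_{\mathcal D}(\mathcal N) - \frac{1}{2}$ come out exponentially small, exactly mirroring the two conclusions of \thmref{thm:hardness}.

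The main obstacle, and the reason this remains a conjecture, is the \emph{geometric} step of certifying that at random initialization the linear regions of a deep ReLU network are indeed large enough for fine-grainedness to apply to essentially all the mass of $\mathcal D$. In the Cantor case this step is almost trivial: the fine intervals have length $3^{-n}$, exponentially smaller than the probability that a random hyperplane cuts $[0,1]$ at any given location, so a union bound over the $O(tk)$ hyperplanes suffices. For a general $\mathcal D$ there is no canonical scale, and the linear regions of a deep ReLU network are known to have complicated, input-dependent shapes that could, a priori, align with unusual structure in $\mathcal D$. Pushing the argument through would likely require either an intrinsic-scale or diameter assumption on $\mathcal D$, or an anti-concentration / decoupling result showing that random ReLU partitions are statistically independent of the target labelling --- a random-features type statement in the spirit of \cite{shalev2017failures}. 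I expect this geometric decoupling to be the main difficulty in any full proof.
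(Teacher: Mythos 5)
The statement you are addressing is a \emph{conjecture}; the paper offers no proof of it and only establishes the special Cantor-set instance (\thmref{thm:hardness} and its corollary). Your sketch correctly identifies the three ingredients of the Cantor argument and proposes the natural way to lift them, and — importantly — you accurately pinpoint why this remains open: in the Cantor case the fine-grained scale $3^{-n\p}$ is fixed in advance, independently of the network, so a simple anti-concentration bound on each of the $O(tk)$ random hyperplanes (Lemma \ref{lem:affine}) certifies that no hyperplane cuts a fine interval; in general the linear regions of a deep ReLU net are data-independent in distribution but have no canonical scale to compare against the target's structure, so one cannot rule out the regions aligning with the hard-to-classify geometry of $\mathcal D$. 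That is indeed the step that the authors cannot carry out in general.

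One additional caution that your sketch glosses over: the hypothesis $L_{\mathcal D}(\mathcal H_{k,t\p}) \ge \frac12 - \eta$ constrains only the \emph{aggregate} error of the best width-$k$ depth-$t\p$ network, and does not directly give the per-region ``fine-grainedness'' you want (that every non-negligible convex region has nearly balanced and affine-uncorrelated conditional law). The converse direction — one unbalanced region gives a good shallow classifier — needs the region to be carveable by a budget of $\le k$ hyperplanes of depth $t\p$; a region that arises only in a deep network's partition need not be expressible that way, so a width-$k$ shallow net may be unable to exploit it. Formalizing which regions the hypothesis actually controls is a second, logically prior gap in the reduction, on top of the geometric decoupling issue you flag. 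Both points would need to be resolved in any proof; since the paper leaves this as a conjecture, your proposal is a faithful and well-calibrated assessment of the difficulty rather than a proof.
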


\newlength\figureheight
\newlength\figurewidth
\begin{figure}[t]
\begin{center}
\centerline{
\setlength\figureheight{6cm}
\setlength\figurewidth{7.5cm}
\begin{tikzpicture}

\definecolor{color1}{rgb}{0.203921568627451,0.541176470588235,0.741176470588235}
\definecolor{color0}{rgb}{0.886274509803922,0.290196078431373,0.2}
\definecolor{color3}{rgb}{0.984313725490196,0.756862745098039,0.368627450980392}
\definecolor{color2}{rgb}{0.596078431372549,0.556862745098039,0.835294117647059}

\begin{axis}[
axis background/.style={fill=white!89.80392156862746!black},
axis line style={white},
height=\figureheight,
legend cell align={left},
legend style={at={(0.97,0.03)}, anchor=south east, draw=white!80.0!black, fill=white!89.80392156862746!black},
tick align=outside,
tick pos=left,
title={cantor5},
width=\figurewidth,
x grid style={white},
xlabel={width},
xmajorgrids,
xmin=-9.5, xmax=419.5,
y grid style={white},
ylabel={accuracy},
ymajorgrids,
ymin=0.59361, ymax=1.01139
]
\addlegendimage{no markers, color0}
\addlegendimage{no markers, color1}
\addlegendimage{no markers, color2}
\addlegendimage{no markers, lightgray!62.22222222222222!black}
\addlegendimage{no markers, color3}
\addplot [semithick, color0, mark=*, mark size=3, mark options={solid}]
table [row sep=\\]{%
10	0.6126 \\
20	0.6928 \\
50	0.7246 \\
100	0.7266 \\
200	0.724 \\
400	0.7392 \\
};
\addplot [semithick, color1, mark=*, mark size=3, mark options={solid}]
table [row sep=\\]{%
10	0.7222 \\
20	0.8102 \\
50	0.824 \\
100	0.8574 \\
200	0.8592 \\
400	0.8716 \\
};
\addplot [semithick, color2, mark=*, mark size=3, mark options={solid}]
table [row sep=\\]{%
10	0.778 \\
20	0.8356 \\
50	0.8776 \\
100	0.9236 \\
200	0.9308 \\
400	0.9318 \\
};
\addplot [semithick, lightgray!62.22222222222222!black, mark=*, mark size=3, mark options={solid}]
table [row sep=\\]{%
10	0.823 \\
20	0.865 \\
50	0.9148 \\
100	0.9544 \\
200	0.9718 \\
400	0.9716 \\
};
\addplot [semithick, color3, mark=*, mark size=3, mark options={solid}]
table [row sep=\\]{%
10	0.8064 \\
20	0.8658 \\
50	0.9504 \\
100	0.9688 \\
200	0.9924 \\
400	0.9828 \\
};
\end{axis}

\end{tikzpicture}
\begin{tikzpicture}

\definecolor{color1}{rgb}{0.203921568627451,0.541176470588235,0.741176470588235}
\definecolor{color0}{rgb}{0.886274509803922,0.290196078431373,0.2}
\definecolor{color3}{rgb}{0.984313725490196,0.756862745098039,0.368627450980392}
\definecolor{color2}{rgb}{0.596078431372549,0.556862745098039,0.835294117647059}

\begin{axis}[
axis background/.style={fill=white!89.80392156862746!black},
axis line style={white},
height=\figureheight,
legend cell align={left},
legend entries={{t=1},{t=2},{t=3},{t=4},{t=5}},
legend style={at={(0.97,0.03)}, anchor=south east, draw=white!80.0!black, fill=white!89.80392156862746!black},
tick align=outside,
tick pos=left,
title={cantor5},
width=\figurewidth,
x grid style={white},
xlabel={\#parameters},
xmajorgrids,
xmin=-32117, xmax=675359,
y grid style={white},
ylabel={accuracy},
ymajorgrids,
ymin=0.59361, ymax=1.01139
]
\addlegendimage{mark=*, color0}
\addlegendimage{mark=*, color1}
\addlegendimage{mark=*, color2}
\addlegendimage{mark=*, lightgray!62.22222222222222!black}
\addlegendimage{mark=*, color3}
\addplot [semithick, color0, mark=*, mark size=3, mark options={solid}]
table [row sep=\\]{%
41	0.6126 \\
81	0.6928 \\
201	0.7246 \\
401	0.7266 \\
801	0.724 \\
1601	0.7392 \\
};
\addplot [semithick, color1, mark=*, mark size=3, mark options={solid}]
table [row sep=\\]{%
151	0.7222 \\
501	0.8102 \\
2751	0.824 \\
10501	0.8574 \\
41001	0.8592 \\
162001	0.8716 \\
};
\addplot [semithick, color2, mark=*, mark size=3, mark options={solid}]
table [row sep=\\]{%
261	0.778 \\
921	0.8356 \\
5301	0.8776 \\
20601	0.9236 \\
81201	0.9308 \\
322401	0.9318 \\
};
\addplot [semithick, lightgray!62.22222222222222!black, mark=*, mark size=3, mark options={solid}]
table [row sep=\\]{%
371	0.823 \\
1341	0.865 \\
7851	0.9148 \\
30701	0.9544 \\
121401	0.9718 \\
482801	0.9716 \\
};
\addplot [semithick, color3, mark=*, mark size=3, mark options={solid}]
table [row sep=\\]{%
481	0.8064 \\
1761	0.8658 \\
10401	0.9504 \\
40801	0.9688 \\
161601	0.9924 \\
643201	0.9828 \\
};
\end{axis}

\end{tikzpicture}}
\caption{The effect of depth on learning the cantor set.} \label{fig:experiments_depth}
\end{center}
\vskip -0.4in
\end{figure}
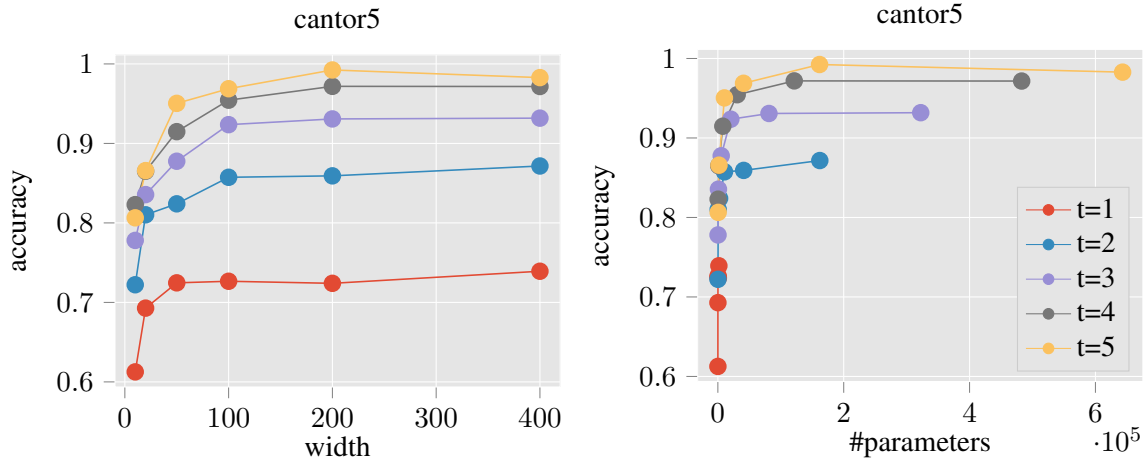

\section{Experiments}
\label{sec:experiments}
In the previous section, we saw that learning a ``fine'' distribution with
gradient-based algorithms is likely to fail. To complete the picture,
we now show a positive result, asserting that when the distribution has
enough weight on the ``coarse'' details,
SGD succeeds to learn a deep network with small error.
Moreover, we show that when training on such distributions,
a clear depth separation is observed, and deeper networks indeed perform better
than shallow networks.
Unfortunately, giving theoretical
evidence to support this claim seems out of reach,
as analyzing gradient-descent on deep networks
proves to be extremely hard due to the dynamics of the non-convex optimization.
Instead, we perform experiments to show these desired properties.

In this section we present our experimental results on learning deep networks
with Adam optimizer (\cite{kingma2014adam}), trained on samples from fractal distributions.
First, we show that depth separation is observed when training on fractal
distributions with ``coarse'' approximation curve:
deeper networks perform better and have better
parameter utilization. Second, we demonstrate the effect of training
on ``coarse'' vs. ``fine'' distributions, showing that the performance
of the network degrades as the approximation curve becomes finer.

We start by observing a distribution with a ``coarse'' approximation curve
(denoted curve \#1), where the negative examples
are evenly distributed between the levels.
The underlying fractal structure is a two-dimensional variant
of the cantor set. This set is constructed by an IFS with four
mappings, each one maps the structure to a rectangle in a different
corner of the space. The negative examples are concentrated in the central
rectangle of each structure. The distributions are shown in \figref{fig:cantor_data}.

We train feed-forward networks of varying depth and width
on a 2D cantor distribution of depth 5. We sample 50K examples
for a train dataset and 5K examples for a test dataset.
We train the networks on this dataset with Adam optimizer for
$10^6$ iterations, with batch size of $100$ and different learning rates.
We observe the best performance
of each configuration (depth and width) on the test data along the runs.
The results of these experiments are shown in \figref{fig:experiments_depth}.

In this experiment, we see that a wide enough depth 5 network
gets almost zero error. The fact that the network needs much
more parameters than the best possible network is not surprising,
as previous results have shown that over-parametrization is essential
for the optimization to succeed (\cite{brutzkus2017sgd}).
Importantly, we can see a clear depth separation between the networks:
deeper networks achieve better accuracy,
and are more efficient in utilizing the network parameters.

Next, we observe the effect of the approximation curve
on learning the distribution. We compare the performance of the best
depth $5$ networks,
when trained on distributions with different
approximation curves. The training and validation process is as described
previously.
We also plot the value of the approximation curve for each distribution,
in levels $3,4,5$ of the fractal.
The results of this experiment are shown in \figref{fig:experiments_curve}.
Clearly, the approximation curve
has a crucial effect on learning the distribution.
While for ``coarse'' approximation curves the network achieves an error
that is close to zero,
we can see that distributions with ``fine'' approximation
curves result in a drastic degradation in performance.

\begin{figure}[t]
\begin{center}
\centerline{
\setlength\figureheight{7cm}
\setlength\figurewidth{8cm}
\begin{tikzpicture}

\definecolor{color1}{rgb}{0.203921568627451,0.541176470588235,0.741176470588235}
\definecolor{color0}{rgb}{0.886274509803922,0.290196078431373,0.2}
\definecolor{color2}{rgb}{0.596078431372549,0.556862745098039,0.835294117647059}

\begin{axis}[
axis background/.style={fill=white!89.80392156862746!black},
axis line style={white},
height=\figureheight,
legend cell align={left},
legend entries={{acc},{P(5)},{P(4)},{P(3)}},
legend style={at={(0.03,0.03)}, anchor=south west, draw=white!80.0!black, fill=white!89.80392156862746!black},
tick align=outside,
tick pos=left,
width=\figurewidth,
x grid style={white},
xlabel={\#curve},
xmajorgrids,
xmin=0.75, xmax=6.25,
y grid style={white},
ylabel={accuracy},
ymajorgrids,
ymin=0.475, ymax=1.025,
ytick={0.4,0.5,0.6,0.7,0.8,0.9,1,1.1},
yticklabels={0.4,0.5,0.6,0.7,0.8,0.9,1.0,1.1}
]
\addlegendimage{mark=*, color0}
\addlegendimage{no markers, color1, dashed}
\addlegendimage{no markers, color2, dashed}
\addlegendimage{no markers, lightgray!62.22222222222222!black, dashed}
\addplot [semithick, color0, mark=*, mark size=3, mark options={solid}]
table [row sep=\\]{%
1	0.9924 \\
2	0.9654 \\
3	0.9096 \\
4	0.9322 \\
5	0.7566 \\
6	0.7764 \\
};
\addplot [semithick, color1, dashed]
table [row sep=\\]{%
1	1 \\
2	1 \\
3	1 \\
4	1 \\
5	1 \\
6	1 \\
};
\addplot [semithick, color2, dashed]
table [row sep=\\]{%
1	0.925650822501483 \\
2	0.870039475052563 \\
3	0.792893218813452 \\
4	0.684241121587126 \\
5	0.583822107898522 \\
6	0.5 \\
};
\addplot [semithick, lightgray!62.22222222222222!black, dashed]
table [row sep=\\]{%
1	0.840246044613553 \\
2	0.7062994740159 \\
3	0.5 \\
4	0.542312339311059 \\
5	0.506242576182648 \\
6	0.5 \\
};
\end{axis}

\end{tikzpicture}
\setlength\figureheight{2.8cm}
\setlength\figurewidth{3.3cm}
\begin{tikzpicture}

\definecolor{color0}{rgb}{0.886274509803922,0.290196078431373,0.2}

\begin{groupplot}[group style={group size=2 by 3}]
\nextgroupplot[
axis background/.style={fill=white!89.80392156862746!black},
axis line style={white},
height=\figureheight,
tick align=outside,
tick pos=left,
title={curve \#1},
width=\figurewidth,
x grid style={white},
xmajorgrids,
xmin=-0.25, xmax=5.25,
y grid style={white},
ymajorgrids,
ymin=0.475, ymax=1.025,
xticklabels={,,}
]
\addplot [semithick, color0, forget plot]
table [row sep=\\]{%
0	0.5 \\
1	0.629449436703876 \\
2	0.742141716744801 \\
3	0.840246044613553 \\
4	0.925650822501483 \\
5	1 \\
};
\nextgroupplot[
axis background/.style={fill=white!89.80392156862746!black},
axis line style={white},
height=\figureheight,
tick align=outside,
tick pos=left,
title={curve \#2},
width=\figurewidth,
x grid style={white},
xmajorgrids,
xmin=-0.25, xmax=5.25,
y grid style={white},
ymajorgrids,
ymin=0.475, ymax=1.025,
xticklabels={,,},
yticklabels={,,}
]
\addplot [semithick, color0, forget plot]
table [row sep=\\]{%
0	0.5 \\
1	0.5 \\
2	0.5 \\
3	0.7062994740159 \\
4	0.870039475052563 \\
5	1 \\
};
\nextgroupplot[
axis background/.style={fill=white!89.80392156862746!black},
axis line style={white},
height=\figureheight,
tick align=outside,
tick pos=left,
title={curve \#3},
width=\figurewidth,
x grid style={white},
xmajorgrids,
xmin=-0.25, xmax=5.25,
y grid style={white},
ymajorgrids,
ymin=0.475, ymax=1.025,
xticklabels={,,}
]
\addplot [semithick, color0, forget plot]
table [row sep=\\]{%
0	0.5 \\
1	0.5 \\
2	0.5 \\
3	0.5 \\
4	0.792893218813452 \\
5	1 \\
};
\nextgroupplot[
axis background/.style={fill=white!89.80392156862746!black},
axis line style={white},
height=\figureheight,
tick align=outside,
tick pos=left,
title={curve \#4},
width=\figurewidth,
x grid style={white},
xmajorgrids,
xmin=-0.25, xmax=5.25,
y grid style={white},
ymajorgrids,
ymin=0.475, ymax=1.025,
xticklabels={,,},
yticklabels={,,}
]
\addplot [semithick, color0, forget plot]
table [row sep=\\]{%
0	0.5 \\
1	0.500156631162693 \\
2	0.505155895716397 \\
3	0.542312339311059 \\
4	0.684241121587126 \\
5	1 \\
};
\nextgroupplot[
axis background/.style={fill=white!89.80392156862746!black},
axis line style={white},
height=\figureheight,
tick align=outside,
tick pos=left,
title={curve \#5},
width=\figurewidth,
x grid style={white},
xmajorgrids,
xmin=-0.25, xmax=5.25,
y grid style={white},
ymajorgrids,
ymin=0.475, ymax=1.025
]
\addplot [semithick, color0, forget plot]
table [row sep=\\]{%
0	0.5 \\
1	0.500000310068408 \\
2	0.500159052467644 \\
3	0.506242576182648 \\
4	0.583822107898522 \\
5	1 \\
};
\nextgroupplot[
axis background/.style={fill=white!89.80392156862746!black},
axis line style={white},
height=\figureheight,
tick align=outside,
tick pos=left,
title={curve \#6},
width=\figurewidth,
x grid style={white},
xmajorgrids,
xmin=-0.25, xmax=5.25,
y grid style={white},
ymajorgrids,
ymin=0.475, ymax=1.025,
yticklabels={,,}
]
\addplot [semithick, color0, forget plot]
table [row sep=\\]{%
0	0.5 \\
1	0.5 \\
2	0.5 \\
3	0.5 \\
4	0.5 \\
5	1 \\
};
\end{groupplot}

\end{tikzpicture}}
\caption{Learning depth 5 network on 2D cantor set of depth 5, with different approximation curves.
The figures show the values of the approximation curve
(denoted $P$) at different levels of the fractal.
Large values correspond to more weight.
In red is the accuracy of the best depth 5 network architecture trained on these distributions.} \label{fig:experiments_curve}
\end{center}
\vskip -0.2in
\end{figure}
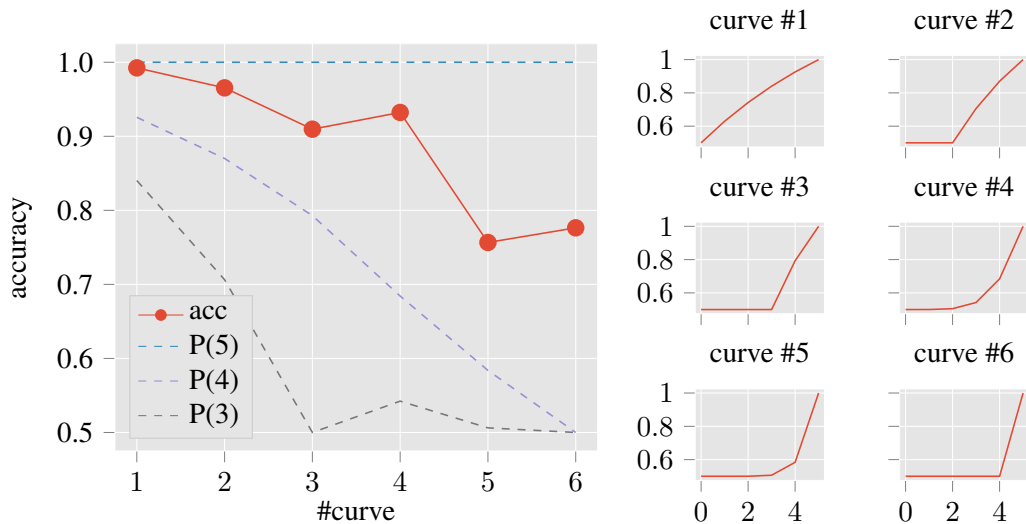

We perform the same experiments with different fractal structures
(\figref{fig:ifs} at the beginning of the paper shows
an illustration of the distributions we use).
Tables \ref{experiments_all_fractals}, \ref{experiments_all_fractals_cruves} in the appendix summarize the results of all the experiments.
We note that the effect of depth can be seen clearly in all fractal structures. The effect of the approximation curve is observed in the
Cantor set and the Pentaflake and Vicsek sets (fractals generated by an IFS with 5 transformations). In the Sierpinsky Triangle (generated with 3 transformations), the approximation curve
seems to have no effect when the width of the network is large enough.
This might be due to the fact that a depth 5 IFS with 3 transformations
generates a relatively small number of linear regions, making
the problem overall relatively easy, even when the underlying distribution
is hard.

\paragraph{Acknowledgements:} This research is supported by the European Research Council (TheoryDL project).

\clearpage

\bibliography{fractals}
\bibliographystyle{plain}

\clearpage
\onecolumn
\appendix

\section{Proof of \thmref{thm:depth_expressivity}}
To prove the theorem, we begin with two technical lemmas:
\begin{lemma}
\label{lem:box}
For every $\epsilon > 0$, there exists a neural-network of width $3d$ with two hidden-layers ($k=3d,t=3$)
such that
$\mathcal{N}_{\tW, \mB}(\vx) = \vx$ for $\vx \in [0,1]^d$,
and $\mathcal{N}_{\tW, \mB}(\vx) = 0$ for $\vx \in \reals^d$
with $d(\vx, [0,1]^d) = \min_{\vy \in [0,1]^d} \norm{\vx -\vy} > \epsilon$.
\end{lemma}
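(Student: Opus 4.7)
The plan is to build the network explicitly with two hidden layers. In the first hidden layer I place, for each coordinate $i \in \{1,\dots,d\}$, three neurons computing $a_i := \sigma(x_i)$, $b_i := \sigma(-x_i)$, and $c_i := \sigma(x_i-1)$, using all $3d$ of the allotted units. Two linear combinations of these then become available to the next layer: each raw coordinate $x_i = a_i - b_i$, and the global ``badness'' signal
\[
D(\vx) \;:=\; \sum_{j=1}^d (b_j + c_j) \;=\; \sum_{j=1}^d \max\{-x_j,\,0,\,x_j-1\} \;=\; \norm{\vx - \mathrm{clip}_{[0,1]^d}(\vx)}_1,
\]
i.e., the $\ell_1$ distance of $\vx$ from the box $[0,1]^d$.

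In the second hidden layer I use $2d$ neurons (padding with dummies if width exactly $3d$ is required): for each $i$, I set $u_i := \sigma(x_i - MD)$ and $v_i := \sigma(-x_i - MD)$, where $M := 1 + 1/\epsilon$. Both are affine in the first-layer activations. The output layer is the coordinate-wise linear readout $h_i(\vx) := u_i - v_i$, so the total depth is $t=3$ as required.

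Verifying the two required properties is then mechanical. On $[0,1]^d$ we have $D(\vx) = 0$, so $u_i = \sigma(x_i)$, $v_i = \sigma(-x_i)$ and $h_i(\vx) = x_i$, giving $\mathcal{N}_{\tW,\mB}(\vx) = \vx$. For the complementary region I use two elementary facts: since $\norm{\cdot}_1 \ge \norm{\cdot}_2$, the hypothesis $d(\vx,[0,1]^d) > \epsilon$ gives $D(\vx) > \epsilon$; and for every $i$, $|x_i| \le 1 + |x_i - \mathrm{clip}(x_i,[0,1])| \le 1 + D(\vx)$ by the triangle inequality (the clipped value has magnitude at most $1$). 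Combining, $MD(\vx) - |x_i| \ge (M-1)D(\vx) - 1 = D(\vx)/\epsilon - 1 > 0$, so both ReLUs in the definition of $h_i$ evaluate to zero and $h_i(\vx) = 0$.

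The main subtlety this construction is designed to handle is that a naive coordinate-wise trapezoidal ramp $g_i(x_i)$ (equal to $x_i$ on $[0,1]$ and zero outside $[-\epsilon,1+\epsilon]$) would fail to zero out coordinate $i$ when $x_i \in [0,1]$ while \emph{another} coordinate is what pushes $\vx$ far from the box. Gating every coordinate output by the same global signal $MD$ avoids this, and the uniform bound $|x_i| \le 1 + D$ is exactly what allows one $\vx$-independent (and $d$-independent) constant $M = 1 + 1/\epsilon$ to work everywhere. The only other care needed is the trivial conversion from the $\ell_2$ distance appearing in the hypothesis to the $\ell_1$ quantity that ReLUs compute naturally, which is free from $\norm{\cdot}_1 \ge \norm{\cdot}_2$.
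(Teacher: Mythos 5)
Your proof is correct and follows essentially the same construction as the paper's: compute $\sigma(x_i)$, $\sigma(-x_j)$, $\sigma(x_j-1)$ in the first hidden layer, form the $\ell_1$-distance-to-box signal $D$, and gate each coordinate in the second hidden layer by subtracting a large multiple of $D$ inside a ReLU. The only cosmetic difference is that you reconstruct $x_i$ via the symmetric pair $\sigma(x_i - MD) - \sigma(-x_i - MD)$ where the paper uses the one-sided $\sigma(\sigma(x_i) - ND)$ (valid since $x_i\ge 0$ on the box), and you make the constant $M = 1 + 1/\epsilon$ explicit where the paper leaves it as ``$N$ large enough.''
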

\begin{proof}
Let $N > 0$ be some constant, and observe the function:
\[
f_i(\vx) = \sigma(\sigma(x_i) - N \sum_{j=1}^d \sigma(-x_j) -N\sum_{j=1}^d \sigma(x_j-1))
\]
Notice that $f_i(\vx) = x_i$ for $\vx \in [0,1]^d$,
and that $f_i(\vx) = 0$ if $d(\vx, [0,1]^d) > \epsilon$, when taking $N$
to be large enough. Since $f(\vx) = (f_1(\vx), \dots, f_d(\vx))$ is
a two hidden layer neural-network of width $3d$, 
the required follows.
\end{proof}

\begin{lemma}
\label{lem:box_indicator}
For every $\gamma > 0$, there exists a neural-network of width $2d$ with two hidden-layers ($k=2d,t=3$) such that
$\mathcal{N}_{\tW, \mB}(\vx) = 1$ for $\vx \notin [0,1]^d$,
and $\mathcal{N}_{\tW, \mB}(\vx) = 0$ for $\vx \in [\gamma,1-\gamma]^d$.
\end{lemma}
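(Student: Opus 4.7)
The plan is to build a two-hidden-layer network whose first hidden layer detects ``how far outside $[0,1]^d$ the input is'' with a built-in slack of $\gamma$, and whose second hidden layer aggregates and clips this score into $\{0,1\}$. The key observation is that by shifting the ReLU thresholds by $\gamma$, we can turn the condition $\vx \notin [0,1]^d$ into a uniform lower bound on the summed activations (rather than just strict positivity), which is what makes a subsequent clip to $1$ possible.

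Concretely, for the first hidden layer I would use exactly $2d$ neurons computing, for each $i \in [d]$, the two activations $u_i = \sigma(\gamma - x_i)$ and $v_i = \sigma(x_i - 1 + \gamma)$. For $\vx \in [\gamma,1-\gamma]^d$ both vanish, so the sum $s := \sum_{i=1}^{d} (u_i + v_i)$ equals $0$. On the other hand, if $\vx \notin [0,1]^d$ then some coordinate satisfies $x_i < 0$ or $x_i > 1$, which forces $u_i > \gamma$ or $v_i > \gamma$, respectively, and hence $s > \gamma$. The $\gamma$-shift in the ReLU arguments is the only subtle ingredient: without it one would only obtain $s > 0$, which cannot be amplified to a fixed output value of $1$ by a continuous (Lipschitz) network.

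For the second hidden layer, also of width $2d$, I would use two active neurons computing $p = \sigma(s/\gamma)$ and $q = \sigma(s/\gamma - 1)$, with the remaining $2d-2$ neurons silenced by zero incoming weights and nonpositive bias. Note that $s$ is exactly a linear combination of the first-layer outputs, so both $p$ and $q$ are legitimate single-neuron activations of the second hidden layer. The scalar output layer then returns $p - q$, which coincides with $\min(s/\gamma,1)$ on $\{s \geq 0\}$. Verification is routine: on $[\gamma,1-\gamma]^d$ we have $s=0$, so the output is $0$; for $\vx \notin [0,1]^d$ we have $s/\gamma > 1$, so $p - q = (s/\gamma) - (s/\gamma - 1) = 1$. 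I do not expect any real obstacle; the main design choice is simply dictated by the distance between the two sets the lemma separates, which forces the network's Lipschitz constant to scale like $1/\gamma$.
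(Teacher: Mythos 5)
Your construction is correct and follows essentially the same approach as the paper's: the first hidden layer is identical (the $2d$ neurons $\sigma(\gamma-x_i)$, $\sigma(x_i-1+\gamma)$, and the key observation that the $\gamma$-shift yields the uniform bound $s>\gamma$ off $[0,1]^d$), and the only difference is the saturation gadget — the paper uses a single second-layer neuron $1-\sigma(1-Ns)$ with $N$ sufficiently large, while you use the clipped ramp $\sigma(s/\gamma)-\sigma(s/\gamma-1)$, a cosmetic variation that avoids the unspecified ``large $N$.''
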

\begin{proof}
Let $N > 0$ be some constant, and observe the function:
\[
\tilde{f}(\vx) = 1-\sigma(1-N \sum_{j=1}^d \sigma(\gamma-x_j) -N\sum_{j=1}^d \sigma(x_j-1+\gamma))
\]
Notice that $\tilde{f}(\vx) = 0$ for $\vx \in [\gamma,1-\gamma]^d$,
and that $\tilde{f}(\vx) = 0$
if $\vx \notin [0,1]^d$, when taking $N$
to be large enough. Since $\tilde{f}$ a two hidden layer neural-network of width $2d$, 
the required follows.
\end{proof}

The next lemmas will show how a single block of the network operates
on the set $K_n$:

\begin{lemma}
\label{lem:one_level}
There exists a neural-network of width $\max\{dr,3d\}$ with two hidden-layers
($k=3dr, t=3$)
such that for any $n$ we have:
\begin{enumerate}
\item $\mathcal{N}_{\tW, \mB}(K_n) \subseteq K_{n-1}$
\item $\mathcal{N}_{\tW, \mB}(K_1 \setminus K_n)  \subseteq
\mathcal{X} \setminus K_{n-1}$
\end{enumerate}
\end{lemma}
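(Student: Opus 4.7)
The plan is to build a single ``folding'' block that takes the $r$ images $F_1(K_0),\dots,F_r(K_0)$ composing $K_1$ and maps them all back onto $K_0$, sending the sub-image $F_i(K_{n-1})$ to $K_{n-1}$ and the ``removed'' sub-image $F_i(K_0)\setminus F_i(K_{n-1})$ to $K_0\setminus K_{n-1}$. Concretely, for each $i\in[r]$ I would construct a branch $g_i\colon\reals^d\to\reals^d$ such that $g_i(\vx)=F_i^{-1}(\vx)$ for $\vx\in F_i(K_0)$ and $g_i(\vx)=\vzero$ for $\vx\in F_j(K_0)$ with $j\ne i$. The network's output is $\sum_{i=1}^r g_i(\vx)$.

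Each branch is obtained by absorbing the affine map $F_i^{-1}$ into the first affine transformation of the subnetwork of \lemref{lem:box} (after a fixed change of coordinates identifying $K_0=[-1,1]^d$ with $[0,1]^d$). By \lemref{lem:box} this yields a width-$3d$, two-hidden-layer block that is the identity on $K_0$ and is exactly $\vzero$ whenever its input is farther than a prescribed $\epsilon'>0$ from $K_0$. Composing with $F_i^{-1}$, we get $g_i(\vx)=F_i^{-1}(\vx)$ when $F_i^{-1}(\vx)\in K_0$, i.e.\ when $\vx\in F_i(K_0)$, and $g_i(\vx)=\vzero$ whenever $d(F_i^{-1}(\vx),K_0)>\epsilon'$.

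To make the vanishing effective on the other images, I would invoke Assumption~\ref{asm:disjoint}: for $\vx\in F_j(K_0)$ with $j\ne i$ one has $d(\vx,F_i(K_0))>\epsilon$. Since $F_i^{-1}$ is affine with bounded operator norm $L_i:=\|(\mM^{(i)})^{-1}\|$, the distance expands by at most $L_i$, so $d(F_i^{-1}(\vx),K_0)\ge \epsilon/L_i$. Choosing $\epsilon'<\min_i \epsilon/L_i$ in the application of \lemref{lem:box} therefore forces $g_i(\vx)=\vzero$ for every $\vx\in F_j(K_0)$, $j\ne i$. This exact (not merely approximate) cancellation is the main technical point, and it hinges on both the disjointness margin of Assumption~\ref{asm:disjoint} and the ability of \lemref{lem:box} to drive outputs to exactly $\vzero$ outside a controllable neighborhood.

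Given the $g_i$'s, both conclusions follow immediately. If $\vx\in K_n=\bigcup_i F_i(K_{n-1})$, then by disjointness there is a unique $i^\star$ with $\vx\in F_{i^\star}(K_{n-1})$; every other branch vanishes, and the network returns $F_{i^\star}^{-1}(\vx)\in K_{n-1}$. If instead $\vx\in K_1\setminus K_n$, then $\vx\in F_{i^\star}(K_0)\setminus F_{i^\star}(K_{n-1})$ for a unique $i^\star$, so the output is $F_{i^\star}^{-1}(\vx)\in K_0\setminus K_{n-1}\subseteq \mathcal{X}\setminus K_{n-1}$. Counting parameters, each branch uses $3d$ neurons in the first hidden layer and $d$ in the second, so the total widths are $3dr$ and $dr$ respectively, and the depth is $t=3$, matching the $k=3dr$, $t=3$ architecture in the statement.
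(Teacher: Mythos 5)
Your construction is essentially the paper's: both evaluate $\sum_{i=1}^r f\left(F_i^{-1}(\vx)\right)$ with $f$ from \lemref{lem:box}, absorbing each $F_i^{-1}$ into the first affine layer, and both invoke Assumption~\ref{asm:disjoint} together with the contraction property of the $F_i$ to ensure at most one branch is active for $\vx\in K_1$. One subsidiary step is stated backwards: you claim $d(F_i^{-1}(\vx),K_0) \ge \epsilon/L_i$ because ``$F_i^{-1}$ expands distances by at most $L_i = \norm{(\mM^{(i)})^{-1}}$,'' but an upper bound on expansion cannot yield a lower bound on the image distance. The fact that actually does the work (which the paper encodes as a contradiction argument using $\norm{\vx - F_i(\vy)} \le \norm{F_i^{-1}(\vx)-\vy}$) is that $\mM^{(i)}$ is contractive, so $F_i^{-1}$ never \emph{decreases} distances; hence $d(F_i^{-1}(\vx),K_0) \ge \norm{\mM^{(i)}}^{-1} d(\vx,F_i(K_0)) > \epsilon$, which is stronger than your stated bound and makes any threshold $\epsilon' \le \epsilon$ work. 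With that correction your direct ``other branches vanish'' argument, the application on $K_n$ and $K_1\setminus K_n$, and the width counts $3dr$ and $dr$ for the two hidden layers all match the paper.
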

\begin{proof}
As an immediate corollary from \lemref{lem:box}, there exists
$f : \reals^d \rightarrow \reals^d$, that can be implemented by a neural network
with two hidden-layers and width $3d$, such that $f(\vx) = \vx$
for $\vx \in K_0$ and $f(\vx) = 0$
if $d(\vx, K_0) > \frac{\epsilon}{2}$.
Define the following function:
\[
g(\vx) = \sum_{i=1}^r f \left( (\mM^{(i)})^{-1}\vx-(\mM^{(i)})^{-1}\vv^{(i)} \right) =
\sum_{i=1}^r f \left( F_i^{-1}(\vx) \right)
\]

Notice that for every $\vx \in \mathcal{X}$ there is at most one $i \in [r]$
such that $f(F_i^{-1}(\vx)) > 0$. Indeed, assume there are $i\ne j \in [r]$
such that $f(F_i^{-1}(\vx)) > 0$ and $f(F_j^{-1}(\vx)) > 0$.
Therefore, $d(F_i^{-1}(\vx), K_0) \le \frac{\epsilon}{2}$ and
$d(F_j^{-1}(\vx), K_0) \le \frac{\epsilon}{2}$.
Therefore, there exist $\vy,\vz \in K_0$ such that
$\norm{F_i^{-1}(\vx)-\vy} \le \frac{\epsilon}{2}$ and
$\norm{F_j^{-1}(\vx)-\vz} \le \frac{\epsilon}{2}$.
From this we get:
\begin{align*}
\norm{\vx - F_i(\vy)} &= \norm{\vx-\mM^{(i)}\vy -\vv^{(i)}} \\
&= \norm{\mM^{(i)} \left(
(\mM^{(i)})^{-1} \vx - (\mM^{(i)})^{(-1)} \vv^{(i)} -\vy \right)} \\
&\le \norm{(\mM^{(i)})^{-1} \vx - (\mM^{(i)})^{(-1)} \vv^{(i)} -\vy} \\
&= \norm{F_i^{-1}(\vx) - \vy} \le \frac{\epsilon}{2}
\end{align*}
where we use the fact that $\mM^{(i)}$ is a contraction.
Similarly, we get that $\norm{\vx - F_j(\vz)} \le \frac{\epsilon}{2}$,
so this gives us $\norm{F_i(\vy) - F_j(\vz)} \le \epsilon$.
Since $\vy, \vz \in K_0$, this is contradiction to Assumption
\ref{asm:disjoint}.

We now show the following:
\begin{enumerate}
\item $g(K_n) \subseteq K_{n-1}$:

Let $\vx \in K_n$,
and denote $i \in [r]$ the unique $i$ for which
$\vx \in F_i(K_{n-1}) \subseteq F_i(K_0)$.
From the properties of $f$, we get that $f(F_i^{-1}(\vx)) = F_i^{-1}(\vx)$
and $f(F_j^{-1}(\vx)) = 0$ for $j \ne i$,
so $g(\vx) = f(F_i^{-1}(\vx)) = F_i^{-1} (\vx) \in K_{n-1}$.

\item $g(K_1 \setminus K_n) \subseteq
\mathcal{X} \setminus K_{n-1}$:

Let $\vx \in K_1 \setminus K_n$
and assume by contradiction that $g(\vx) \in K_{n-1}$.
Let $i \in [r]$ be the unique $i$ such that
$\vx \in F_i(K_0)$ and we have seen that in this case
$g(\vx) = F_i^{-1}(\vx)$, so $F_i^{-1}(\vx) \in K_{n-1}$
and therefore $\vx \in F_i(K_{n-1}) \subseteq K_n$ in contradiction
to the assumption.
\end{enumerate}

Since $g$ can be implemented with a neural network of width $3dr$
and two hidden-layer, this completes the proof of the lemma.
\end{proof}

\begin{lemma}
\label{lem:one_level_comp}
There exists a neural-network of width $2dr$ with two hidden-layers
($k=2dr, t=3$) such that for any $n$ we have:
\begin{enumerate}
\item $\mathcal{N}_{\tW, \mB}(\mathcal{X} \setminus K_1) = \{1\}$
\item $\mathcal{N}_{\tW, \mB}(K_1^\gamma) = \{0\}$
\end{enumerate}
\end{lemma}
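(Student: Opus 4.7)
The plan is to build an ``OR'' of $r$ soft indicators, one per image $F_i(K_0)$, and assemble them into a two hidden layer network of width $2dr$.

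The key primitive is the inner expression that appears inside $\tilde f$ in the proof of \lemref{lem:box_indicator}: the map $\sigma(1 - N \sum_j [\sigma(\gamma - x_j) + \sigma(x_j - 1 + \gamma)])$ equals $1$ on the shrunk interior of the box and vanishes off the box. I will call such a function a soft box indicator. First I adapt it to the box $K_0 = [-1,1]^d$ by an affine shift, and then for each $i \in [r]$ I form a soft indicator $\tilde h_i$ of $F_i(K_0)$ by precomposing with the affine map $F_i^{-1}$ (which only changes the first-layer weights), choosing an inner margin $\gamma' > 0$ small enough that $F_i^{-1}(F_i(K_0)^\gamma) \subseteq K_0^{\gamma'}$. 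Because each $F_i$ is a contraction, $F_i^{-1}$ does not shrink distances, so such a $\gamma'$ always exists. Each $\tilde h_i$ uses $2d$ neurons in the first hidden layer and $1$ neuron in the second hidden layer.

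The network is then $\mathcal{N}_{\tW,\mB}(\vx) = 1 - \sum_{i=1}^r \tilde h_i(\vx)$: the first hidden layer has $2dr$ neurons, the second hidden layer has $r$ neurons, and the linear output layer realizes the $1-\sum$ combination, giving total width $2dr$. To verify the two required properties: if $\vx \notin K_1$, then $\vx$ lies outside every $F_i(K_0)$, so each $\tilde h_i(\vx) = 0$ and the output is $1$. If $\vx \in K_1^\gamma$, then the ball $B_\gamma(\vx)$ lies in $K_1$, and since each $F_i(K_0)$ is convex (an affine image of $[-1,1]^d$) and the pieces are pairwise separated by distance at least $\epsilon$ by Assumption \ref{asm:disjoint}, for $\gamma < \epsilon/2$ this ball must sit in exactly one $F_j(K_0)$. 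By the choice of $\gamma'$, $\tilde h_j(\vx) = 1$ and $\tilde h_i(\vx) = 0$ for $i \neq j$, so the output is $0$.

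The main obstacle will be that the final layer of the network is linear, so the logical ``OR'' over the $r$ pieces is forced to be a plain \emph{sum}; this would overshoot $1$ if the $F_i(K_0)$ overlapped. Assumption \ref{asm:disjoint} is exactly what rescues this: on $K_1^\gamma$ exactly one $\tilde h_i$ activates, so the simple sum works and no additional layer implementing a min or or-gate is required, keeping the depth at two hidden layers and the width at $2dr$.
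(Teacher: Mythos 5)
Your construction is correct and is essentially the paper's: the paper writes $\tilde g(\vx) = 1 - r + \sum_{i=1}^r \tilde f(F_i^{-1}(\vx))$ with $\tilde f$ from \lemref{lem:box_indicator}, and since $\tilde f = 1 - \tilde h$ (your soft box indicator), this is algebraically identical to your $1 - \sum_i \tilde h_i$. You are in fact slightly more explicit than the paper on why $\tilde h_j(\vx)=1$ when $\vx \in K_1^\gamma$: you spell out that the contraction property ensures $F_j^{-1}$ carries the $\gamma$-interior of $F_j(K_0)$ into the $\gamma$-interior of $K_0$, a point the paper leaves implicit.
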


\begin{proof}
As a corollary of \lemref{lem:box_indicator}, there exists
$\tilde{f}: \reals^d \to \reals$,
a two hidden-layer neural-network of width $2d$,
such that $\tilde{f}(\vx) = 1$ for $\vx \notin K_0$ and
$\tilde{f}(\vx) = 0$ for $\vx \in K_0^\gamma$.
Now, define:
\[
\tilde{g}(\vx) = 1 -r + \sum_{i=1}^r \tilde{f} \left( F_i^{-1}(\vx) \right)
\]
We show the following:
\begin{enumerate}
\item $\tilde{g}(\mathcal{X} \setminus K_1) = \{1\}$:

Let $\vx \notin K_1 = \cup_i F_i(K_0)$, then for every $i$
we have $\vx \notin F_i(K_0)$ and hence $F_i^{-1}(\vx) \notin K_0$
so $\tilde{f}(F_i^{-1}(\vx)) = 1$ and so $\tilde{g}(\vx) = 1$.

\item $\tilde{g}(K_1^\gamma) = \{0\}$:

Let $\vx \in K_1^\gamma$, and let $i$ be the unique index such that
$\vx \in F_i(K_0)$. So we have $\tilde{f}(F_i^{-1}(\vx)) = 0$
and for all $j \ne i$ we have $\tilde{f}(F_j^{-1}(\vx)) = 1$,
and therefore $\tilde{g}(\vx) = 0$.
\end{enumerate}
And $\tilde{g}$ can be implemented by a width $2dr$ two hidden-layer
network.
\end{proof}

\begin{proof} of \thmref{thm:depth_expressivity}
Let $g, \tilde{g}$ as defined in the previous lemmas.
Denote $h_0 : \reals^d \to \reals^{d+1}$ the function:
\[
h_0(\vx) = [g(\vx), \tilde{g}(\vx)]
\]
and denote $h : \reals^{d+1} \to \reals^{d+1}$ the function:
\[
h(\vx) = [g(\vx_{1 \dots d}), x_{d+1}+\tilde{g}(\vx_{1 \dots d})]
\]
Denote $h^n$ the composition of $h$ on itself $n$ times,
and observe the network defined by $H = h^{n-1} \circ h_0$.
Note that $H$ satisfies the following properties:
\begin{enumerate}
\item For $\vx \in K_n^\gamma$ we have $H(\vx)_{d+1} = 0$:
indeed, by iteratively applying the previous lemmas,
we get that $g^j(\vx) \in K_1^\gamma$ for every $j \le n-1$,
and therefore $\tilde{g}(g^j(\vx)) = 0$ for every $j \le n-1$.
Observe that: $H(\vx)_{d+1} = \sum_{j=1}^{n-1} \tilde{g}(g^j(\vx)) = 0$.

\item For $\vx \notin K_n$ we have $H(\vx)_{d+1} \ge 1$:
there exists $K_j$ such that $\vx \in K_j \setminus K_{j+1}$,
so by applying \ref{lem:one_level} we get $g^j(\vx) \notin K_1$,
so $\tilde{g}(g^j(\vx)) = 1$, and therefore $H(\vx)_{d+1} \ge 1$
(since the summation is over positive values).
\end{enumerate}

Therefore, composing $H(\vx)$ with a linear threshold on $H(\vx)_{d+1}$
gives a network as required. Since ever block of $H$ is has two
hidden-layers of width $5dr$, we get that this network has depth $2n+1$
and width $5dr$.
\end{proof}

\section{Proof of \crlref{crl:depth_expressivity_multi}}
\begin{proof}
As mentioned, we can rewrite the IFS with $r^s$ transformations,
generating $K_{s \cdot \floor{n/s}}$ in $\floor{n/s}$
iterations. Therefore, using the construction of
\thmref{thm:depth_expressivity}, we have a network of depth 
$2 \floor{\frac{n}{s}}$ and width $5dr^s$ that maps $K_{\floor{n/s}}$
to $K_0$, and therefore maps $K_n$ to $K_{n-\floor{n/s}}$.
Now, a two hidden-layer network of width at most $dr^s$
can separate $K_{n-\floor{n/s}}^\gamma$ from
$\mathcal{X} \setminus K_{n-\floor{n/s}}$.
This constructs a network of depth $2 \floor{n/s}+2$
and width $5dr^s$ that achieves the required.
\end{proof}

\section{Proof of \thmref{thm:approximation_lower_bound}}
\begin{proof}
Using again \cite{montufar2017notes}, we get that the number of linear
regions in $\net$ is $r^{st}$. This means that $\net$ crosses zero at most
$r^{st}$ times. Now, fix $n > j > st$, and notice that $K_j$ is a union
of $r^{j}$ intervals, so $K_j = \cup_{i=1}^{r^j} I_i$,
for intervals $I_i$.
By our assumption, we get that for every $i$,
$\prob{(x,y) \sim \mathcal{D}_n}{x \in I_i ~and~ y=-1} = p$ for some $p$,
and from this we get:
\[\prob{(x,y) \sim \mathcal{D}_n}{x \in I_i ~and~ y=-1}
= r^{-j} \prob{(x,y) \sim \mathcal{D}_n}{x \in K_j ~and~ y =-1}\]

We get that there are at most $r^{st}$ intervals
of $K_j$ in which $\net$ crosses zero. Denote $J \subseteq [r^j]$
the subset of intervals on which $\sign(\net)$ is constant, 
and for every $i \in J$ we denote $\hat{y}_i$ such that
$\sign(\net(I_i)) = \{\hat{y}_i\}$.
Notice that:
\begin{align*}
&\prob{(x,y) \sim \mathcal{D}_n}{x \in K_j ~and~ y=1} 
= \frac{1}{2} \\
&\prob{(x,y) \sim \mathcal{D}_n}{x \in K_j ~and~ y=-1} 
= 1 - P(j)
\end{align*}
So the optimal choice for every $\hat{y}_i$ is $1$.
Then we have:
\begin{align*}
\prob{(x,y) \sim \mathcal{D}_n}{\sign(\net(x)) \ne y}
&= \prob{(x,y) \sim \mathcal{D}_n}{\sign(\net(x)) \ne -1
~and~ x \notin K_j}\\
&+ \prob{(x,y) \sim \mathcal{D}_n}{\sign(\net(x)) \ne y
~and~ x \in K_j}\\
&\ge \sum_{i \in [r^j]} \prob{(x,y) \sim \mathcal{D}_n}{\sign(\net(x)) \ne y
~and~ x \in I_i} \\
&\ge \sum_{i \in J} \prob{(x,y) \sim \mathcal{D}_n}{\hat{y}_i \ne y
~and~ x \in I_i} \\
&\ge \sum_{i \in J} \prob{(x,y) \sim \mathcal{D}_n}{y = -1
~and~ x \in I_i} \\
&= |J|r^{-j} \prob{(x,y) \sim \mathcal{D}_n}{x \in K_j ~and~ y=-1} \\
&\ge (1 - r^{st-j})(1-P(j))
\end{align*}
\end{proof}

\section{Proof of \thmref{thm:hardness}}

Observe that for every $n\p$, we can write $C_{n\p}$ as union of $2^{n\p}$ intervals, so
$C_{n\p} = \cup_j I_j$.
We can observe the distribution limited to each of these intervals,
and get the following:

\begin{lemma}
\label{lem:conditional_expectation}
Let $\mathcal{D}_n$ be some cantor distribution (as defined in the paper).
Then:
\begin{align*}
&\abs{\mean{(x,y) \sim \mathcal{D}_n}{y \given x \in I_j}} \le 
2\left( P(n\p)-\frac{1}{2} \right) \\
&\abs{\mean{(x,y) \sim \mathcal{D}_n}{xy \given x \in I_j}} \le 
2\left( P(n\p)-\frac{1}{2} \right) \\
\end{align*}
\end{lemma}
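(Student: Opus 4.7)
The plan is to exploit the self-similar structure of the Cantor distribution to compute both conditional expectations (nearly) explicitly, and then invoke symmetry to reduce the second one to essentially the first.

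First, I would compute the joint probabilities $\Pr[x \in I_j, y = \pm 1]$ via the recursive symmetry of the IFS. Since $I_j$ is one of $2^{n'}$ congruent intervals making up $C_{n'}$, and the positive distribution $\mathcal{D}_n^+$ is supported inside $C_n \subseteq C_{n'}$ and distributes its total mass $\tfrac{1}{2}$ equally across these intervals by self-similarity, we get $\Pr[x \in I_j, y = +1] = \tfrac{1}{2} \cdot 2^{-n'}$. For negatives, note that $E_i \cap I_j = \emptyset$ when $i \le n'$ (since $E_i \subseteq C_0 \setminus C_i$ and $I_j \subseteq C_{n'} \subseteq C_i$), while for $i > n'$ the set $E_i \cap I_j$ is a rescaled copy of $E_{i-n'}$ that carries exactly a $2^{-n'}$-fraction of $E_i$'s mass. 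Summing over $i > n'$ gives $\Pr[x \in I_j, y = -1] = 2^{-n'} \cdot \tfrac{1}{2}\sum_{i > n'} p_i = 2^{-n'}(1 - P(n'))$, and hence $\Pr[x \in I_j] = 2^{-n'}\bigl(\tfrac{3}{2} - P(n')\bigr)$.

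For the first inequality the ratio is immediate:
\[
\mathbb{E}[y \mid x \in I_j] = \frac{P(n') - \tfrac{1}{2}}{\tfrac{3}{2} - P(n')},
\]
and since $P(n') \le 1$ the denominator is at least $\tfrac{1}{2}$, yielding the bound $2\bigl(P(n') - \tfrac{1}{2}\bigr)$.

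For the second inequality, the key observation is that the symmetry of the Cantor construction makes the conditional mean of $x$ inside $I_j$ independent of the label. Both contractions $F_1, F_2$ are mirror images across the midpoint of $C_0$, so $C_n \cap I_j$ (and its $\gamma$-interior $C_n^\gamma \cap I_j$), as well as each $E_i \cap I_j$ for $i > n'$, is a rescaled-shifted copy of the corresponding set in $C_0$ and is therefore symmetric about the center $c_j$ of $I_j$. Consequently, $\mathbb{E}[x \mid x \in I_j, y = +1] = \mathbb{E}[x \mid x \in I_j, y = -1] = c_j$, so
\[
\mathbb{E}\bigl[xy\,\mathbf{1}[x \in I_j]\bigr] = c_j\bigl(\Pr[x \in I_j, y=+1] - \Pr[x \in I_j, y=-1]\bigr).
\]
Dividing by $\Pr[x \in I_j]$, using $|c_j| \le 1$ (since $I_j \subseteq [0,1]$) and the same denominator bound as before, yields $|\mathbb{E}[xy \mid x \in I_j]| \le 2\bigl(P(n') - \tfrac{1}{2}\bigr)$.

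The main obstacle I anticipate is the symmetry step: one must carefully unfold the composition of IFS maps used to reach $I_j$ and verify that the resulting scaled-and-shifted Cantor structure is mirror-symmetric about $c_j$, and also that the $\gamma$-interior operation preserves this symmetry (so that $C_n^\gamma \cap I_j$ remains symmetric). Once the symmetry is established, the equal-splitting of probability mass across intervals and the routine algebra above close the argument.
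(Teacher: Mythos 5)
Your proposal is correct and follows essentially the same route as the paper: it computes the joint probabilities $\Pr[x\in I_j,\,y=\pm1]$ via the self-similar mass splitting, bounds $\bigl(\tfrac{3}{2}-P(n')\bigr)^{-1}\le 2$, and invokes the symmetry of $C_n\cap I_j$ and each $E_i\cap I_j$ about the center $c_j$ to show both conditional means of $x$ equal $c_j$. The only addition is your explicit note about the $\gamma$-interior preserving symmetry, which the paper implicitly assumes; the substance and key lemmas are the same.
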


\begin{proof}
Let $I_j$ be some interval of $C_{n\p}$, and let $c_j$ be the central point of $I_j$.
Notice that by definition of the distribution we have:
\begin{align*}
&\prob{(x,y) \sim \mathcal{D}_n}{y = 1 ~and~ x \in I_j} = 2^{-n\p-1} \\
&\prob{(x,y) \sim \mathcal{D}_n}{y = -1 ~and~ x \in I_j} =
2^{-n\p-1}(1 -  \sum_{i=1}^{n\p} p_i) = 2^{-n\p}(1 -P(n\p))
\end{align*}
So we get that $\prob{(x,y) \sim \mathcal{D}_n}{x \in I_j} =
2^{-n\p}(\frac{3}{2} -P(n\p))$, and therefore:
\begin{align*}
\abs{\mean{(x,y) \sim \mathcal{D}_n}{y \given x \in I_j}} &=
\abs{\prob{(x,y) \sim \mathcal{D}_n}{y = 1 \given x \in I_j}
-\prob{(x,y) \sim \mathcal{D}_n}{y = -1 \given x \in I_j}} \\
&= \abs{(P(n\p)-\frac{1}{2})(\frac{3}{2}-P(n\p))^{-1}} \\
&\le 2\left( P(n\p)-\frac{1}{2} \right)
\end{align*}

Notice that from the structure of the set $C_n$, the average of all the points
in $I_j \cap C_n$ is exactly the central point $c_j$ (this is due to the symmetry of
the cantor set around its central point). Similarly, we get that each level
of the negative distribution, $E_i := C_{i-1} \setminus C_i$, its average
is also $c_j$. So we get:
\[
\mean{(x,y) \sim \mathcal{D}_n}{x \given x \in I_j ~and~ y = -1} =
\mean{(x,y) \sim \mathcal{D}_n}{x \given x \in I_j ~and~ y = 1} = c_j
\]
Therefore, we get that:
\begin{align*}
\mean{(x,y) \sim \mathcal{D}_n}{xy \given x \in I_j}
&= c_j\prob{(x,y) \sim \mathcal{D}_n}{y=1 \given x \in I_j} \\
&-c_j\prob{(x,y) \sim \mathcal{D}_n}{y=-1 \given x \in I_j} \\
&= c_j(P(n\p)- \frac{1}{2}) (\frac{3}{2} - P(n\p))^{-1}
\end{align*}
So we have:
\begin{align*}
\abs{\mean{(x,y) \sim \mathcal{D}_n}{xy \given x \in I_j}}
\le 2 \left( P(n\p)- \frac{1}{2} \right)
\end{align*}

\end{proof}

Using this result, we get the following lemma:
\begin{lemma}
\label{lem:zero_grad}
Let $g: \reals \to \reals^k, f: \reals^k \to \reals$ two functions,
and let $\mW \in \reals^{k \times k}, \vc \in \reals^k$,
such that for every $j$,
$g$ is affine on $I_j$ and $f$ is affine on
$\mW g(I_j) + \vc \subseteq \reals^k$.
For every $j$, denote $\vu_j, \vv_j, \va_j, \vb_j \in \reals^k$ such that
for every $\vx \in I_j$:
\[
g(x) = x\vu_j + \va_j, ~
f(\mW g(x) + \vc) = \vv_j^\top (\mW g(x) + \vc) + \vb_j
\]
Assume that $\norm{\vu_j}_\infty, \norm{\vv_j}_\infty,
\norm{\va_j}_\infty,\norm{\vb_j}_\infty \le 1$.
Denote $h : \reals \to \reals$ s.t
$h(x) = f(\mW g(x) + \vc)$.
Then the following holds:
\begin{align*}
&\norm{\mean{(x,y) \sim \mathcal{D}_F}{-y\frac{\partial}{\partial \mW}h(x)
\given x \in C_{n\p}}}_{\max} \le 4 \left( P(n\p)- \frac{1}{2} \right)  \\
&\norm{\mean{(x,y) \sim \mathcal{D}_F}{-y\frac{\partial}{\partial \vc}h(x)
\given x \in C_{n\p}}}_{\infty} \le 2 \left( P(n\p)- \frac{1}{2} \right)  \\
\end{align*}
Where for matrix $\mA$ we denote $\norm{\mA}_{\max} = \max_{i,j} |a_{i,j}|$.
\end{lemma}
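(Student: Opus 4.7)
The plan is to exploit the piecewise-affine structure of $h$ on the intervals $\{I_j\}$ of $C_{n'}$, reduce the conditional expected gradient on each $I_j$ to the two conditional moments controlled by Lemma~\ref{lem:conditional_expectation}, and finally take a convex combination over the intervals.

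First I would compute the gradients explicitly on a fixed interval $I_j$. Substituting $g(x) = x\vu_j + \va_j$ into the affine expression for $f(\mW g(x) + \vc) = \vv_j^\top(\mW g(x) + \vc) + \vb_j$ gives
\[
h(x) \;=\; x\,\vv_j^\top \mW \vu_j \;+\; \vv_j^\top \mW \va_j \;+\; \vv_j^\top \vc \;+\; \vb_j \qquad \text{for } x \in I_j,
\]
so that $\tfrac{\partial h}{\partial \mW}(x) = x\,\vv_j \vu_j^\top + \vv_j \va_j^\top$ and $\tfrac{\partial h}{\partial \vc}(x) = \vv_j$ on $I_j$. The dependence on $x$ is through a single scalar factor, which is the whole point: the conditional expectation couples only to $\mean{}{y \given x \in I_j}$ and $\mean{}{xy \given x \in I_j}$.

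Next I would write
\[
\mean{}{-y\,\tfrac{\partial h}{\partial \mW}(x) \;\big|\; x \in I_j} \;=\; -\,\mean{}{xy \given x \in I_j}\,\vv_j \vu_j^\top \;-\; \mean{}{y \given x \in I_j}\,\vv_j \va_j^\top,
\]
and similarly $\mean{}{-y\,\tfrac{\partial h}{\partial \vc}(x) \given x \in I_j} = -\mean{}{y \given x \in I_j}\,\vv_j$. Taking entrywise absolute values, the bounds $\norm{\vu_j}_\infty, \norm{\vv_j}_\infty, \norm{\va_j}_\infty \le 1$ turn each outer-product entry (or $\vv_j$ entry) into a factor $\le 1$. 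Applying Lemma~\ref{lem:conditional_expectation} to each of the two scalar moments gives an entrywise $\max$-bound of $2(P(n')-\tfrac{1}{2})+2(P(n')-\tfrac{1}{2}) = 4(P(n')-\tfrac{1}{2})$ for the $\mW$-gradient and $2(P(n')-\tfrac{1}{2})$ for the $\vc$-gradient, uniformly in $j$.

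Finally, the overall conditional expectation decomposes as
\[
\mean{}{-y\,\tfrac{\partial h}{\partial \mW}(x) \given x \in C_{n'}} \;=\; \sum_j \Pr[x \in I_j \mid x \in C_{n'}]\,\mean{}{-y\,\tfrac{\partial h}{\partial \mW}(x) \given x \in I_j},
\]
and likewise for $\vc$. Since both $\norm{\cdot}_{\max}$ and $\norm{\cdot}_\infty$ are sub-additive and the weights form a probability vector, the convex combination preserves the per-interval bound, yielding the two stated inequalities. There is no real obstacle in this argument; the only subtlety worth being careful about is that the symmetry property used in Lemma~\ref{lem:conditional_expectation} (giving a bound on $\mean{}{xy \given x \in I_j}$ with no extra factor depending on $|c_j|$) is already absorbed into the factor of $2$ appearing there, so one only has to check that the triangle inequality is applied entrywise rather than to operator norms.
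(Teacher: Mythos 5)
Your proof is correct and follows essentially the same route as the paper's: compute the per-interval affine gradients, reduce to the two conditional moments $\mean{}{y \given x \in I_j}$ and $\mean{}{xy \given x \in I_j}$ controlled by Lemma~\ref{lem:conditional_expectation}, and take a convex combination over the intervals of $C_{n'}$. The only cosmetic difference is that you write the $\mW$-gradient as $x\,\vv_j\vu_j^\top + \vv_j\va_j^\top$ while the paper writes the transpose $x\,\vu_j\vv_j^\top + \va_j\vv_j^\top$; this is immaterial since $\norm{\cdot}_{\max}$ is transpose-invariant.
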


\begin{proof}
For every $x \in I_j$ it holds that:
\begin{align*}
&\frac{\partial}{\partial \mW}h(x)
= \frac{\partial}{\partial \mW}
\left[ \vv_j^\top (\mW( \vu_j x + \va_j) + \vc) + \vb_j \right]
= \vu_j \vv_j^\top x + \va_j \vv_j^\top \\
&\frac{\partial}{\partial \vc}h(x)
= \frac{\partial}{\partial \vc}
\left[ \vv_j^\top (\mW( \vu_j x + \va_j) + \vc) + \vb_j \right]
= \vv_j \\
\end{align*}
Using \lemref{lem:conditional_expectation}, we get:
\begin{align*}
\norm{\mean{(x,y) \sim \mathcal{D}_F}
{-y\frac{\partial}{\partial \mW}h(x) \given x \in I_j}}_{\max}
&= \norm{\mean{(x,y) \sim \mathcal{D}_F}
{-y(\vu_j \vv_j^\top x + \va_j \vv_j^\top) \given x \in I_j}}_{\max} \\
&\le \abs{\mean{(x,y) \sim \mathcal{D}_F}
{-y x  \given x \in I_j}} \cdot\norm{\vu_j \vv_j^\top}_{\max} \\
&+ \abs{\mean{(x,y) \sim \mathcal{D}_F}{-y \given x \in I_j}} \cdot \norm{\va_j \vv_j^\top}_{\max} \\
&\le 4 \left( P(n\p)- \frac{1}{2} \right)  \\
\end{align*}
\begin{align*}
\norm{\mean{(x,y) \sim \mathcal{D}_F}
{-y\frac{\partial}{\partial \vc}h(x) \given x \in I_j}}_{\infty}
&= \norm{\mean{(x,y) \sim \mathcal{D}_F}
{-y \vv_j \given x \in I_j}}_{\infty} \\
&= \abs{\mean{(x,y) \sim \mathcal{D}_F}{-y \given x \in I_j}} \cdot
\norm{\vv_j}_{\infty} \le 2 \left( P(n\p)- \frac{1}{2} \right)  \\
\end{align*}
Finally, from this we get:
\begin{align*}
\norm{\mean{(x,y) \sim \mathcal{D}_F}{-y\frac{\partial}{\partial \mW}h(x)
\given x \in C_{n\p}}}_{\max}
&\le \sum_j \prob{}{x \in I_j}
\norm{\mean{(x,y) \sim \mathcal{D}_F}
{-y\frac{\partial}{\partial \mW}h(x) \given x \in I_j}}_{\max} \\
&\le 4 \left( P(n\p)- \frac{1}{2} \right) \\
\end{align*}
\begin{align*}
\norm{\mean{(x,y) \sim \mathcal{D}_F}{-y\frac{\partial}{\partial \vc}h(x)
\given x \in C_{n\p}}}_{\infty}
&\le \sum_j \prob{}{x \in I_j}
\norm{\mean{(x,y) \sim \mathcal{D}_F}
{-y\frac{\partial}{\partial \vc}h(x) \given x \in I_j}}_{\infty} \\
&\le 2 \left( P(n\p)- \frac{1}{2} \right) \\
\end{align*}
\end{proof}

Now, we need to show that with high probability over the initialization
of the network, every layer is affine on $I_j$-s.

\begin{lemma}
\label{lem:affine}
Fix $\delta \in (0,1)$, and let $s \le k$.
Let $g : \reals \to \reals^s$ such that for every $j$, $g$ is affine
and non-expansive on $I_j$ (w.r.t to $\norm{\cdot}_\infty$).
Let $\mW \in \reals^{k \times s}$
a random matrix such that every entry is initialized uniformly from
$[-\frac{1}{2s},\frac{1}{2s}]$,
and let $b > 2k^2 \left(\frac{2}{3}\right)^{n\p} \delta^{-1}$, some fixed bias.
Denote $h(x) := \psi(\mW g(x)+b)$, for some $\psi$ that is affine on
every interval that is bounded away from zero.
Then with probability at least
$1-\delta$, for every $j$, $h(x)$ is affine and non-expansive on $I_j$.
\end{lemma}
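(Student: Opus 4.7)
The plan is to ensure that, with probability at least $1-\delta$ over the draw of $\mW$, for every interval $I_j$ of $C_{n\p}$ and every row $i \in [k]$ the preactivation $z_i(x) := \mW_i g(x) + b$ stays strictly positive on $I_j$. Once this holds, $\psi$ (being affine on any interval disjoint from zero) acts affinely on each $z_i(I_j)$, so $h$ is affine on every $I_j$; a deterministic slope bound on $\mW_i \vu_j$ will then yield the non-expansiveness claim at the end.

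\textbf{Deterministic reduction.} On $I_j$, the affine hypothesis gives $g(x) = \vu_j x + \va_j$ with $\|\vu_j\|_\infty \le 1$ from non-expansiveness. Hence $z_i$ is affine in $x$ on $I_j$ with slope $a_{ij} := \mW_i \vu_j$, and since every entry of $\mW$ lies in $[-1/(2s), 1/(2s)]$,
\[
|a_{ij}| \le \|\mW_i\|_1 \|\vu_j\|_\infty \le s \cdot \tfrac{1}{2s} = \tfrac12.
\]
The total variation of $z_i$ over $I_j$ is therefore at most $|a_{ij}|\cdot|I_j| \le 3^{-n\p}/2$, so $z_i > 0$ throughout $I_j$ as soon as $z_i(c_j) > 3^{-n\p}/4$, where $c_j$ is the midpoint of $I_j$. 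Writing $B_{ij} := \{z_i(c_j) \le 3^{-n\p}/4\}$, it suffices to bound $\Pr\big[\bigcup_{i,j} B_{ij}\big]$.

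\textbf{Per-pair probability bound.} I would case-split on $G_j := \|g(c_j)\|_\infty$. If $G_j \le 2b - 3^{-n\p}/2$, then deterministically $|\mW_i g(c_j)| \le \|\mW_i\|_1 G_j \le G_j/2$, so $z_i(c_j) \ge b - G_j/2 \ge 3^{-n\p}/4$ and $B_{ij}$ cannot occur. Otherwise $G_j > b$ (in the regime of $b$ of interest); pick $\ell^* \in \arg\max_\ell |g_\ell(c_j)|$ and \emph{condition on all entries $\{W_{i,\ell}\}_{\ell \neq \ell^*}$}. Conditionally, $z_i(c_j)$ is an affine function of the single variable $W_{i,\ell^*}$ with slope $g_{\ell^*}(c_j)$, and $W_{i,\ell^*}$ is uniform on an interval of length $1/s$; its conditional (and hence unconditional) density is therefore at most $s/G_j \le s/b$. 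Since $B_{ij}$ requires $z_i(c_j)$ to lie in an interval of length at most $3^{-n\p}/2$,
\[
\Pr[B_{ij}] \le \frac{s \cdot 3^{-n\p}}{2b}.
\]

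\textbf{Union bound and conclusion.} There are $k \cdot 2^{n\p}$ pairs $(i,j)$, so
\[
\Pr\Big[\bigcup_{i,j} B_{ij}\Big] \le k \cdot 2^{n\p} \cdot \frac{s \cdot 3^{-n\p}}{2b} = \frac{ks(2/3)^{n\p}}{2b} \le \delta,
\]
using $s \le k$ together with the hypothesis $b > 2k^2 (2/3)^{n\p}/\delta$. On the complementary good event, for every $j$ the preactivation is strictly positive in every coordinate throughout $I_j$, so $h$ is affine on $I_j$ with slope of the form (ReLU indicator)$\cdot\,\mW_i \vu_j$ in each row; its $\ell^\infty$-norm is at most $\max_i |\mW_i \vu_j| \le 1/2 \le 1$, giving non-expansiveness.

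\textbf{Main obstacle.} The probability bound is the only nontrivial step. Naive Markov or Chebyshev estimates on the scalar $\mW_i g(c_j)$ give bounds of order $1/\sqrt{s}$ or $1/b$ that cannot absorb the $2^{n\p}$ factor from the intervals. The crucial idea is the conditioning-on-other-coordinates trick: by freezing all but one coordinate, the anti-concentration for a sum of independent uniforms is reduced to anti-concentration for a single uniform, producing a density of order $s/G_j$. This is exactly what converts the \emph{interval length} $3^{-n\p}$ (rather than a generic $O(1)$ length) into the probability bound, and hence what makes the factor $(2/3)^{n\p}$ in the bias condition work out.
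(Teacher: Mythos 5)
Your argument uses the same key ingredients as the paper's proof of this lemma — the case split on $G_j := \|g(c_j)\|_\infty$, the trick of conditioning on all but one coordinate of $\vw_i$ to obtain a conditional density bound of order $s/G_j$, and the union bound over $k \cdot 2^{n'}$ pairs — but there is a genuine gap in how the bad event is set up. You aim to show that the preactivation $z_i(x) = \vw_i^\top g(x) + b$ stays \emph{strictly positive} on every $I_j$, and accordingly define $B_{ij} := \{z_i(c_j) \le 3^{-n'}/4\}$, a one-sided (half-line) event. Your probability bound, however, implicitly treats $B_{ij}$ as if it were a bounded interval: ``Since $B_{ij}$ requires $z_i(c_j)$ to lie in an interval of length at most $3^{-n'}/2$.'' That is not true of the one-sided event you defined. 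The lemma places no upper bound on $\|g\|_\infty$; in case 2, where $G_j$ is large, the preactivation $z_i(c_j)$ ranges over an interval of width $G_j/s$ that can extend well below zero, so ``strictly positive on $I_j$'' can fail with probability bounded away from zero, and the density argument does not bound $\Pr[B_{ij}]$ as stated.

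The fix — and this is what the paper does — is to target the weaker (and sufficient) property that $z_i$ does not change sign on $I_j$, using the two-sided event $\tilde B_{ij} := \{|z_i(c_j)| \le 3^{-n'}/4\}$. This $\tilde B_{ij}$ \emph{is} confined to an interval of length $3^{-n'}/2$, so your conditional-density bound applies and yields $\Pr[\tilde B_{ij}] \le s\cdot 3^{-n'}/(2b)$; the union bound and the appeal to $b > 2k^2(2/3)^{n'}/\delta$ then go through unchanged. On the complement, $|z_i(c_j)| > 3^{-n'}/4$ together with your slope bound $|\vw_i^\top \vu_j| \le 1/2$ (total variation at most $3^{-n'}/4$ from $c_j$ over $I_j$) forces $z_i$ to be entirely positive or entirely negative on $I_j$; in either case $\psi$ is affine on $z_i(I_j)$, and your non-expansiveness argument at the end applies verbatim in both sign cases. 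One incidental remark: in the specific instantiation inside the proof of Theorem~\ref{thm:hardness}, the norm-control lemma guarantees $\|g\|_\infty \le 1$ and $b = 1/2$, so $z_i(c_j) \ge 0$ and your one-sided event accidentally coincides with the two-sided one there — but the lemma is stated in greater generality, where only the two-sided version is provable.
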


\begin{proof}
Denote $\vw_i \in \reals^k$ the $i$-th row of $\mW$.
Fix some $j$, and denote $c_j$ the central point of $I_j$.
We show that:
\[
\prob{\vw_i \sim \mathcal{U}([-\frac{1}{2s},\frac{1}{2s}]^s)}
{|\vw_i^\top g(c_j) + b| \le 3^{-n\p} }
\le \frac{\delta}{k2^{n\p}}
\]
If $\norm{g(c_j)}_\infty \le b$ then 
$|\vw_i^\top g(c_j)| \le \norm{\vw_i}_1 \norm{g(c_j)}_\infty \le \frac{b}{2}$ and therefore 
$|\vw_i^\top g(c_j) + b| \ge \frac{b}{2} > 3^{-n\p}$.
So we can assume $\norm{g(c_j)}_\infty > b$,
and let $\ell \in [k]$ be some index such that
$g(c_j)_\ell > b$.
Now, fix some values for
$w_{i,1}, \dots, w_{i,\ell-1}, w_{i, \ell+1}, \dots, w_{i,k}$,
and observe the distribution of 
$\vw_i^\top g(c_j) + b$ (with respect to the randomness of $w_{i,\ell}$).
Since $w_{i,\ell}$ is uniformly distributed in
$[-\frac{1}{2s},\frac{1}{2s}]$, we get that
this is a uniform distribution over some interval
$J$ with $|J| \ge \frac{b}{s}$.
From this we get:
\begin{align*}
\prob{w_{i,\ell} \sim \mathcal{U}([-\frac{1}{2s},\frac{1}{2s}])}
{|\vw_i^\top g(c_j) + b| \le 3^{-n\p}}
&= \prob{x \sim \mathcal{U}(J)}{x \in [-3^{-n\p}, 3^{-n\p}]} \\
&= |J \cap [-3^{-n\p}, 3^{-n\p}]| / |J| \\
&\le |[-3^{-n\p}, 3^{-n\p}]| / |J| \\
&= \frac{2s3^{-n\p}}{b} \le \frac{\delta}{k2^{n\p}}
\end{align*}
Since there are $2^{n\p}$ intervals $I_j$
and $k$ rows in $\mW$, using the union bound we get that with probability
at least $1-\delta$,
we have for all $j \in [2^{n\p}]$
that $\norm{\mW g(c_j) + b}_\infty > 3^{-n\p}$.
Since we have $|I_j| = 3^{-n\p}$, and since $g$ is non-expansive,
this means that the set $\mW g(I_j) + b$
does not cross zero at any of its coordinates.
Indeed, assume there exists $i \in [k]$ such that $\vw_i^\top g(I_j) + b$
crosses zero, and assume w.l.o.g that $\vw_i^\top g(I_j) + b > 0$. 
Then there exists
$x \in I_j$ with $\vw_i^\top g(x) + b \le 0$ and therefore:
\[
3^{-n\p} < |\vw_i^\top g(x) - \vw_i^\top g(c_j)| 
\le \norm{\vw_i}_1 \norm{g(x)-g(c_j)}_\infty \le |x-c_j| \le
\frac{1}{2}3^{-n\p}
\]
and we reach contradiction.

Since $\psi$
is affine on intervals that are bounded away from zero,
we get that $h(x) = \psi(\mW g(x) + b)$ is affine on all $I_j$.

To show that $h$ is non-expansive on $I_j$, let $x,y \in I_j$,
and from the fact that $g$ is non-expansive we have
$\norm{g(x)-g(y)}_\infty \le |x-y|$. Since we showed that 
$\psi$ is affine on $\mW g(I_j) + b$, we get:
\[
|h(x) - h(y)| = |\psi(\mW g(x)+b) - \psi(\mW g(y)+b)|
= |\psi(\mW (g(x)- g(y)))| \le |\mW (g(x)- g(y))|
\]
Therefore, for every $i$ we get:
\[
|h(x)_i-h(y)_i| = |\vw_i^\top (g(x)-g(y))| \le
\norm{\vw_i}_1 \norm{g(x)_j - g(y)_j}_\infty \le |x-y|
\]
which completes the proof.
\end{proof}

\begin{lemma}
\label{lem:norm_bound}
Let $g : \reals \to \reals^s$ such that
$\norm{g(x)}_\infty \le 1$ for every $x \in [0,1]$.
Let $\mW \in \reals^{k \times s}$
a random matrix such that every entry is initialized uniformly from
$[-\frac{1}{2s},\frac{1}{2s}]$,
and let $0< b \le \frac{1}{2}$ some bias.
Denote $h(x) := \sigma(\mW g(x)+b)$.
Then $\norm{h(x)}_\infty \le 1$ for every $x \in [0,1]$.
\end{lemma}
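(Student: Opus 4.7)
The plan is a direct coordinate-wise bound on the pre-activation, then using the fact that ReLU cannot send a number that is at most $1$ above $1$. The scaling $\tfrac{1}{2s}$ of the initialization is precisely set up so that each row of $\mW$ has $\ell_1$ norm at most $\tfrac{1}{2}$, which, paired via H\"older with $\|g(x)\|_\infty \le 1$, leaves room for the bias $b \le \tfrac{1}{2}$ without exceeding $1$ in magnitude.

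Concretely, I would fix an arbitrary $x \in [0,1]$ and denote by $\vw_i \in \reals^s$ the $i$-th row of $\mW$. By H\"older's inequality,
\[
|\vw_i^\top g(x)| \;\le\; \|\vw_i\|_1 \, \|g(x)\|_\infty.
\]
Each of the $s$ entries of $\vw_i$ lies in $[-\tfrac{1}{2s},\tfrac{1}{2s}]$, so $\|\vw_i\|_1 \le s \cdot \tfrac{1}{2s} = \tfrac{1}{2}$, and by assumption $\|g(x)\|_\infty \le 1$. Hence $|\vw_i^\top g(x)| \le \tfrac{1}{2}$, and adding $b \in (0,\tfrac{1}{2}]$ yields $\vw_i^\top g(x) + b \in [-\tfrac{1}{2} + b,\, \tfrac{1}{2} + b] \subseteq [-\tfrac{1}{2},\,1]$.

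Finally, since $\sigma(a) = \max(a,0)$ satisfies $0 \le \sigma(a) \le \max(a,0)$, whenever $a \le 1$ we have $\sigma(a) \in [0,1]$. Applying this coordinate-wise gives $0 \le h(x)_i \le 1$ for every $i \in [k]$, so $\|h(x)\|_\infty \le 1$. Taking the supremum over $x \in [0,1]$ concludes the proof. There is no real obstacle here; the lemma is essentially a sanity check that the chosen initialization scale and bias range are compatible, ensuring the hypothesis $\|g(x)\|_\infty \le 1$ propagates layer-by-layer and can be used repeatedly in \lemref{lem:affine} when analyzing deeper layers.
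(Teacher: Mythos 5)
Your proof is correct and takes essentially the same route as the paper: bound $\norm{\vw_i}_1 \le \tfrac{1}{2}$ from the initialization scale, apply H\"older's inequality against $\norm{g(x)}_\infty \le 1$, add $b \le \tfrac{1}{2}$, and conclude via the ReLU. The only cosmetic issue is the tautological phrase ``$0 \le \sigma(a) \le \max(a,0)$'' (these are equal); the intended and correct point is simply that $\sigma$ is monotone with $\sigma(1)=1$, so $a \le 1$ implies $\sigma(a) \le 1$.
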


\begin{proof}
As before, we denote $\vw_i$ the $i$-th row of $\mW$, then for every
$x \in [0,1]$ we get:
\[
\norm{h(x)}_\infty \le 
\max_i |\vw_i^\top g(x)+b| \le \max_i \norm{\vw_i}_1 \norm{g(x)}_\infty+b
\le 1
\]
\end{proof}

Iteratively applying this lemma gives a bound on the norm of any hidden representation
in the network:
\begin{lemma}
\label{lem:norm_bound_all}
Assume we initialize a neural-network $\net$
as described in \thmref{thm:hardness},
and denote $\net = g^{(t)} \circ \dots \circ g^{(1)}$.
Denote $G^{(t\p)} = g^{(t\p)} \circ \dots \circ g^{(1)}$,
the output of the layer $t\p$.
Then for every layer $t\p$ and for every $x \in [0,1]$ we get that:
$\norm{G^{(t\p)}(x)}_{\infty} \le 1$.
\end{lemma}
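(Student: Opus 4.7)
The plan is to prove the statement by a straightforward induction on the layer index $t\p$, repeatedly invoking \lemref{lem:norm_bound}. The statement is really just a bookkeeping consequence of iterating the single-layer bound, so no new ideas are needed beyond verifying the hypotheses at each step.

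First I would handle the base case. Taking $G^{(0)}(x) = x$ to be the identity input, for $x \in [0,1]$ we have $\norm{G^{(0)}(x)}_{\infty} = |x| \le 1$, which is the input hypothesis required by \lemref{lem:norm_bound}. (One could equivalently start the induction at $t\p = 1$, where the input dimension is $s = 1$, the weights are drawn uniformly from $[-\tfrac{1}{2}, \tfrac{1}{2}]$, the bias equals $\tfrac{1}{2}$, and a direct application of \lemref{lem:norm_bound} immediately yields $\norm{G^{(1)}(x)}_{\infty} \le 1$.)

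For the inductive step I would assume $\norm{G^{(t\p-1)}(x)}_{\infty} \le 1$ for all $x \in [0,1]$. By the network definition, $G^{(t\p)}(x) = \sigma(\mW^{(t\p)} G^{(t\p-1)}(x) + \vb^{(t\p)})$, where by the initialization described in \thmref{thm:hardness} each entry of $\mW^{(t\p)}$ is uniform in $[-\tfrac{1}{2s}, \tfrac{1}{2s}]$ with $s$ equal to the in-degree of the current layer (i.e.\ the width of the previous layer), and every entry of $\vb^{(t\p)}$ equals $\tfrac{1}{2}$. These are precisely the assumptions of \lemref{lem:norm_bound} applied with $g := G^{(t\p-1)}$, so its conclusion gives $\norm{G^{(t\p)}(x)}_{\infty} \le 1$, completing the induction.

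The main (and essentially only) thing to check is that the scaling in the initialization of \thmref{thm:hardness} exactly matches the scaling required by \lemref{lem:norm_bound}: the weight range $[-\tfrac{1}{2 n_{in}}, \tfrac{1}{2 n_{in}}]$ aligns with $s = n_{in}$, and the fixed bias $b = \tfrac{1}{2}$ satisfies $0 < b \le \tfrac{1}{2}$. There is no genuine obstacle. The one subtlety worth flagging is that \lemref{lem:norm_bound} is stated for a $\sigma$-activated layer, so the claim should be understood for the hidden layers $t\p \in \{1,\dots,t-1\}$; the final linear output layer is not constrained by the same bound and is not needed for the subsequent argument (which uses the hidden representations to feed into \lemref{lem:zero_grad}).
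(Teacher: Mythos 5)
Your inductive argument via \lemref{lem:norm_bound} is exactly the same strategy the paper uses, and the hidden-layer part of your proof is fine: starting from $\norm{x}\le 1$ for $x\in[0,1]$, each application of \lemref{lem:norm_bound} (with $s=1$ for the first layer and $s=k$ thereafter, bias $b=\tfrac12$) propagates the bound through the ReLU layers.

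However, there is a genuine gap in how you dispose of the final linear layer $t\p = t$. You correctly observe that \lemref{lem:norm_bound} as stated only covers a $\sigma$-activated layer, but you then reinterpret the lemma as applying only to $t\p \in\{1,\dots,t-1\}$ and assert that the bound on the output layer ``is not needed for the subsequent argument.'' That last claim is wrong: the proof of \thmref{thm:hardness} opens precisely by invoking \lemref{lem:norm_bound_all} to conclude $|\net(x)|\le 1$ for all $x\in[0,1]$, and this bound is what guarantees $\max\{1-y\net(x),0\}=1-y\net(x)$ on the support of the distribution, which is essential for the gradient computation that follows. So the final-layer case of the lemma is both part of the stated result and actually used. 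The fix is a single extra line rather than a reinterpretation: since $\vw^{(t)}$ has entries in $[-\tfrac{1}{2k},\tfrac{1}{2k}]$ and $b^{(t)}=\tfrac12$, one has
\[
|\net(x)| = \bigl|\vw^{(t)} G^{(t-1)}(x) + b^{(t)}\bigr| \le \norm{\vw^{(t)}}_1 \norm{G^{(t-1)}(x)}_\infty + b^{(t)} \le \tfrac12 + \tfrac12 = 1,
\]
which is exactly how the paper closes the argument. You should add this step and drop the claim that it is not needed.
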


\begin{proof}
First, $g^{(1)}(x) = \sigma(\vw^{(1)} x + \vb^{(1)})$,
where $\vw^{(1)} \sim \mathcal{U}([-\frac{1}{2}, \frac{1}{2}]^k)$
and $\vb^{(1)} = [\frac{1}{2}, \dots, \frac{1}{2}]$,
so for every $x \in [0,1]$ we have:
\[
\norm{g^{(1)}(x)}_\infty \le \norm{\vw^{(1)}}_\infty |x| + \frac{1}{2}
 \le 1
\]
Now, from \lemref{lem:norm_bound}, if
$\norm{G^{(t\p)} (x)}_\infty \le 1$ for $x \in [0,1]$,
then
$\norm{G^{(t\p+1)} (x)}_\infty \le 1$ for $x \in [0,1]$.
By induction we get that
$\norm{G^{(t\p)} (x)}_\infty$ for $x \in [0,1]$ for every 
$t\p \le t-1$.
Finally, we have $\vw^{(t)} \sim \mathcal{U}([-\frac{1}{2k}, \frac{1}{2k}])$
and $b^{(t)} = \frac{1}{2}$, and this gives us for every $x \in [0,1]$:
\[
|\net(x)| \le
|\vw^{(t)} G^{(t-1)} (x)+b^{(t)}|
\le \norm{\vw^{(t)}}_1 \norm{G^{(t-1)} (x)}_\infty
+ b^{(t)} \le 1\]
\end{proof}

We also show that the gradients are bounded on all the examples in the distribution:

\begin{lemma}
\label{bounded_grad}
Assume we initialize a neural-network $\net$
as described in \thmref{thm:hardness}. Then for every layer $t\p$,
and every example $x \in [0,1]$ we have:
\begin{align*}
&\norm{\frac{\partial}{\partial \mW^{(t\p)}} \net(x)}_{\max}
\le 1 \\
&\norm{\frac{\partial}{\partial \vb^{(t\p)}} \net(x)}_{\infty}
\le 1 \\
\end{align*}
\end{lemma}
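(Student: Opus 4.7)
The plan is to combine the forward-pass bound from \lemref{lem:norm_bound_all} with a standard backpropagation argument, exploiting the fact that the weights are initialized to have entries of very small magnitude. I would introduce the pre-activation $\vz^{(t\p)} := \mW^{(t\p)} \vx^{(t\p-1)} + \vb^{(t\p)}$ and the pre-activation gradient $\tilde{\delta}^{(t\p)} := \partial \net / \partial \vz^{(t\p)}$. By the chain rule,
\[
\frac{\partial \net}{\partial \mW^{(t\p)}_{i,j}} = \tilde{\delta}^{(t\p)}_i \cdot \vx^{(t\p-1)}_j, \qquad \frac{\partial \net}{\partial \vb^{(t\p)}_i} = \tilde{\delta}^{(t\p)}_i,
\]
so the lemma reduces to showing $\|\tilde{\delta}^{(t\p)}\|_\infty \le 1$ and invoking $\|\vx^{(t\p-1)}\|_\infty \le 1$ from \lemref{lem:norm_bound_all} (with $\vx^{(0)} = x \in [0,1]$ handled directly since $|x|\le 1$).

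The main step will be a backward induction on $t\p$ bounding $\|\tilde{\delta}^{(t\p)}\|_\infty$. The base case $\tilde{\delta}^{(t)} = 1$ is immediate. For the inductive step I would use the standard recurrence $\tilde{\delta}^{(t\p-1)} = D^{(t\p-1)} (\mW^{(t\p)})^\top \tilde{\delta}^{(t\p)}$, where $D^{(t\p-1)}$ is the diagonal $\{0,1\}$-matrix of ReLU derivatives. Taking $\ell_\infty$ norms, the amplification factor per backward step is at most the maximum column-$\ell_1$ norm of $\mW^{(t\p)}$. For hidden matrices $\mW^{(t\p)} \in \reals^{k\times k}$ with entries bounded by $\tfrac{1}{2k}$ this is at most $\tfrac{1}{2}$, and for the output row $\mW^{(t)} \in \reals^{1\times k}$ with entries bounded by $\tfrac{1}{2k}$ it is at most $\tfrac{1}{2k}$. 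Hence each backward step is a strict contraction and in fact $\|\tilde{\delta}^{(t\p)}\|_\infty$ decays geometrically as we go backward through the network, certainly staying below $1$.

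Combining these two ingredients immediately yields $|\partial \net / \partial \mW^{(t\p)}_{i,j}| \le \|\tilde{\delta}^{(t\p)}\|_\infty \cdot \|\vx^{(t\p-1)}\|_\infty \le 1$ and $|\partial \net / \partial \vb^{(t\p)}_i| \le \|\tilde{\delta}^{(t\p)}\|_\infty \le 1$, which is the claim. The main subtlety I anticipate is the first layer, whose matrix $\mW^{(1)} \in \reals^{k\times d}$ is scaled by $\tfrac{1}{2d}$ rather than $\tfrac{1}{2k}$, so its column-$\ell_1$ norm could be as large as $\tfrac{k}{2d}$ and would not in general be a contraction. However, computing the gradient with respect to $\mW^{(1)}$ only requires $\tilde{\delta}^{(1)}$, which is obtained by backpropagating through $\mW^{(t)}, \mW^{(t-1)}, \ldots, \mW^{(2)}$ --- none of which is the first-layer matrix. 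So the backward chain never touches $\mW^{(1)}$, the uniform column-$\ell_1$ bound of $\tfrac{1}{2}$ governs every backward step regardless of which layer's gradient we are computing, and no separate first-layer case is needed.
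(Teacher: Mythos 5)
Your proof is correct and follows essentially the same route as the paper: bound the forward activations by $1$ via \lemref{lem:norm_bound_all}, bound the backpropagated factor by exploiting the $\tfrac{1}{2k}$ scale of the hidden weight matrices, and take the product. The only cosmetic difference is that you phrase the backward bound through the pre-activation gradient recursion and a column-$\ell_1$ (i.e., $\|\mW^\top\|_\infty^{OP}$) bound, while the paper writes out the full matrix chain and uses $\|\mW\|_\infty^{OP}$; both give the same $\tfrac{1}{2}$ per hidden layer, and your explicit remark that $\mW^{(1)}$ never enters the backward chain cleanly dispenses with the only place where the two norms would have differed.
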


\begin{proof}
Recall that we denote for every $x \in [0,1]$ the output of the layer $t\p$
to be $x^{(t\p)} = G^{(t\p)}$.
Denote $\mD^{(t\p)} = \diag(\sigma\p(\mW^{(t\p)} x^{(t\p)}))$.
We calculate the gradient of the weights at layer $t\p$:
\begin{align*}
\frac{\partial}{\partial \mW^{(t\p)}} \net(x)
&= \frac{\partial}{\partial \mW^{(t\p)}}
g^{(t)} \circ \dots \circ g^{(t\p)} (\sigma(\mW^{(t\p)} x^{(t\p-1)} + \vb^{(t\p)})) \\
&= (\vw^{(t)})^{\top} D^{(t-1)} \mW^{(t-1)} \cdots D^{(t\p+1)} \mW^{(t\p+1)} D^{(t\p)} x^{(t\p-1)}
\end{align*}
Denote $\norm{\cdot}_{\infty}^{OP}$ the operator norm induced by $\ell_\infty$,
and we get (using the properties of the weights initialization):
\begin{align*}
\norm{D^{(t-1)} \mW^{(t-1)} \cdots D^{(t\p+1)} \mW^{(t\p+1)} D^{(t\p)} x^{(t\p-1)}}_\infty
\le&~ \norm{D^{(t-1)}}_{\infty}^{OP} \norm{\mW^{(t-1)}}_{\infty}^{OP}
\cdots \\
&\cdot \norm{D^{(t\p+1)}}_{\infty}^{OP}
\norm{\mW^{(t\p+1)}}_{\infty}^{OP} \norm{D^{(t\p)}}_{\infty}^{OP}
\norm{x^{(t\p-1)}}_\infty \\
&\le 1
\end{align*}
And therefore:
$\norm{\frac{\partial}{\partial \mW^{(t\p)}} \net(x)}_{\max} \le 1$
Finally, we calculate the gradient of the bias at layer $t\p$:
\begin{align*}
\frac{\partial}{\partial \vb^{(t\p)}} \net(x)
&= \frac{\partial}{\partial \vb^{(t\p)}}
g^{(t)} \circ \dots \circ g^{(t\p)} (\sigma(\mW^{(t\p)} x^{(t\p-1)} + \vb^{(t\p)})) \\
&= (\vw^{(t)})^{\top} D^{(t-1)} \mW^{(t-1)} \cdots D^{(t\p+1)} \mW^{(t\p+1)} D^{(t\p)}
\end{align*}
And since $\norm{\vw^{(t)}}_\infty \le 1$ we get similarly to above that
$\norm{\frac{\partial}{\partial \vb^{(t\p)}} \net(x)}_\infty \le 1$.
\end{proof}

\begin{proof} of \thmref{thm:hardness}.
Denote each layer of the network by $g^{(i)}$, so we have:
$\net(x) = g^{(t)} \circ \dots \circ g^{(1)}(x)$.
We show that two things hold on initialization:
\begin{enumerate}
\item $|\net(x)| \le 1$ for $x \in [0,1]$: immediately from \lemref{lem:norm_bound_all}.

\item With probability at least $1-\delta$, for every $j$,
$\net$ is affine on $I_j$:

Denote $\hat{\delta} = \frac{\delta}{t}$, and notice that by the choice
of $n\p$, we get that
$2k^2 \left(\frac{2}{3}\right)^{n\p} \hat{\delta}^{-1} < \frac{1}{2} = b$.
Therefore, since $\sigma$ is affine on all intervals away from zero,
we can apply \lemref{lem:affine} on all the hidden
layers of the network
(choosing $s=1,g=id$ for the first layer and
$s=k, g=g^{(t\p)} \circ g^{(1)}$ for the rest), and use union bound to
get the required.
\end{enumerate}

Now, to prove the theorem, observe that since $\mathcal{D}_n$
is supported on $[0,1]\times \{\pm1\}$, we get that upon initialization
with probability 1 for $(x,y) \sim \mathcal{D}_n$ we have:
$\max \{1-y\net(x),0 \} = 1-y\net(x)$. Since $\net(x)$ is affine
on every $I_j$ w.p $1-\delta$, using \lemref{lem:zero_grad}
we get that in such case:
\begin{align*}
\norm{\frac{\partial}{\partial \tW} \mathcal{L}(\mathcal{N}_{\tW, \mB})}_{\max} &=
\norm{\frac{\partial}{\partial \tW} \mean{(x,y) \sim \mathcal{D}_n}
{\max \{1-y\mathcal{N}_{\tW, \mB}(x),0\}}}_{\max} \\
&= \norm{\mean{(x,y) \sim \mathcal{D}_n}
{-y\frac{\partial}{\partial \tW} \mathcal{N}_{\tW, \mB}(x)}}_{\max} \\
&\le \prob{(x,y) \sim \mathcal{D}_n}{x \in C_{n\p}} \cdot
\norm{\mean{(x,y) \sim \mathcal{D}_n}
{-y\frac{\partial}{\partial \tW} \mathcal{N}_{\tW, \mB}(x) \given x \in C_{n\p}}}_{\max}\\
&+ \prob{(x,y) \sim \mathcal{D}_n}{x \notin C_{n\p}} \cdot
\norm{\mean{(x,y) \sim \mathcal{D}_n}
{-y\frac{\partial}{\partial \tW} \mathcal{N}_{\tW, \mB}(x) \given x \notin C_{n\p}}}_{\max}\\
&\le 4 \left( P(n\p)- \frac{1}{2} \right) + \left( P(n\p)- \frac{1}{2} \right)
= 5 \left( P(n\p)- \frac{1}{2} \right)
\end{align*}
and similarly we get
$\norm{\frac{\partial}{\partial \mB} \mathcal{L}(\mathcal{N}_{\tW, \mB})}_{\infty}
\le 3 \left( P(n\p)- \frac{1}{2} \right)$.

To show that
$\prob{(x,y) \sim \mathcal{D}_n}{\sign(\net(x)) \ne y} \ge
\left(\frac{3}{2} - P(n\p)\right)(1-P(n\p))$,
observe that the $\sign$ function is affine on intervals bounded away
from zero. We can use \lemref{lem:affine} on the final layer,
which shows that $\sign(\net)$ is affine on the intervals $I_j$,
so for every $I_j$ we get either $\sign(\net(I_j)) = \{\hat{y}_j\}$
for some $\hat{y}_j \in \{\pm 1\}$.
Now, using \lemref{lem:zero_grad} we get that:
\begin{align*}
\prob{(x,y) \sim \mathcal{D}_n}{\sign(\net(x)) \ne y \given x \in I_j}
&= \mean{(x,y) \sim \mathcal{D}_n}{\frac{1}{2} - \frac{1}{2}y \hat{y}_j \given x \in I_j}\\
&=\frac{1}{2} - \frac{1}{2}\hat{y}_j\mean{(x,y) \sim \mathcal{D}_n}{y \given x \in I_j}\\
&\ge 1 - P(n\p)
\end{align*}
And from this we get:
\begin{align*}
\prob{(x,y) \sim \mathcal{D}_n}{\sign(\net(x)) \ne y} &= 
\prob{(x,y) \sim \mathcal{D}_n}{x \in C_{n\p}}
\prob{(x,y) \sim \mathcal{D}_n}{\sign(\net(x)) \ne y \given x \in C_{n\p}} \\
&+ \prob{(x,y) \sim \mathcal{D}_n}{x \notin C_{n\p}}
\prob{(x,y) \sim \mathcal{D}_n}{\sign(\net(x)) \ne y \given x \notin C_{n\p}} \\
&\ge \left(\frac{3}{2} - P(n\p)\right)\left(1-P(n\p)\right)
\end{align*}
\end{proof}

\section{Proof of \crlref{crl:hardness_existance}}
\begin{proof}
Denote $a = 2\log^{-1}(\frac{3}{2}),
b = 2\log^{-1}(\frac{3}{2})\log(\frac{4k^2}{\delta})+1$,
and from Lemma A.2 in \cite{shalev2014understanding}, we get that
if $t > 4a \log(2a) + 2b$ then $t > a \log(t)+ b$.
Choosing $n = \frac{t}{2}$ gives $n > \log^{-1}(\frac{3}{2})\log(\frac{4tk^2}{\delta})+1$, so applying \thmref{thm:hardness} shows that
$\mathcal{D}^n_F$ satisfies 3.
\thmref{thm:depth_expressivity} immediately gives 1.
Note that $\mathcal{D}_F^n$ can be realized only by functions with at least
$2^{n-1}+1$ linear regions. Shallow networks on $\reals$ of width $k$
have at most $k+1$ linear regions, so this gives 2.
\end{proof}

\clearpage

\section{Experimental Results}
The following tables summarize the results of all the experiments
that are detailed in Section \ref{sec:experiments}.
\begin{table}[h]
\caption{Performance of different network architectures on various
fractal distributions of depth 5, with different fractal structures.}
\label{experiments_all_fractals}
\vskip 0.15in
\begin{center}
\begin{small}
\begin{sc}
\begin{tabular}{lcccccr}
Depth / Width & 10 & 20 & 50 & 100 & 200 & 400 \\
\multicolumn{6}{l}{Sierpinsky Triangle} \\
1 & 0.78 & 0.82 & 0.88 & 0.87 & 0.89 & 0.90 \\
2 & 0.86 & 0.91 & 0.93 & 0.94 & 0.95 & 0.95 \\
3 & 0.89 & 0.92 & 0.96 & 0.96 & 0.97 & 0.97 \\
4 & 0.87 & 0.94 & 0.97 & 0.97 & 0.97 & 0.98 \\
5 & 0.89 & 0.94 & 0.96 & 0.97 & 0.97 & 0.98 \\
\multicolumn{6}{l}{2D Cantor Set} \\
1 & 0.61 & 0.69 & 0.72 & 0.73 & 0.72 & 0.74 \\
2 & 0.72 & 0.81 & 0.82 & 0.86 & 0.86 & 0.87 \\
3 & 0.78 & 0.84 & 0.88 & 0.92 & 0.93 & 0.93 \\ 
4 & 0.82 & 0.86 & 0.91 & 0.95 & 0.97 & 0.97 \\
5 & 0.81 & 0.87 & 0.95 & 0.97 & 0.99 & 0.98 \\
\multicolumn{6}{l}{Pentaflake} \\
1 & 0.66 & 0.65 & 0.67 & 0.70 & 0.76 & 0.76 \\
2 & 0.71 & 0.73 & 0.79 & 0.81 & 0.82 & 0.83 \\
3 & 0.73 & 0.78 & 0.83 & 0.84 & 0.85 & 0.86 \\
4 & 0.76 & 0.79 & 0.85 & 0.87 & 0.88 & 0.88 \\
5 & 0.76 & 0.81 & 0.86 & 0.88 & 0.87 & 0.90 \\

\multicolumn{6}{l}{Vicsek} \\
1 & 0.59 & 0.60 & 0.63 & 0.66 & 0.67 & 0.68 \\
2 & 0.64 & 0.70 & 0.72 & 0.75 & 0.76 & 0.75 \\
3 & 0.69 & 0.72 & 0.77 & 0.79 & 0.81 & 0.82 \\
4 & 0.71 & 0.74 & 0.79 & 0.82 & 0.83 & 0.84 \\
5 & 0.70 & 0.77 & 0.82 & 0.84 & 0.86 & 0.86 \\
\end{tabular}
\end{sc}
\end{small}
\end{center}
\vskip -0.1in
\end{table}

\begin{table}[h]
\caption{Performance of depth 5 network on the 
different fractal structure (of depth 5),
with varying approximation curves.}
\label{experiments_all_fractals_cruves}
\vskip 0.15in
\begin{center}
\begin{small}
\begin{sc}
\begin{tabular}{lcccccr}
Curve \# / Width & 10 & 20 & 50 & 100 & 200 & 400 \\
\multicolumn{6}{l}{Sierpinsky Triangle} \\
1 & 0.89 & 0.94 & 0.96 & 0.97 & 0.97 & 0.98 \\
2 & 0.89 & 0.94 & 0.96 & 0.97 & 0.97 & 0.97 \\
3 & 0.78 & 0.94 & 0.96 & 0.97 & 0.97 & 0.97 \\
4 & 0.79 & 0.92 & 0.96 & 0.96 & 0.97 & 0.97 \\
5 & 0.76 & 0.89 & 0.96 & 0.97 & 0.97 & 0.97 \\
6 & 0.76 & 0.90 & 0.97 & 0.97 & 0.98 & 0.98 \\
\multicolumn{6}{l}{2D Cantor Set} \\
1 & 0.81 & 0.87 & 0.95 & 0.97 & 0.99 & 0.98 \\
2 & 0.70 & 0.85 & 0.92 & 0.94 & 0.94 & 0.97 \\
3 & 0.62 & 0.73 & 0.75 & 0.80 & 0.91 & 0.89 \\
4 & 0.53 & 0.65 & 0.77 & 0.77 & 0.84 & 0.93 \\
5 & 0.57 & 0.61 & 0.65 & 0.69 & 0.76 & 0.73 \\
6 & 0.53 & 0.64 & 0.66 & 0.78 & 0.71 & 0.61 \\

\multicolumn{6}{l}{Pentaflake} \\
1 & 0.76 & 0.81 & 0.86 & 0.88 & 0.87 & 0.90 \\
2 & 0.59 & 0.68 & 0.77 & 0.78 & 0.80 & 0.84 \\
3 & 0.54 & 0.57 & 0.64 & 0.63 & 0.72 & 0.64 \\
4 & 0.53 & 0.55 & 0.58 & 0.61 & 0.65 & 0.68 \\
5 & 0.52 & 0.52 & 0.52 & 0.55 & 0.60 & 0.57 \\
6 & 0.52 & 0.52 & 0.52 & 0.53 & 0.56 & 0.54 \\

\multicolumn{6}{l}{Vicsek} \\
1 & 0.70 & 0.77 & 0.82 & 0.84 & 0.86 & 0.86 \\
2 & 0.59 & 0.61 & 0.67 & 0.69 & 0.71 & 0.71 \\
3 & 0.56 & 0.55 & 0.58 & 0.64 & 0.64 & 0.65 \\
4 & 0.51 & 0.52 & 0.54 & 0.56 & 0.58 & 0.59 \\
5 & 0.52 & 0.52 & 0.52 & 0.55 & 0.53 & 0.57 \\
6 & 0.53 & 0.51 & 0.51 & 0.55 & 0.58 & 0.59 \\

\end{tabular}
\end{sc}
\end{small}
\end{center}
\vskip -0.1in
\end{table}

\end{document}